\documentclass[10pt,onecolumn,a4paper]{report}

\usepackage{titlesec}
\titleformat{\chapter}[display]
{\normalfont\huge\bfseries}{\chaptertitlename\ \thechapter}{20pt}{\Huge}
\titlespacing*{\chapter}{0pt}{-15pt}{40pt}
\usepackage{pdfpages}
\usepackage{bbm}
\usepackage{etoc}
\usepackage[numbers]{natbib}
\let\cite\citep
\let\cite\citep  
\usepackage{graphicx}
\usepackage{wrapfig}
\usepackage{float}
\usepackage{url}
\usepackage{nicefrac}
\usepackage{algorithm}
\usepackage[noend]{algpseudocode}

\usepackage{amsmath,amsfonts,amssymb,mathtools}
\usepackage{dsfont}
\usepackage{booktabs}
\usepackage{multirow}
\usepackage{multicol}
\usepackage{caption}
\usepackage{subfigure}
\usepackage{wrapfig,lipsum,booktabs} 
\usepackage[export]{adjustbox}
\usepackage[update,prepend]{epstopdf}
\usepackage{acronym}
\usepackage{hyperref}
\usepackage{setspace}
\usepackage[titletoc]{appendix}

\usepackage{pdfpages}

\usepackage[]{xcolor}
\usepackage{tcolorbox}
\newtcolorbox{mybox}[2]{
    arc=0pt,
    boxrule=#2pt,
    colback=#1,
    width=15cm,
    halign=left,
    opacityframe = 0.1,
    opacityback = 0.0,
}

\hypersetup{
	bookmarksopen=true,
	bookmarksnumbered=true,
	colorlinks=true,
	citecolor=black,
	linkcolor=red,
	linktocpage=true,
	linkbordercolor={1 1 1},
	hypertexnames=false,
	plainpages=true,
	pdfsubject={},
	pdfkeywords={},
	pdftitle={Bayesian Deep Learning with Limited Data},
	pdfauthor={Idan Achituve, Ethan Fetaya and Gal Chechick}
}
\usepackage[cc]{titlepic}
\usepackage{amsthm}

\theoremstyle{plain}
\newtheorem{theorem}{Theorem}[section]

\newtheorem{lemma}[theorem]{Lemma}

\theoremstyle{definition}
\newtheorem{definition}[theorem]{Definition}

\theoremstyle{remark}

\newcommand{\T}{{\scriptscriptstyle \top}}

\acrodef{bdl}[BDL]{Bayesian Deep Learning}

\definecolor{atomictangerine}{rgb}{0.8, 0.2, 0.1}
\definecolor{turq}{rgb}{0.0, 0.5, 0.5}
\definecolor{darkturq}{rgb}{0.0, 0.4, 0.4}
\definecolor{bright}{rgb}{0.8, 0.1, 0}
\definecolor{darkgray}{gray}{0.3}
\definecolor{mahogany}{rgb}{0.6, 0.05, 0.05}
\definecolor{pink}{rgb}{1,0.05,0.6}
\definecolor{pink}{rgb}{0.761, 0.482, 0.627}
\definecolor{olive}{rgb}{0.537, 0.627, 0.318}
\definecolor{green}{rgb}{0.22, 0.463, 0.114}
\definecolor{grey}{rgb}{0.4, 0.4, 0.4}
\definecolor{blue}{rgb}{0.435, 0.659, 0.863}
\definecolor{darkred}{rgb}{0.7,0.05,0.2}
\definecolor{purp}{rgb}{0.5,0.0,0.5}

\definecolor{myblue}{rgb}{0.3,0.05,0.9}

\usepackage{amsmath,amsfonts,bm}

\newcommand{\1}{\mathbbm{1}}

\def\eqref#1{equation~\ref{#1}}

\def\1{\bm{1}}

\DeclareMathAlphabet{\mathsfit}{\encodingdefault}{\sfdefault}{m}{sl}
\SetMathAlphabet{\mathsfit}{bold}{\encodingdefault}{\sfdefault}{bx}{n}

\DeclareMathOperator{\Tr}{Tr}

\newcommand{\lt}{\ensuremath <}
\newcommand{\gt}{\ensuremath >}

\newcommand\ignore[1]{}

\begin{document}

\pagenumbering{gobble}	
\begin{center}
    ~\\[0.5cm]

    {\huge \bf Confidence Calibration of \\ Deep Learning Systems\\[2cm]} 

    {\huge Yacob (Coby) Penso}\\[2cm] 
     
    {\Large Faculty of Engineering}\\[2cm]

    {\Large Ph.D. Thesis}\\[2cm]

    {\Large Submitted to the Senate of Bar-Ilan University}\\[4cm]

    {\Large Ramat Gan, Israel \quad \quad \quad \quad \quad \quad \quad \quad \quad \quad \quad \quad May 2025}
\end{center}
\thispagestyle{empty}
\newpage

\begin{center}
    ~\\[7.0cm]
    {\large This work was carried out under the supervision of 
      \\[0.15cm] Prof. Jacob Goldberger, 
      \\The Faculty of Engineering,
      \\[0.15cm] Bar-Ilan University
    }
    \\[4.0cm]
\end{center}
\thispagestyle{empty}

\chapter*{Acknowledgment}
I would like to express my deepest gratitude to my supervisor, Prof. Jacob Goldberger, for his invaluable guidance, support, and encouragement throughout the course of my doctoral studies. His insight, patience, and high standards of research have shaped not only this thesis but also my approach to science as a whole.

I am also grateful to my collaborators and co-authors, Ethan Fetaya, Bar Mahpud, and Lior Frenkel, whose ideas, feedback, and dedication have greatly enriched this work. It has been a privilege to learn and work alongside them.

Finally, I extend my heartfelt thanks to my family and friends for their unwavering support, understanding, and encouragement during this journey. Without their love and patience, this thesis would not have been possible.

\begin{spacing}{0.5}
    \tableofcontents 
    \noindent\normalsize \textbf{Hebrew Abstract \hfill \large \color{red} \textbf{\textsl{$\aleph$}}}
\end{spacing}

\addtocontents{toc}{\protect\thispagestyle{empty}}
\pagenumbering{gobble}

\thispagestyle{empty}
\listoffigures
\thispagestyle{empty}
\listoftables
\thispagestyle{empty}
\listofalgorithms

\newpage
\thispagestyle{empty}
\noindent
\textbf{\Huge List of Abbreviations and Notations}
\newline
\newline
\newline
In this thesis $\epsilon$ used as the noise rate, except for Chapter \ref{ch:ldpcp} when $\epsilon$ is the privacy in $\epsilon$-LDP and $\beta $ used as the noise rate.
\newline

\begin{flushleft}
\begin{tabular}{ll c c }
CE & Conformal Prediction\\~\\
CE & Cross-Entropy\\~\\
CNN & Convolutional Neural Network\\~\\
DL & Deep Learning\\~\\
DNN & Deep Neural Network\\~\\
ECE & Expected Calibration Error\\~\\
adaECE & Adaptive Expected Calibration Error\\~\\
LLMs & Large-Language Models\\~\\
MCE & Maximum Calibration Error\\~\\
MSE & Mean Square Error\\~\\
NN & Neural Network\\~\\
DA & Domain Adaptation\\~\\
UDA & Unsupervised Domain Adaptation\\~\\
\end{tabular}
\end{flushleft}
\begin{flushleft}
\begin{tabular}{ll c c }
OOD & Out-of-Distribution\\~\\
UTDC & Unsupervised Target Domain Calibration \\~\\
NRCP & Noise Robust Conformal Prediction \\~\\
NACP & Noise Aware Conformal Prediction \\~\\
NTS & Noisy Temperature Scaling  \\~\\
TS & Temperature Scaling \\~\\
DP & Differential Privacy \\~\\
LDP & Local Differential Privacy \\~\\
LDP-CP & Local Differential Private Conformal Prediction \\~\\
LDP-CP-L & Local Differential Private Conformal Prediction On Labels \\~\\
LDP-CP-S & Local Differential Private Conformal Prediction On Scores \\~\\
\end{tabular}
\end{flushleft}


\setcounter{tocdepth}{2} 
\pagenumbering{gobble}

\pagenumbering{roman}
\addcontentsline{toc}{chapter}{Abstract}
\chapter*{Abstract}\vspace{-0.5in}

In high-stakes applications such as medical imaging, the reliability of a model’s confidence in its predictions is as crucial as the predictions themselves. Confidence calibration ensures that a model's predicted probabilities accurately reflect its likelihood of correctness, making it a critical component for safe and effective deployment of deep learning models in medical diagnostics. However, existing calibration techniques assume access to clean validation data, which is often unrealistic in medical imaging settings due to the prevalence of label noise and domain shifts. This thesis explores novel methods for improving confidence calibration under these challenging conditions.

First, we address confidence calibration in the presence of label noise. When calibration methods are applied to data with unreliable labels, they may yield misleading confidence estimates that undermine the trustworthiness of model predictions. We propose a calibration framework that accounts for label noise by leveraging an estimated noise model. Specifically, we demonstrate how to reconstruct noise-free confidence estimates by modeling the relationship between noisy and clean label distributions. We extend this idea to Conformal Prediction (CP), a framework that provides set-valued predictions with a guaranteed level of coverage. We introduce a noise-aware conformal prediction approach that estimates the true conformity scores despite label noise, allowing us to maintain efficient and reliable uncertainty quantification.

Next, we investigate confidence calibration in unsupervised domain adaptation (UDA), where a model trained on a labeled source domain is adapted to an unlabeled target domain. Traditional calibration methods require labeled validation data from the target domain, which is unavailable in this setting. To overcome this limitation, we develop an approach that estimates the target domain accuracy based on the model’s performance in the source domain and known domain discrepancies. This allows us to directly calibrate model confidence without access to target domain labels.

Furthermore, we extend our study to privacy-preserving settings, where individual user labels and model outputs must be protected. We propose a locally differentially private conformal prediction framework that ensures valid uncertainty quantification while maintaining rigorous privacy guarantees. Our approach balances the trade-offs between privacy, computational feasibility, and prediction reliability, making it applicable to sensitive medical data applications.

Through extensive experiments on natural and medical imaging datasets, we demonstrate that our proposed methods significantly improve calibration robustness under both label noise and domain shift conditions. We provide theoretical guarantees and empirical validations that bridge the gap between theoretical calibration guarantees and practical deployment in safety-critical environments. Our findings contribute to the development of reliable, privacy-preserving, and noise-resilient calibration frameworks, enhancing the trustworthiness of neural network predictions in real-world medical and high-stakes applications.

\pagenumbering{arabic}
\newcommand{\methoname}{FED } 



\chapter{Introduction}

In high-stakes domains like medical imaging, the accuracy of a model’s confidence in its predictions can be just as important as the predictions themselves. Confidence calibration is the process of ensuring that a model's predicted probabilities align with its actual accuracy, offering reliable confidence estimates for each prediction.
Confidence calibration is defined as the ability of a classifier network to provide an accurate probability of correctness for any of its predictions.
Neural networks have been shown to be more overconfident in their predictions than their predecessors even though their generalization accuracy is higher, partly due to the fact that they can overfit on the
negative log-likelihood loss without overfitting on the classification error \cite{Guo2017,Balaji2017,Hein2019}.
 In medical imaging applications, images for which the model makes low-confidence predictions are sent to a physician for review. Skipping the human review based on  confident but incorrect predictions can have disastrous consequences \cite{minderer2021revisiting}. 
The  gap between the model's predicted probabilities and its  accuracy is one of the  key obstacles to
the applicability  of  neural network models to  fully automatic medical diagnosis.

Calibration methods can generally be divided into two main approaches. The first focuses on calibrating the confidence in a predicted class, while the second addresses the problem by generating a prediction set - a collection of possible classes - with a specified probability that the true class is included in this set. These approaches will be referred to as \textit{Confidence Calibration} and \textit{Conformal Prediction}, respectively, throughout this thesis.

Within the framework of confidence calibration, various methods have recently been developed to address the issue of excessive overconfidence in predictions. Network calibration can either be performed alongside training (e.g., \cite{mukhoti2020calibrating, muller2019does, mixup, xu2023mismatch}) or applied as a post-hoc procedure (e.g., Platt scaling \cite{Platt1999}, isotonic regression \cite{Zadrozny2002}, and temperature scaling \cite{Guo2017}). Post-hoc methods improve calibration by applying it as a post-processing step, using hold-out validation data to create a calibration map that adjusts the model's predictions. Among these, temperature scaling stands out as a practical and widely adopted approach due to its simplicity and ease of implementation. Despite the critical role of network calibration in automating medical reporting, only a limited number of studies have specifically addressed the calibration of medical imaging systems (e.g., \cite{fernando2021dynamically, frenkel2022calibration, rousseau2021post, zhang2020layer}).

In conformal prediction, the goal is to return a (preferably small) set of potential class candidates that includes the true class with a predefined level of confidence. This approach is particularly well-suited for medical imaging, where safety is paramount and the final decision is made by a human. By reducing the number of possible diagnoses a practitioner needs to consider, conformal prediction helps streamline decision-making while maintaining a controlled risk of error. The general method of producing a prediction set without making assumptions about the data distribution (aside from i.i.d. samples) is known as Conformal Prediction (CP) \cite{angelopoulos2023conformal, vovk2005conformal}. CP guarantees that the probability of the correct class being included in the set meets or exceeds a specified confidence level, while aiming to return the smallest set possible that still maintains this guarantee. With the increasing use of neural networks in safety-critical applications like medical imaging, CP has emerged as a crucial calibration tool \cite{lu2022improving, lu2022fair, olsson2022estimating}. It is important to note that CP is a general framework rather than a single algorithm, with the most common implementations constructing the prediction set based on a conformity score. Different algorithms mainly differ in how this conformity score is defined.

Confidence calibration and conformal prediction are extensively studied in settings where clean data and labels are provided. However, our research focuses on more complex and realistic scenarios, specifically when labels are noisy and in the context of unsupervised domain adaptation, where the goal is to calibrate the target model without access to labeled data.

Deep neural networks have been highly successful in various natural image and medical image computing tasks. However, these achievements depend on having accurate annotated training data. Neural networks require massive amounts of carefully labeled data to succeed, but acquiring such data is expensive and time-consuming. Non-expert sources, like Amazon's Mechanical Turk, have been used to reduce labeling costs, but their labels can be unreliable. Experienced domain experts may also struggle with complex labeling tasks. 
Medical imaging datasets often have problems with noisy labels due to ambiguous images that can confuse clinical experts.  Physicians may disagree on the diagnosis of the same medical image, resulting in variability in the ground truth label. Furthermore, using Natural Language Processing (NLP) tools to extract labels from radiological reports can also introduce label noise \cite{irvin2019chexpert}. Therefore, addressing annotation noise is a crucial topic in medical image analysis.

 Training neural networks with noisy labels is problematic because the models can easily overfit to the corrupted labels, resulting lack of generalizability when evaluated on a separate test dataset.
  While popular regularization techniques have been used to address overfitting, they do not entirely solve the problem. Even when these techniques are applied, there is a significant gap in test accuracy between models trained on clean vs. noisy data, and the accuracy decreases with label noise.
Noisy labels are difficult to avoid, and studies indicate that Deep Neural Networks (DNNs) can memorize entire datasets. Consequently, errors in datasets may result in erroneous predictions, which can impact medical diagnoses. Therefore, effectively managing noisy labels is crucial for automated medical image classification. A review of network training methods for noisy labels can be found in \cite{9729424} and an excellent up-to-date discussion of training medical image classification networks from data with noisy labels can be found in \cite{Xue2022}. 

Numerous studies have examined the problem of training networks that are resilient to label noise, which can also disrupt the network calibration process. Our findings suggest that network calibration methods are more susceptible to label noise compared to network training. Nevertheless, we have not come across any previous research that tackles the challenge of network calibration using a validation set containing noisy labels.       

In addition to the challenges posed by noisy labels, another critical issue in real-world applications of deep learning is the performance degradation that occurs when a network trained on data from one domain is applied to data from a different domain, where the feature distribution differs - a phenomenon known as domain shift (see e.g., \cite{pmlr-v139-miller21b}). In the context of Unsupervised Domain Adaptation (UDA), the goal is to adapt a model to a target domain where labeled data is unavailable, though data from the target domain itself is accessible. Our findings indicate that existing calibration methods for unsupervised domain adaptation often fail in practice, particularly when the domain gap is large, further complicating the task of achieving reliable network calibration.

Lastly, in many critical settings, the calibration procedure is performed by a centralized component, referred to as an aggregator, which may be untrusted. In such scenarios, exposing sensitive data directly to this untrusted aggregator poses significant privacy risks. A promising approach to mitigating these risks is Local Differentially Private (LDP) Conformal Prediction, where individual data contributors apply noise to their calibration data before sharing it with the aggregator. This ensures that the aggregator can perform calibration without directly accessing private or sensitive information from individual sources. However, the introduction of noise through LDP presents new challenges in maintaining both the validity and efficiency of conformal prediction methods. We explore strategies to adapt conformal calibration techniques to function effectively under differential privacy constraints, ensuring that predictions remain reliable while preserving user privacy.

In our research, we tackle these challenges in three key areas: first, by addressing the calibration of neural networks using validation sets with inaccurate labels, focusing on both confidence calibration and conformal prediction; second, by exploring confidence calibration in systems facing unsupervised domain shift scenarios; and third, by investigating privacy-preserving calibration methods using local differentially private conformal prediction, ensuring robust calibration without compromising data privacy. Through these efforts, we aim to enhance the reliability, robustness, and privacy of neural networks in real-world applications where label noise, domain shifts, and privacy concerns are significant challenges.

The remainder of this thesis is structured as follows:

Chapter \ref{ch:bg}, Background, provides an overview of the foundational concepts and related work, including confidence calibration, conformal prediction, unsupervised target domain calibration, noisy labels, and local differential privacy.

Chapter \ref{ch:nts}, Confidence Calibration under Noisy Labels, introduces our proposed methods for improving confidence calibration when dealing with noisy labels. We explore both the challenges of confidence calibration with noisy labels and network training strategies, followed by a comprehensive set of experiments to validate our approach.

Chapter \ref{ch:nrcp_nacp}, Conformal Prediction under Noisy Labels, extends our focus to conformal prediction, presenting a robust scoring method and a threshold estimation procedure tailored to noisy label scenarios. We provide theoretical insights and experimental evaluations, including prediction size comparisons, coverage guarantees, and adaptations to more general noise models.

Chapter \ref{ch:ldpcp}, Local Differential Private Conformal Prediction, delves into privacy-preserving techniques, introducing methods to apply local differential privacy (LDP) to both labels and prediction scores. We discuss theoretical guarantees, practical considerations, and experimental results, comparing different approaches under privacy constraints.

Chapter \ref{ch:utdc}, Unsupervised Target Domain Confidence Calibration, addresses the challenge of calibrating confidence when transitioning models to a new, unlabeled target domain. We describe our proposed calibration methods, present experimental findings, and analyze the performance under unsupervised domain adaptation settings.

Chapter \ref{ch:discussion}, Discussion, summarizes our contributions, highlights the key insights and implications of our work, and outlines potential directions for future research. The chapter concludes by reaffirming the significance of our research in advancing confidence calibration and conformal prediction in challenging real-world scenarios.

As part of the thesis, the following papers have been
published in various conferences and journals:
\begin{itemize}
    \item Confidence calibration of a medical imaging classification
system that is robust to label noise - Coby Penso, Lior
Frenkel, Jacob Goldberger, IEEE Transactions on Medical
Imaging (TMI), vol. 43(6), pp. 2050-2060, 2024, \cite{penso2024confidence}.
    \item A joint training and confidence calibration procedure that is
robust to label noise - Coby Penso, Jacob Goldberger, IEEE
International Symposium on Biomedical Imaging (ISBI),
2024, \cite{10635160}.
    \item A conformal prediction score that is robust to label noise -
Coby Penso, Jacob Goldberger, MICCAI, Machine Learning for Medical Imaging Workshop, 2024, \cite{penso2024conformal}.
\item Network calibration under domain shift based on estimating
the target domain accuracy - Coby Penso, Jacob
Goldberger, ECCV, Uncertainty in Computer Vision Workshop, 2024, \cite{penso2024calibrationuda}.
\item Conformal Prediction of Classifiers with Many Classes based on Noisy Labels - Coby Penso, Jacob Goldberger, Eitan Fetaya, accepted to the Symposium on Conformal and Probabilistic
Prediction with Applications (COPA), 2025, \cite{PensoNACP2025}.
\item Privacy-Preserving Conformal Prediction Under Local Differential Privacy - Coby Penso, Bar Mahpud, Jacob Goldberger, Or Sheffet, accepted to Symposium on Conformal and Probabilistic
Prediction with Applications (COPA), 2025, \cite{PensoLDPCP2025}.

\end{itemize}

\chapter{Background}
\label{ch:bg}
\label{C:definitions}

\section{Confidence Calibration}
\label{s:cc}

In this section we review the definition of confidence calibration. Consider a network that classifies an input  image  $x$ into $k$ pre-defined categories.
 The last layer of the network architecture is comprised of  $k$ real numbers $z=(z_1,...,z_k)$ known as \emph{logits}. Each of these numbers is the score for one of the $k$ possible classes. The logits are then converted into a soft decision distribution using  a \emph{softmax} layer: $p(y=i|x) = \frac{\exp(z_i)}{\sum_j \exp(z_j)}$ where $x$ is the input image and $y$ is the image class.
Despite having the mathematical form of a distribution, the output of the softmax layer does not necessarily represent the true posterior distribution of the classes, and the network often tends to have  overconfidence in its predictions.

The predicted class is calculated from the output distribution  by $\hat{y} = \arg \max_{i} p(y=i|x)= \arg \max_i z_i $. The network \emph{confidence} for this sample is defined by $\hat{p} = p(y=\hat{y}|x) =  \max_{i} p(y=i|x)$. The network \emph{accuracy}  is defined by the probability that the most probable class
$\hat{y}$ is indeed correct. The network is said to be \emph{calibrated} if the estimated confidence coincides with the actual accuracy. 

The Expected Calibration Error (ECE) \cite{Naeini2015} stands as the conventional metric employed for quantifying the calibration of a model. It is characterized by the expected absolute disparity between the model's accuracy and its level of confidence.  In practice, we only have a validation set with a finite number of samples $(x_1,y_1),...,(x_n,y_n)$ thus an approximation is used. Denote the  predictions and confidence values of the validation set by $(\hat{y}_1,\hat{p}_1),...,( \hat{y}_n,\hat{p}_n)$.
 To compute the ECE measure we first divide the unit interval $[0,1]$ into $m$ equal size bins $b_1,...,b_m$ and let $B_i=\{t| \hat{p}_t \in b_i\}$ be the set of samples whose confidence values   belong to  bin $b_i$. The network  average accuracy at this bin is computed as:
 \begin{equation}
A_i = \frac{1}{|B_i|} \sum_{t \in B_i} \mathbbm{1}_{\{\hat{y}_t = y_t\}},
\label{aidef}
 \end{equation}
 where $\mathbbm{1}$ is the indicator function, and $y_t$ and $\hat{y}_t$ are the  correct and  predicted labels for $x_t$ respectively. 
 $A_i$ is the relative number of correct predictions of instances that  were assigned to  $B_i$ based on their confidence value. 
 The average confidence at bin $b_i$ is computed as:
 \begin{equation}
C_i = \frac{1}{|B_i|} \sum_{t \in B_i} \hat{p}_t.
\label{cidef}
\end{equation}
If the network is under-confident at  bin $b_i$ then $A_i > C_i$  and  vice-versa.
The ECE is defined as follows: \begin{equation}
\mathrm{ECE} = \sum_{i=1}^{m} \frac{|B_i|}{n} \left| A_i - C_i \right|.
\label{ECEdef}
\end{equation}
 The ECE is based on a uniform bin width. If the model is well-trained, most of the
samples should lie within the highest confidence bins.   
Hence, the low confidence bins should be almost empty and therefore have no influence on the computed value of the ECE. For this reason, we  can consider another metric, Adaptive ECE (adaECE) where bin sizes are taken into account so as to evenly distribute samples between bins  
\cite{Nguyen2015}:
\begin{equation}
\mathrm{adaECE} = \frac{1}{m} \sum_{i=1}^{m} \left| A_i - C_i \right|
\label{adaECEdef}
\end{equation}
such that each bin contains $1/m$ of the data points with similar confidence values.  
AdaECE is considered a better and more resilient method than ECE for assessing network calibration. In this study we used the adaECE for both calibration and evaluation.

Temperature Scaling (TS) is a standard  highly effective technique for calibrating  the output distribution of a classification network \cite{Guo2017}. It uses a single parameter $T > 0$ to rescale logit scores before applying the softmax function to compute the class distribution.
 Temperature scaling is expressed as follows: 
\begin{equation}
    p_{{\scriptscriptstyle T }}(y=i|x) = \frac{\exp (z_i / T)}{\sum_{j=1}^k\exp (z_j / T)}, \hspace{0.4cm}   i=1,\dots,k
    \label{classcal}
\end{equation} 
s.t. $z_1,...,z_k$ are the logit values derived from the application of the network to the input vector $x$.
 The optimal temperature $T$ for a trained model can be found by  maximizing the log-likelihood $\sum_t \log  p_{{\scriptscriptstyle T }}(y_t|x_t)$ for the held-out validation dataset. Studies show that finding the optimal $T$ by directly minimizing the ECE or adaECE measures yields better calibration results \cite{mukhoti2020calibrating}.

\section{Conformal Prediction}
\label{s:ncp}

Consider a setup involving a classification network that categorizes an input $x$ into $k$ predetermined classes.  Given a coverage level of $1-\alpha$,
we aim to identify the smallest possible prediction set (a subset of these classes)
ensuring the correct class is within the set with a probability of at least $1 -\alpha$.
A straightforward strategy to achieve this objective involves sequentially incorporating classes from the highest to the lowest probabilities until their cumulative sum 
exceeds the threshold of $1-\alpha$. Despite the network's output adopting a mathematical distribution format, it does not inherently reflect the actual class distribution. Typically,
the network will not be calibrated and it tends to be overly optimistic \cite{Guo2017}. 
Consequently, this straightforward approach doesn't assure the inclusion of the correct class with the desired probability.

The first step of the CP algorithm involves forming a conformity score $S(x,y)$ that  measures the network's uncertainty between $x$ and its true label $y$ (larger scores indicate worse agreement). The Homogeneous Prediction Sets (HPS) score \cite{vovk2005conformal} is
$S_{\scriptscriptstyle \textrm HPS}(x,y)=1-p(y|x;\theta)$, s.t. $\theta$ is the network parameter set.
The Adaptive Prediction  Score (APS) \cite{romano2020classification}  is the sum of all class probabilities that are not lower  than the probability of the true class:
\begin{equation}
 S_{\scriptscriptstyle APS}(x,y) =  \sum_{\{i|p_i \ge p_{y}\}} p_i,
   \label{aps_score}
  \end{equation}
such that $p_i= p(y=i|x;\theta)$ and $p_y$ is the probability of the  label $y$.
The RAPS score \cite{angelopoulos2020uncertainty} is a variant of APS, which  is  defined as follows:
\begin{equation}
 S_{\scriptscriptstyle RAPS}(x,y) =  \sum_{\{i|p_i \ge p_{y}\}} p_i + a \cdot \max(0,(NC - b))
      \label{raps_score}
   \end{equation}
s.t. $NC=|\{i|p_i \ge p_{y}\}|$ and  $a,b$ are parameters that need to be tuned.
RAPS is especially effective in the case of a large number of classes where it explicitly encourages small prediction sets.

We can also define a randomized version of a conformity score. For example in the case of  APS
we define: \begin{equation}
 S_{\scriptscriptstyle rand-APS}(x,y,u) =  \sum_{\{i|p_i > p_{y}\}} p_i + u \cdot p_y,\hspace{1cm} u\sim U[0,1].
  \label{radaps_score}
  \end{equation}
The random version tends to yield the required coverage more precisely and thus it produces smaller prediction sets \cite{angelopoulos2023conformal}.
The CP prediction set of a data point $x$ is defined as $C(x)=\{y| S(x,y) \le q\}$ where $q$ is a threshold that is found using a labeled validation set $(x_1,y_1),...,(x_n,y_n)$. The CP theorem states that if we set  $q$ to be the $(1\!-\!\alpha)$ quantile of the conformal scores $S(x_1,y_1),...,S(x_n,y_n)$  we can guarantee that $1\!-\!\alpha \le p( y\in C(x)) \le 1\!-\!\alpha + \frac{1}{n+1}$,
where $x$ is a test point and $y$ is its the unknown true label \cite{vovk2005conformal}. 
 In the random case there is still a coverage guarantee, which is defined by marginalizing over all test points $x$  and samplings  $u$ from the uniform distribution  \cite{romano2020classification}.
Note that the coverage guarantee is for a marginal probability over all possible test points and coverage may be worse or better for different points. It can be proved that obtaining a conditional coverage guarantee is impossible \cite{foygel2021limits}.

\section{Unsupervised Target Domain Calibration}
\label{s:uda}

When a network trained on data from one domain is applied to data from a different domain, the distribution of features often changes between domains, leading to what is known as the domain shift problem (see e.g. \cite{pmlr-v139-miller21b}). In an Unsupervised Domain Adaptation (UDA) setup, we assume the availability of data from the target domain without any annotations. Numerous UDA methods have been developed to address this issue, employing strategies such as adversarial training to align the distributions of the source and target domains \cite{ganin2016domain}, or self-training algorithms that compute pseudo labels for the target domain data \cite{zou2019confidence}.
Studies show that present-day  UDA methods are prone to learning improved accuracy at the expense of deteriorated prediction confidence \cite{wang2020}. 

This brings us to the challenge of calibrating the network’s confidence on the target domain data. In UDA, adapting a network to the target domain typically focuses on accuracy improvements, but accurately calibrating the model’s confidence is equally important. Without calibration, the model may become overconfident in its predictions despite reduced performance on the target domain.

Next, we formulate the problem of unsupervised target domain calibration. Assume a network was trained on the source domain. We are given a labeled source domain validation-set dataset, denoted as $\mathcal{S}= \{(x_s^i, y_s^i)\}_{i=1}^{n_s}$ with $n_s$ samples, and an unlabeled target domain dataset $\mathcal{T}=\{x_t^i\}_{i=1}^{n_t}$ with $n_t$ samples. Adapting the network trained on the source domain to the target domain in an unsupervised manner without access to the labels can be achieved using various methods.
 Here, our goal is to calibrate the confidence of the adapted network prediction on samples from the target domain. 

Calibrating the confidence of the adapted model on data from the target domain is challenging due to the coexistence of the domain gap and the lack of target labels. Current UDA calibration methods use the labeled validation set from the source domain to approximate the target domain statistics in certain aspects.
Some studies \cite {salvador2021improved,tomani2021post} propose to modify the calibration set to represent a generic distribution shift.
Other methods  \cite{park2020,wang2020,pampari2020}  apply Importance Weighting (IW)
 by assigning higher weights to source examples that resemble those in the target domain.
 In practice, even after the domain adaptation process, the accuracy on the source domain, where labels are available,  remains greater than the accuracy on the target domain.  Hence, the accuracy estimation when calibrating
the target domain using the source data is still too optimistic.  Calibrating neural networks is necessary because they are often overconfident in their predictions compared to their actual accuracy \cite{Guo2017,Balaji2017,Hein2019}. If the accuracy is overestimated, it conceals the overconfidence issue, leading to a suboptimal temperature scaling value in the case of temperature scaling. 
Another drawback of IW methods is that they only use the unlabeled target data to train a binary source/target classifier, but the actual calibration is done on the source domain data while the target domain data are ignored.  The network confidence, however, is independent of the true labels and can thus be directly computed on the target data.

\section{Noisy Labels}
\label{s:noisy}

In supervised classification tasks a dataset is defined as pairs of input and labels $\mathcal{D}= \{(x^i, y^i)\}_{i=1}^{n_D}$.
In the setting of label noise, we only observe the corrupted labels $\tilde{y}_i = g(y_i)$ for some corruption function $g: Y \times [0,1] \xrightarrow{} Y$.

One notable label noise function follows a uniform distribution, where with a probability of $\epsilon$,  the correct label is replaced by a randomly selected label \ref{uniformn_intro_v1}. The noise is applied to each sample independently.  This noise model is commonly referred to as uniform noise. The uniform noise may be formulated also such that with a probability of $\epsilon$,  the correct label is replaced by a randomly selected label from the remaining $(k-1)$ classes \ref{uniformn_intro_v2}. In this thesis, we adopt both definitions interchangeably depending on the context, as the transition between them is straightforward.

A more general noise model assumes that the true label is corrupted by a label noise matrix  $P$, where
$P(i,j) = p(\tilde{y} = j |y = i)$ is the probability of the true label $i$ being flipped to a corrupted label $j$. 
In the simpler uniform noise model, $P$ takes the form of:  

\begin{equation}
P(i,j) = (1-\epsilon)  \mathbbm{1}_{\{i=j\}}+ \frac{\epsilon}{k}.
\label{uniformn_intro_v1}
\end{equation}

\begin{equation}
P(i,j) = (1-\epsilon)  \mathbbm{1}_{\{i=j\}}+ \frac{\epsilon}{k} \mathbbm{1}_{\{i\ne j\}}.
\label{uniformn_intro_v2}
\end{equation}

In future chapters, we concentrate on calibrating the network confidence based on noisy labels. A preliminary step before confidence calibration is training the network using a training set with noisy labels (see Figure \ref{fig:nts_schema}).
We next provide a brief overview of current training methods that are resilient to label noise and describe the training method we used in our experiments. 

There is a plethora of recent works on learning with noisy labels, which include  estimating the noise 
matrix~\cite{goldberger2017training, dgani2018training,li2021provably,HendrycksMWG18,xia2019anchor,xia2020part,cheng2022cvpr}, reweighting examples~\cite{liu2015classification,RenZYU18,ShuXY0ZXM19,wang2021graph}, selecting confident examples~\cite{huang2019o2u,YangICML2020,chen2022anomman,Li2020aaai}, designing robust loss functions~\cite{zhang2018generalized,GhoshKS17,ChengZLGSL21,wei2021robust}, introducing regularization~\cite{zhang2018mixup,hu2020simple,chen2021noise} and generating pseudo labels~\cite{tanaka2018joint,zheng2020error,zhang2021learningwith,han2019deep,Li2021TMM}.
Zhang et al. \cite{zhang2020disentangling} addressed the problem of learning from noisy labels in the context of inconsistent annotation collected from several medical experts. They also relied on the fact that the
confusion matrix of the noisy labels can be expressed as
the matrix product between the confusion matrix of the
clean labels and the label noise.
Our focus in this context is on learning techniques that deal with noisy data by estimating the label noise matrix. These methods have been successful in producing cutting-edge results, and our calibration approach utilizes the estimated noise matrix to obtain a noise-robust calibration measure.

Noise robust training methods which estimate the noise matrix, 
are all based on the following observation.  The clean class-posterior $p(y|x)$ can be inferred by utilizing the noisy class-posterior $p(\tilde{y}|x)$ and the
class-dependent noise  matrix  $P$,  where $P_{ij} = P(\tilde{y} = j|y =i )$, as follow:  $p(y|x) = P^{-1}p(\tilde{y}|x)$. While this approach theoretically guarantees statistical consistency, it relies heavily  on the success of estimating the noise matrix.
  Several  methods have been developed to estimate the noise matrix  under the so-called anchor-point assumption. Anchor points are  instances belonging to a specific class with a probability of one  \cite{liu2015classification}. This assumption is  reasonable in certain applications but typically, we cannot assume the availability of anchor points. This has motivated the development
of noise-robust training algorithms that do not  exploit anchor points.
Several studies have implemented modifications to the classification network's architecture  to better represent the label noise matrix  in noisy datasets \cite{bekker2016training,goldberger2017training,li2021provably}. These adjustments encompass the inclusion of a noise adaptation layer on top of the softmax layer and the creation of a specialized architecture.
 The noise adaptation layer is intended to mimic the label transition behavior in learning a network.
These changes have led to enhanced generalization by altering the network output according to the estimated label transition probability. In our implementation, of training of noisy labels we follow the approach in  Li et al. \cite{li2021provably} which yields state-of-the-art results. 

Denote the network's soft-max label prediction by $p_{\theta}(y|x)$ where $\theta$ is the network's parameter set. Given  training data $x_1,...,x_n$ with corresponding noisy labels $\tilde{y}_1,...,\tilde{y}_n$, 
the standard loss function is: 
\begin{equation}
 L(\theta) = \sum_{t=1}^n \textrm{CE}( \tilde{y}_t, p_{\theta}(y_t|x_t)),
\end{equation}
such that CE is the cross-entropy loss. 
Li et al. \cite{li2021provably}
proposed  minimizing the following loss function:
\begin{equation}
 L(\theta,P) = \sum_{t=1}^n \textrm{CE}( \tilde{y}_t, P \cdot p_{\theta}(y_t|x_t)) + \lambda 
 \log \det (P)
\end{equation}
such that $\lambda>0$ is  
 a regularization coefficient that trades off
distribution fidelity with  the complexity of the matrix $P$.
$P$ is enforced to be a diagonally dominant stochastic matrix
(i.e. $P_{i,i} > P_{i,j}$ for every $i\ne j$). We first create a matrix $A$ s.t. $A_{i,i}=1$ and 
$A_{i,j} = \sigma(w_{i,j})$
for all $ i \ne j$  where $\sigma$ is
the sigmoid function and each $w_{i,j}$ 
is a real-valued variable that is  updated throughout training. 
Then we normalize each row to obtain a stochastic matrix: $P_{ij}=\frac{A_{i,j}}{\sum_l A_{i,l}}$.
Once we finish the training phase, we eliminate the noise adaptation layer defined by  matrix $P$,  because our objective is to predict the clean label. As a by-product of  the training process, we also obtain an approximation of the label noise matrix  $P$, which we can utilize in our network calibration approach. 

Current label-noise learning methods generally assume that the class distribution of the training data is balanced, i.e.,  that each class is represented by almost the same number of samples. However, data in real-world applications are often imbalanced.
In cases where the training labels are both noisy and imbalanced it is difficult to distinguish between clean and noisy samples in rare classes because the clean samples are overwhelmed by noisy labels from frequent classes. Several recent attempts have been made to find noise-robust training procedures for imbalance data \cite{lu2022label,huang2022uncertainty,jiang2022delving}.
However, all these methods are focused on extracting confident examples. We are not aware of any methods for estimating the noise matrix with unbalanced data.

\textbf{Additional noise types.} Next, we define two common general noise matrices that will be used through out this thesis.
The Neighborhood noise as:
\begin{equation}
P_{i,j} = p(\tilde{y}=j|y=i) =
    \begin{cases}
    \xi \hspace{1.7cm} \text{if } i=j\\
    1 - \xi \hspace{1.1cm} \text{if } |i-j|=1 \text{ and } i \in (1,k)\\
    (1 - \xi) / 2 \hspace{0.5cm} \text{if } |i-j|=1 \text{ and } i \not \in (1,k)\\
    0       \hspace{1.65cm}    \text{otherwise}
\end{cases}
\end{equation}
The Random noise is defined as:
first, on the diagonal, we have $\xi$. next, for each line (aka $\forall i$) the rest of the values (i.e. $k-1$ items) are sampled from a random distribution $u_i$ vector of size $k-1$ and then normalized to sum up to $1-\xi$ to keep the matrix a probability matrix.

\begin{equation}
P_{i,j} = p(\tilde{y}=j|y=i) =
    \begin{cases}
    \xi \hspace{3.9cm} \text{if } i=j\\
    (1 - \xi) \cdot \frac{u_i[j]}{\sum_{z \neq i} u_i[z]}       \hspace{1.3cm}    \text{otherwise}
\end{cases}
\end{equation}

In this study, we tackle the challenge of applying calibration on a validation set with noisy labels. Calibration methods, particularly for neural networks, are known to be highly sensitive to label noise. In the context of conformal prediction, Einbinder et al. \cite{einbinder2022conformal} proposed ignoring the label noise and directly applying the standard CP algorithm to the noisy-labeled validation set. However, this approach tends to result in overly large prediction sets. In contrast, within the field of confidence calibration, we have not encountered any prior research that specifically addresses the issue of calibrating networks using validation sets with noisy labels.



\section{Local Differential Privacy}
\label{s:ldp}

Traditional (central) differential privacy \citep{dwork2006differential} presumes a trusted curator who sees the raw data and then adds noise before publishing. In contrast, LDP \citep{duchi2013local,kairouz2016discrete, kasiviswanathan2011can} treats the aggregator as \emph{untrusted}: individual users randomize their own data \emph{locally} before sending it to the aggregator, thus ensuring strong privacy. LDP is considered a harder setting since noise insertion is done on the user side in a distributed manner, whereas in the centralized DP model the curator holds the entire data and can apply operations on the clean data.

\begin{definition}
    A discrete randomized mechanism $Q(\cdot)$ is $\varepsilon$-LDP if for any pair of input labels $y,y'\in\mathcal{Y}$ and any output $z$,
    \[
    Q(z\mid y) \;\le\; e^{\varepsilon}\,Q(z\mid y').
    \]
\end{definition}
This definition ensures that any two possible labels are (roughly) indistinguishable from the aggregator's perspective. A common mechanism is the \emph{$k$-ary randomized response} ($k$-RR) \citep{warner1965randomized,wang2017locally}. For a label $y\in\{1,\dots,k\}$, it outputs:
\[
\tilde{y} =
\begin{cases}
y, &\text{w.p } \tfrac{e^\varepsilon}{(k-1) + e^\varepsilon},\\
\text{any other label (uniformly)}, &\text{w.p } \tfrac{1}{(k-1) + e^\varepsilon}.
\end{cases}
\]

This preserves label privacy, preventing the aggregator from easily inferring the user's true label from the reported $\tilde{y}$. The parameter $\varepsilon$, known as the \emph{privacy loss}, controls the privacy-utility trade-off: lower $\varepsilon$ enforces stronger privacy guarantees but introduces more noise, potentially degrading the utility of downstream applications. Note that if $k=2$, we recover \emph{Warner's binary randomized response} (RR) \citep{warner1965randomized}, flipping the label with some probability. 

LDP has gained popularity as a strong privacy paradigm that enables data owners to randomize their data locally before sharing it with an untrusted aggregator, thus ensuring that sensitive information remains protected (e.g.\ Google's RAPPOR \cite{erlingsson2014rappor} and Apple's locally private data collection of emojis and usage patterns \cite{apple2017privacy}).

\textbf{Challenges of LDP for statistical learning and conformal prediction.} While LDP ensures strong privacy guarantees, its main challenge lies in the significant increase in variance due to local randomization. Unlike central DP, where controlled noise can be added post-aggregation, in LDP, the noise is introduced at the user level, leading to a loss of information before any statistical inference is performed. This introduces several key challenges in machine learning and uncertainty quantification:
\begin{itemize}
    \item Challenges in calibration: Many traditional statistical methods assume access to clean calibration data. However, in an LDP setting, the observed data is randomized, affecting the empirical coverage of conformal prediction intervals.
    \item Impact on distribution-free guarantees: Conformal prediction provides finite-sample marginal coverage guarantees without assumptions on the underlying data distribution. However, when predictions are made using noisy, privatized data, the standard conformal prediction framework may require adaptation to account for the added uncertainty.
\end{itemize}

\chapter{Confidence Calibration under Noisy Labels}
\label{ch:nts}

In this chapter, we explore the critical challenge of confidence calibration in neural networks, particularly under the influence of noisy labels. Confidence calibration is vital in applications such as medical imaging, where overly confident yet incorrect predictions can have serious consequences. We review existing calibration methods, including post-hoc techniques like Temperature Scaling, and discuss the unique difficulties posed by label noise in medical datasets. Finally, we introduce a novel method that leverages noisy validation data to achieve robust calibration, demonstrating its effectiveness across various medical imaging scenarios.

\section{Problem Statement}

Confidence calibration is defined as the ability of a classifier network to provide an accurate probability of correctness for any of its predictions.
Neural networks have been shown to be more overconfident in their predictions than their predecessors even though their generalization accuracy is higher, partly due to the fact that they can overfit on the
negative log-likelihood loss without overfitting on the classification error \cite{Guo2017,Balaji2017,Hein2019}.
 In medical imaging applications, images for which the model makes low-confidence predictions are sent to a physician for review. Skipping the human review based on  confident but incorrect predictions can have disastrous consequences \cite{minderer2021revisiting}. 
The  gap between the model's predicted probabilities and its  accuracy is one of the  key obstacles to
the applicability  of  neural network models to  fully automatic medical diagnosis. 

 Various confidence calibration methods have recently emerged,  aiming to address the issue of excessive overconfidence.  
Network calibration can be performed in conjunction with training (see e.g. 
\cite{mukhoti2020calibrating,
muller2019does,mixup,xu2023mismatch}).
  Post-hoc scaling approaches to calibration (e.g. Platt
scaling \cite{Platt1999}, isotonic regression \cite{Zadrozny2002}, and temperature scaling \cite{Guo2017}) are widely used. 
In order to enhance calibration, they incorporate calibration as a post-processing step, utilizing hold-out validation data to acquire a calibration map that modifies the model's predictions.
Temperature scaling, which is currently the widely accepted practical calibration method, is a straightforward approach that can be easily implemented.
Despite the importance of network calibration for automating medical reports, there are only a few studies that have addressed the problem of calibrating medical imaging systems  (see e.g.
\cite{fernando2021dynamically,frenkel2022calibration,rousseau2021post,zhang2020layer}).

Deep neural networks have been highly successful in various natural image and medical image computing tasks. 
However, these achievements depend on having accurate annotated training data. 
 Neural networks require massive amounts of carefully labeled data to succeed, but acquiring such data is expensive and time-consuming. Non-expert sources, like Amazon's Mechanical Turk, have been used to reduce labeling costs, but their labels can be unreliable. Experienced domain experts may also struggle with complex labeling tasks. 
Medical imaging datasets often have problems with noisy labels due to ambiguous images that can confuse clinical experts.  Physicians may disagree on the diagnosis of the same medical image, resulting in variability in the ground truth label. Furthermore, using Natural Language Processing (NLP) tools to extract labels from radiological reports can also introduce label noise \cite{irvin2019chexpert}. Therefore, addressing annotation noise is a crucial topic in medical image analysis.
 Training neural networks with noisy labels is problematic because the models can easily overfit to the corrupted labels, resulting  lack of generalizability when evaluated on a separate test dataset.
  While popular regularization techniques have been used to address overfitting, they do not entirely solve the problem. Even when these techniques are applied, there is a significant gap in test accuracy between models trained on clean vs. noisy data, and the accuracy decreases with label noise.
Noisy labels are difficult to avoid, and studies indicate that Deep Neural Networks (DNNs) can  memorize entire datasets. Consequently, errors in datasets may result in erroneous predictions, which can impact medical diagnoses. Therefore, effectively managing noisy labels is crucial for automated medical image classification.   A review of network training methods for noisy labels can be found in \cite{9729424} and an excellent up-to-date discussion of training medical image classification networks from data with noisy labels can be found in \cite{Xue2022}.

In the following chapter, we address the challenge of calibrating medical networks with a validation set that has inaccurate labels. Numerous studies have examined the problem of training networks that are resilient to label noise, which can also disrupt the network calibration process.   Our findings suggest that network calibration methods are more susceptible to label noise compared to network training.
Nevertheless, we have not come across any previous research that tackles the challenge of network calibration using a validation set containing noisy labels.  The findings reveal that the Temperature Scaling method \cite{Guo2017}, which is commonly used, is highly susceptible to label noise and can even result in worse calibration than the original model.  We present a simple method that uses data with noisy labels to calibrate a network by taking advantage of the fact that in calibration, we only need to estimate the average accuracy at pre-determined confidence bins, rather than determining the correctness of each label. Testing the method on various medical imaging datasets, network architectures, and noise levels, showed that the calibration results were comparable to those obtained using a noise-free validation set.

The study described in this chapter was published in \cite{penso2024confidence, 10635160}.

\section{Confidence Calibration with Noisy Labels}


Consider a multi-class classification task with $k$ classes. 
Suppose we have a validation set with labels that are potentially inaccurate.
Let $y_1,...,y_n$ be the correct labels of the validation set  and let $\tilde{y}_1,...,\tilde{y}_n$ be the  corresponding  observed corrupted labels.
We assume that the label noise follows a uniform distribution, where with a probability of $\epsilon$,  the correct label is replaced by a randomly selected label from the remaining $(k-1)$ classes. The noise is applied to each sample 
independently.  This noise model is commonly referred to as uniform noise. Our objective is to calibrate the network using these noisy labels.

\begin{figure*}[h!]
    \centering
         \includegraphics[scale=0.35]{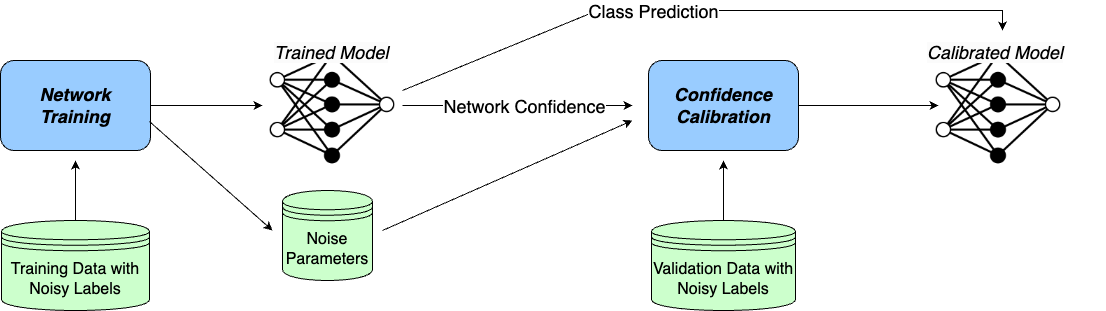}
    \caption{Schema of the proposed model that includes the full pipeline of network training and calibration based on data with noisy labels.}
    \label{fig:nts_schema}
\end{figure*}

It can be verified from the adaECE definition (\ref{adaECEdef}) that only the accuracy terms $\{A_i\}_{i=1}^{m}$ are affected by the noise, whereas the  confidence terms $\{C_i\}_{i=1}^{m}$ remain the same.
Let  $\tilde{A}_i$ be the average accuracy at bin $i$ that is computed using the corrupted labels.
Note that a prediction is considered correct if the label is not corrupted and the network's prediction matches it, or if the label is corrupted and the network's (incorrect) prediction matches the corrupted label. 
This implies that:  \begin{align}
&\tilde{A}_i  =  \frac{1}{|B_i|} \sum_{t \in B_i}
\mathbbm{1}_{\{\hat{y}_t = \tilde{y}_t\}}  
 =\frac{1}{|B_i|} \sum_{\{t \in B_i | \tilde{y}_t = y_t \}} \mathbbm{1}_{\{\hat{y}_t =  {y}_t\}}    +
\frac{1}{|B_i|} \sum_{\{t \in B_i | \tilde{y}_t \ne y_t \}} \mathbbm{1}_{\{\hat{y}_t = \tilde{y}_t\}}.
\nonumber
\label{align:tildea}
\end{align}
The law of large numbers implies that as the size of the validation set increases, the empirical average noisy accuracy at each adaECE bin becomes increasingly close to the mean noisy accuracy.
Therefore:
\begin{equation}\tilde{A}_i \approx (1- \epsilon)
{A}_i + \frac{\epsilon}{k-1}(1-A_i).
\label{lln}
\end{equation}
We can thus obtain an estimation $\hat{A}_i$ of the correct  accuracy $A_i$ from the noisy accuracy $\tilde{A}_i$ as follows:
\begin{equation}\tilde{A}_i = (1- \epsilon)
\hat{A}_i + \frac{\epsilon}{k-1}(1-\hat{A}_i).
\label{lln2}
\end{equation}
Rearranging (\ref{lln2}) we finally obtain:
\begin{equation}
\hat{A}_i = \frac{\tilde{A}_i- \epsilon/(k-1)}{(1-\epsilon) - \epsilon/(k-1)}.
\label{aiform}
\end{equation}
Substituting the estimated accuracy term, based on data with noisy labels (\ref{aiform}) into the adaECE definition (\ref{adaECEdef}),
yields the following noise-robust adaECE measure:
\begin{equation}
\mathrm{Noisy\mbox{-}adaECE}(\epsilon) = \frac{1}{m} \sum_{i=1}^{m} 
 \left| 
\frac{\tilde{A}_i- \epsilon/(k\!-\!1)}{(1\!-\!\epsilon) - \epsilon/(k\!-\!1)}
- C_i \right|.
\label{tildeadaECEdef}
\end{equation}
When training a network with noisy labels we need to make individual decisions for each sample regarding the corruption of its label. The key feature of the Noisy-adaECE metric is that here we only require an estimation of the average label noise within each bin, which is a significantly simpler task.

\begin{algorithm*}
\caption { Noisy Temperature Scaling (NTS) - Uniform noise}
\begin{algorithmic}[0]
\State   {\bf input:} A  validation set    $(x_1,\tilde{y}_1),...,(x_n,\tilde{y}_n)$ whose labels are corrupted with noise level $\epsilon$.
\vspace{0.1cm}
\State - Feed each $x_t$  into the classifier network to produce class distribution  $p_{t1},...,p_{tk}$. 
 Compute \\ \hspace{1cm}  the confidence values
 $\hat{p}_t = \max_j p_{tj}$ and the  predictions
 $\hat{y}_t = \arg \max_j p_{tj}$.
\State - Order the points based on their confidence and divide them into equal-sized sets $B_{1},...,B_{m}$.
\State - Find  $T$ that minimizes the Noisy-adaECE score: 
$$\mathrm{Noisy\mbox{-}adaECE}(T) = \frac{1}{m} \sum_{i=1}^{m} \left| \max(0,\min(1,
\frac{\tilde{A}_i- \epsilon/(k-1)}{(1-\epsilon) - \epsilon/(k-1)}))
- C_i(T) \right| $$
s.t.
$$
C_i(T) =  \frac{1}{|B_i|} \sum_{t \in B_i} 
\frac {\exp(\log (\hat{p}_t/T))} { \sum_{l=1}^k \exp( \log(p_{tl}/T))} 
\hspace{1cm}
\mbox{and} 
\hspace{1cm}
 \tilde{A}_i = 
\frac{1}{|B_i|} \sum_{t \in B_i} \mathbbm{1}_{\{\hat{y}_t = \tilde{y}_t\}}
$$
\State   {\bf output:} \hspace{1cm}
$\hat{T}= \arg \min \mathrm{Noisy\mbox{-}adaECE}(T) $
  \end{algorithmic}
  \label{alg:nts_uniform_noise_alg}
\end{algorithm*}

This Noisy-adaECE calibration measure  assumes  knowledge of the noise level $\epsilon$. If $\epsilon$  is not known, we can estimate it from the noisy data
(see e.g. \cite{patrini2017making,li2021provably,zhang2021learning}).
In the next section, we describe the noise level estimation that was used in our experiments. 
For each calibration method whose parameters can be found by minimizing the adaECE measure, we can form a noise-robust variant in which Noisy-adaECE (\ref{tildeadaECEdef}) is minimized instead of adaECE (\ref{adaECEdef}).   Examples of these calibration methods include Temperature Scaling (TS), Vector Scaling, Matrix Scaling \cite{Guo2017},  Mix-n-Match \cite{zhang2020mix}, Wight Scaling \cite{frenkel2022calibration}, and others.  We next present the noise-robust calibration  measure in the case of the TS method.   The  optimal temperature is obtained by  finding the temperature $T$ that minimizes  the Noisy-adaECE (\ref{tildeadaECEdef}) calibration measure. The proposed robust variant of TS which we dub the Noise-robust Temperature Scaling (NTS) algorithm,  is summarized in Algorithm Box \ref{alg:nts_uniform_noise_alg}.

So far we have considered the simplest uniform label noise model. 
A more general noise model assumes that the true label is corrupted by a label noise matrix  $P$, where
$P(i,j) = p(\tilde{y} = j |y = i)$ is the probability of the true label $i$ being flipped to a corrupted label $j$. 
In the simpler uniform noise model, $P$ takes the form of:  
\begin{equation}
P(i,j) = (1-\epsilon)  \mathbbm{1}_{\{i=j\}}+ \frac{\epsilon}{k-1} \mathbbm{1}_{\{i\ne j\}}.
\label{uniform}
\end{equation}
We next extend the noise-robust calibration measure Noisy-adaECE defined above   to the case of a general label noise matrix  $P$. Denote \[ M_i(r,s)= p_i( \hat{y} = r, {y}=s)=\frac{1}{|B_i|} \sum_{t \in B_i} \mathbbm{1}_{\{\hat{y}_t = r , {y}_t=s\}}.\]
$M_i$ is the classifier confusion matrix computed on the 
validation data from the $i$-th bin using clean labels. 
The adaECE accuracy term (\ref{aidef}) is thus: 
\begin{equation}
A_i = \sum_{j=1}^k M_i(j,j) = \textrm{Tr}(M_i).
\label{aitr}
\end{equation}
In a similar manner,
we define a confusion matrix based on the available noisy labels: 
\begin{equation}
\tilde{M}_i(r,s)= 
\frac{1}{|B_i|} \sum_{t \in B_i} \mathbbm{1}_{\{\hat{y}_t = r , \tilde{y}_t=s\}}.
\label{m_noise}
\end{equation}
According to our noise model,  given a sample from the validation set along with its true label,  the corresponding  noisy label and the network soft prediction are conditionally independent. This implies that:     
\begin{align}\tilde{M}_i(r,s) & = 
p_i( \hat{y} = r, \tilde{y}=s)= \sum_j  p_i( \hat{y} = r, \tilde{y}=s, {y}=j) \nonumber  \\ & 
=  \sum_j  p_i( \hat{y} = r, {y}=j )  p(  \tilde{y}=s | \hat{y}=r, {y}=j) \label{generalp}  \\ & 
=  \sum_j  p_i( \hat{y} = r, {y}=j )  p(  \tilde{y}=s | {y}=j) \label{generalp} 
= \sum_j M_i(r,j) P(j,s). \nonumber
 \end{align}
 We can write (\ref{generalp}) as a matrix multiplication:  $\tilde{M}_i = M_i P$.  This implies that
 \begin{equation}
 M_i = \tilde{M}_i P^{-1}.
\label{mip}
 \end{equation} 
 By substituting  (\ref{mip}) in  (\ref{aitr})  
  we obtain an estimation of the adaECE accuracy term of the clean data $A_i$ as a function of the confusion matrix of the noisy data $\tilde{M}_i$ and the label noise matrix  $P$:
\begin{equation} \hat{A}_i = \textrm{Tr}(M_i) = \textrm{Tr} (\tilde{M}_i P^{-1}).
\label{trai}
\end{equation}
 We note that, by applying this derivation to the case of  uniform  noise (\ref{uniform}), we obtain:
  \begin{align}
 \tilde{A}_i & = \sum_r \tilde{M}_i(r,r) = 
 \sum_r (M_i  P)(r,r) = 
 \sum_{s,r} M_i(r,s) P(s,r)  \nonumber \\ & =
\sum_s ( M_i(s,s)(1-\epsilon) + \sum_{r\ne s} M_i(r,s) (\frac{\epsilon}{k-1}))    = (1-\epsilon) A_i +  \frac{\epsilon}{k-1}(1-A_i).
 \end{align}
This coincides with the direct derivation of binwise average accuracy for the case of uniform noise. 

\begin{algorithm*}
\caption { Noisy Temperature Scaling (NTS) - General Noise Matrix}
\begin{algorithmic}[0]
\State   \textbf{ input:} A  validation set    $(x_1,\tilde{y}_1),...,(x_n,\tilde{y}_n)$ whose labels are corrupted by a noise matrix $P$.
\vspace{0.1cm}
\State - Feed each $x_t$  into the classifier network to produce class distribution  $p_{t1},...,p_{tk}$. 
 Compute \\ \hspace{1cm}  the confidence values
 $\hat{p}_t = \max_j p_{tj}$ and the  predictions
 $\hat{y}_t = \arg \max_j p_{tj}$.
\State - Order the points based on their confidence and divide them into equal-sized sets $B_{1},...,B_{m}$.
\State - Find  $T$ that minimizes the Noisy-adaECE score: 
\[\mathrm{Noisy\mbox{-}adaECE}(T) = \frac{1}{m} \sum_{i=1}^{m} \left| \max(0,\min(1,
\textrm{Tr} (\tilde{M}_i P^{-1})))
- C_i(T) \right| \]
s.t.
\[
C_i(T) =  \frac{1}{|B_i|} \sum_{t \in B_i} 
\frac {\exp(\log (\hat{p}_t/T))} { \sum_{l=1}^k \exp( \log(p_{tl}/T))} 
\hspace{1cm}
\mbox{and} 
\hspace{1cm}
\tilde{M}_i(r,s)= \frac{1}{|B_i|} \sum_{t \in B_i} \mathbbm{1}_ {\{\hat{y}_t = r , \tilde{y}_t=s\}}
\]
\State   \textbf{ output:} \hspace{1cm} 
$\hat{T}= \arg \min \mathrm{Noisy\mbox{-}adaECE}(T) $
  \end{algorithmic}
  \label{alg:nts_general_noise_alg}
\end{algorithm*}

The Noise-adaECE in the case of a general  noise matrix $P$ is defined by:
\begin{equation}
\mathrm{Noisy\mbox{-}adaECE}(P) = \frac{1}{m} \sum_{i=1}^{m} \left| 
\textrm{Tr} (\tilde{M}_i P^{-1})
- C_i \right|,
\label{madaECEdef}
\end{equation}
such that $\tilde{M}$ is the confusion matrix of the noisy validation set data (\ref{m_noise}) and we used (\ref{trai}) to estimate the binwise average clean data accuracy.  To apply TS calibration in the case of a general noise model, we need to find a temperature $T$ that minimizes the Noise-adaECE expression (\ref{madaECEdef}). 
In the case where the label noise matrix $P$  is not known, there is a plethora of methods for estimating $P$ without accessing clean labels \cite{patrini2017making,li2021provably,zhang2021learning}. 
The Noisy Temperature Scaling (NTS) for the case of a general noise matrix, is summarized in Algorithm Box \ref{alg:nts_general_noise_alg}. 
Figure \ref{fig:nts_schema} illustrates the entire pipeline composed of the noisy label training followed by the noisy label calibration process.

\label{sec:nts_network_training}
\section{Network Training with Noisy Labels}

To achieve robust confidence calibration, it is crucial to first train the network in a manner that accounts for label noise. Various approaches have been developed to mitigate the adverse effects of noisy labels, ranging from reweighting and selecting reliable samples to modifying the network architecture to model label noise explicitly. Among these, a particularly effective strategy involves estimating the label noise matrix and incorporating it into the training process (see Background Section \ref{s:noisy} for details).

Our approach follows the provably consistent method proposed by Li et al. \cite{li2021provably}, where a noise adaptation layer is introduced to learn the label transition probabilities. This method optimizes a loss function that balances prediction fidelity with a regularization term enforcing the structural properties of the estimated noise matrix. Once training is complete, the noise adaptation layer is removed, allowing the network to predict clean labels while retaining an estimated noise matrix as a by-product. This estimated noise matrix plays a key role in our confidence calibration procedure.

By leveraging noise-robust training, we ensure that the network maintains reliable predictive confidence even when exposed to datasets with significant label noise. In the following sections, we detail how this estimated noise matrix enhances the calibration process.

Figure \ref{fig:nts_schema} provides a flow diagram of our method in conjunction with model training. A noise-robust training method \cite{li2021provably}
is applied to the noisy training data, yielding a trained network and an estimation of the noise matrix. Then, given a noisy validation set, we apply our NTS method to calibrate the network confidence.

\newpage
\newpage
\section{Experiments}


We implemented the proposed NTS noisy calibration method on various medical imaging classification tasks to evaluate its  performance. We share our code for reproducibility~\footnote{
\url{{https://github.com/cobypenso/noisy_calibration}}}. The experimental setup included the following medical imaging classification datasets:
\begin{itemize}
\item \textbf{ChestX-ray14} \cite{wang2017chestx}: A    huge dataset that contains 112,120 frontal-view X-ray images of 30,805 unique patients of size $1024 \times 1024$, individually labeled with up to 14 different thoracic diseases. The original dataset is multi-label. 
The problem is treated as a multi-class task by choosing the samples containing only one annotated positive label or the "No-finding" case without any positive label. 
More than 60\% of the images belong to this class, which makes the dataset highly imbalanced.
 We used a train/validation/test split of 89,696/11,212/11,212 images.
We used a balanced variant of the dataset denoted by ChestX-ray14+bal which only included images with exactly one  annotated positive label.
\item \textbf{HAM10000} \cite{tschandl2018ham10000}: This dataset contains 10,015 dermatoscopic images of size $800 \times 600$. Cases include a representative collection of 7 diagnostic categories in the realm of pigmented lesions. We used a train/validation/test split of 8,013/1,001/1,001 images.
\item \textbf{PathMNIST} \cite{medmnistv2}: A dataset that contains 97,176 images of Colon Pathology with nine classes. The images'  size is $28 \times 28$. Here, we used a train/validation/test split of 89,996/3,590/3,590 images.
\end{itemize}

Each dataset was fine-tuned on pre-trained ResNet-18, ResNet-50 \cite{he2016cvpr}, and DenseNet-121 \cite{huang2017densely} networks. The models were taken from the PyTorch site \footnote{
\url{https://pytorch.org/vision/stable/models.html}}.
These network architectures were selected because of their widespread use in classification problems.
The last fully-connected  layer output size of each  was adjusted to fit the corresponding number of classes for each dataset. All the models were fine-tuned using the   Adam optimizer \cite {kingma2014adam}.

\begin{figure*}
    \includegraphics[width=0.95\textwidth]{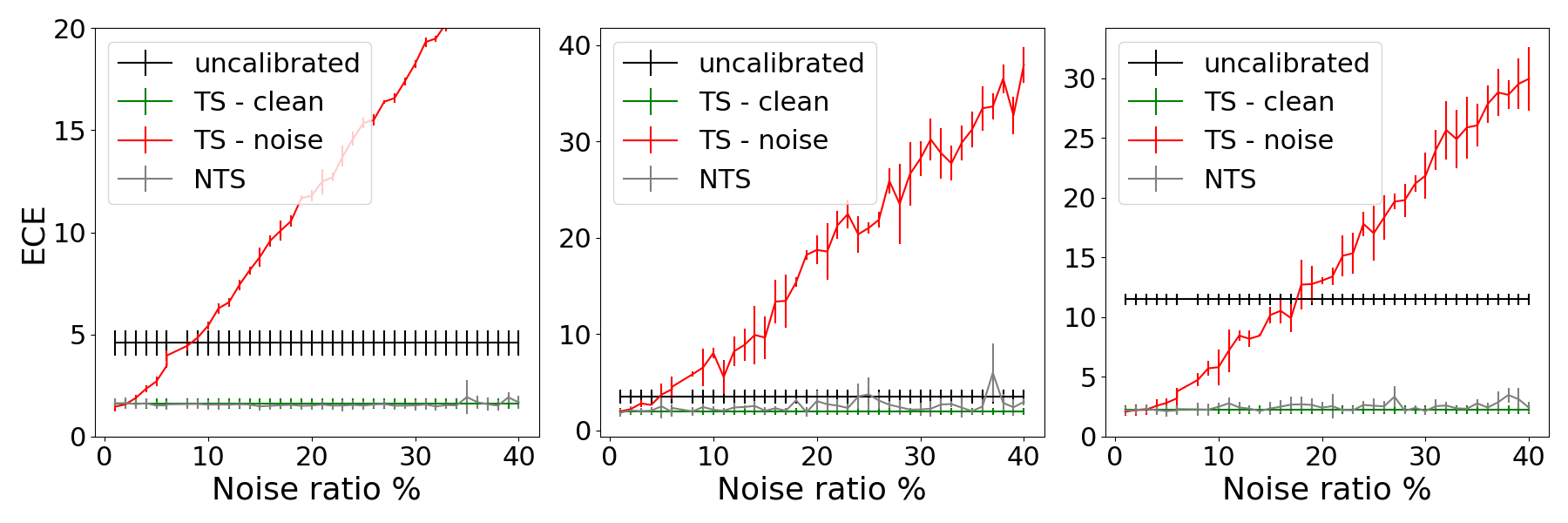}
    \vspace{0.3cm}
    \includegraphics[width=0.95\textwidth]{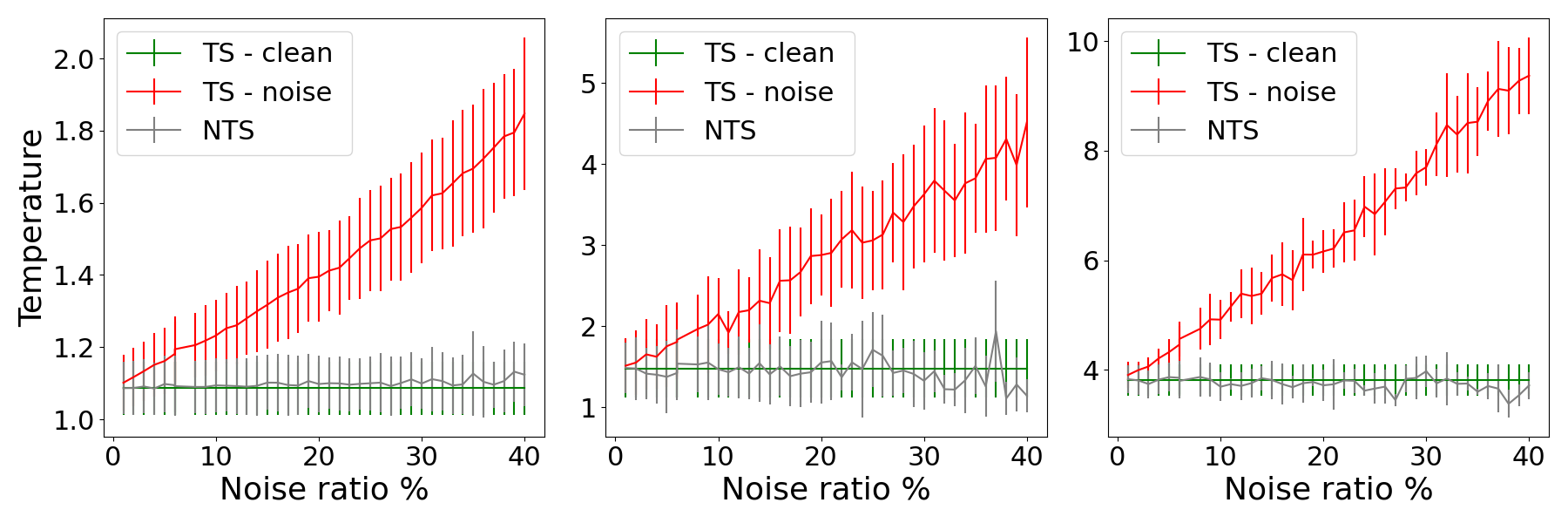}
    \center
    \hspace{0.3cm} (a) ChestX-ray14 
    \hspace{3.1cm} (b) HAM-10000 
    \hspace{3.1cm} (c) PathMNIST
    \caption{Comparative calibration results on   several datasets that were trained with ResNet-50.   The top row shows the ECE
    results on the (clean) test set. 
    The bottom row shows the optimal temperature found on the noisy validation set. }
    \label{fig:different_epsilons}
\end{figure*}

\begin{figure*}[t]
    \centering
    \includegraphics[width=1 \textwidth]{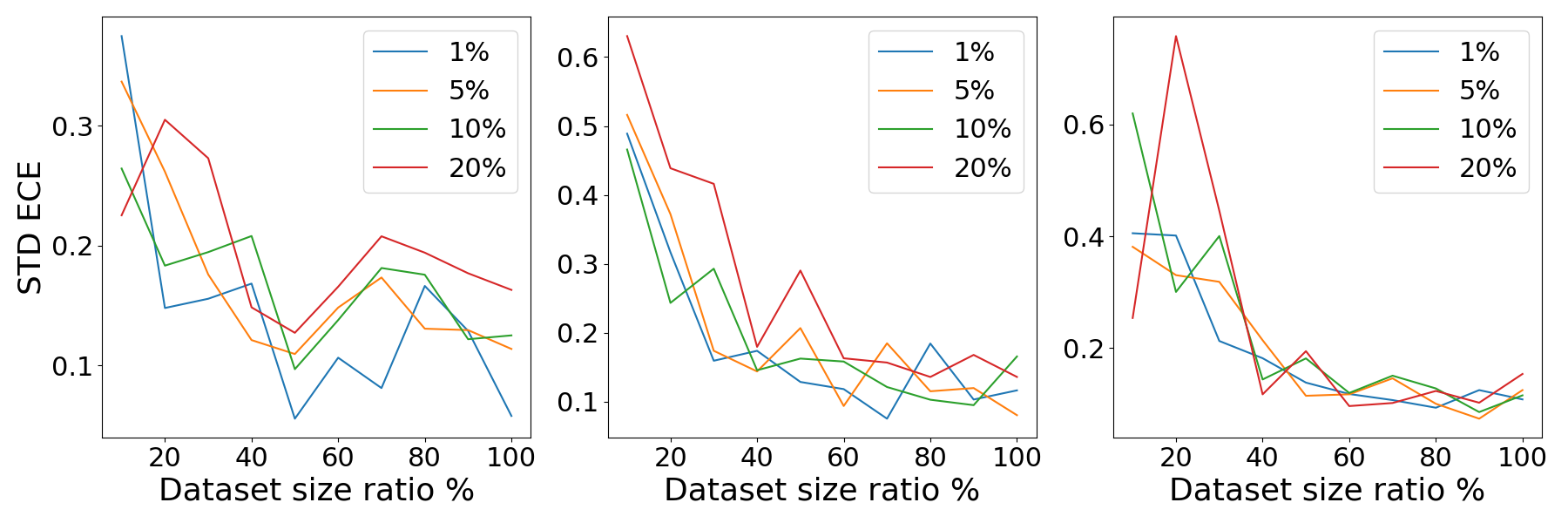} 
      \hspace{1.2cm}  (a) ResNet-18 
    \hspace{3.1cm} (b) ResNnet-50
    \hspace{3.1cm} (c) Densenet-121
    \caption{ Standard deviation of ECE scores on the test set after the NTS calibration as a function of the size of the noisy validations set.}
    \label{fig:dataset_size}
\end{figure*}

\begin{table*}
	\caption{Adaptive ECE for top-1 predictions (in \%) using 15 bins.
   on various medical imaging classification datasets and models with different calibration methods with a noise level $\epsilon=0.2$.  The lowest is highlighted in bold.}
	\label{table:adaece_tab1}
	\centering
          \resizebox{\linewidth}{!}{%
		\begin{tabular}{ll|c|cc|cccc}
			\toprule 
			{\small Dataset}& {\small Architecture}& { Acc (\%)} & {\small Uncalibrated} & {TS$_{clean}$}& VS$_{noisy}$ & MS$_{noisy}$ & 
			\hspace{1mm}{TS$_{noisy}$}\hspace{3mm}& \hspace{3mm}{NTS} \hspace{3mm} \\
 \midrule
                \multirow{3}{*}{ChestX-ray14} & ResNet-18 &  65.80 $\pm$ 0.01 &$1.92 \pm 0.43$ &$1.52 \pm 0.19$& $12.25\pm 0.17$ & $12.27\pm0.17$ & 11.90 $\pm$ 0.46 & \textbf{1.45 $\pm$ 0.15}\\
                & ResNet-50 &  $65.84 \pm 0.02$  &  $4.59 \pm 0.62$ & $1.63 \pm 0.25$  & $12.26\pm0.21$ & $12.22\pm0.18$ & $11.81 \pm 0.28$& \textbf{1.52 $\pm$ 0.19}\\
                & Densenet-121 &  $65.67 \pm 0.01$  &  $3.61 \pm 0.75$ &  $1.49 \pm 0.10$ & $12.27\pm 0.18$ & $12.24\pm0.16$ & 12.15 $\pm$ 0.49& \textbf{1.50 $\pm$ 0.06} \\  
			\midrule
                \multirow{3}{*}
                {HAM-10000} & ResNet-18 &   $88.16 \pm 0.17$ &  $4.38 \pm 1.52$ &   $3.12 \pm 0.74$ & $19.75\pm2.40$ & $17.17\pm0.91$ & $14.22 \pm 5.88$& \textbf{3.67 $\pm$ 0.59}\\
                & ResNet-50 &   $91.06 \pm 0.01$ &$3.51 \pm 0.69$&  $1.98 \pm 0.34$ & $19.95\pm0.73$ & $18.03\pm0.54$ & 18.75 $\pm$ 1.49 &\textbf{3.04 $\pm$ 1.08}\\
                & Densenet-121 &  $89.43 \pm 0.18$  &  $3.75 \pm 1.53$ & $2.67 \pm 0.11$& $18.85\pm1.23$ & $17.56\pm1.09$ & $17.50 \pm 2.04$& \textbf{2.74 $\pm$ 0.68}  \\ 
                \midrule \multirow{3}{*}
                {PathMNIST} & ResNet-18 &  $84.68 \pm 0.05$  &  13.14 $\pm$ 0.51&  $2.19 \pm 0.20$ & $23.19\pm0.57$ & $22.99\pm0.05$ &  11.94 $\pm$ 1.30 & \textbf{2.37 $\pm$ 0.52}\\
                & ResNet-50 &   $85.97 \pm 0.04$ &  11.50 $\pm$ 0.44& $2.23 \pm 0.35$  & $21.47\pm3.71$ & $27.35\pm3.96$ & $13.00 \pm 0.32$ & \textbf{2.42 $\pm$ 0.48} \\
                & Densenet-121 & $86.54 \pm 0.05$   &  $11.12 \pm 0.43$ & $1.83 \pm 0.49$  &  $25.47\pm0.71$ &$ 25.58\pm0.42$ & 12.31 $\pm$ 1.00 & \textbf{2.55 $\pm$ 0.35} \\
                			\bottomrule
		\end{tabular}%
          }
	\end{table*}

\begin{table*}
	\caption{ECE for top-1 predictions (in \%) using 15 bins.
   on various medical imaging classification datasets and models with different calibration methods with a noise level $\epsilon=0.2$.  The lowest is highlighted in bold.}
	\label{table:ece_tab1}
	\centering
          \resizebox{\linewidth}{!}{%
		\begin{tabular}{ll|cc|cccc}
			\toprule 
			{\small Dataset}& {\small Architecture}&  {\small Uncalibrated} & {TS$_{clean}$}& VS$_{noisy}$ & MS$_{noisy}$ & 
			\hspace{1mm}{TS$_{noisy}$}\hspace{9mm}& \hspace{9mm}{NTS} \hspace{9mm} \\
      \midrule
              
                        \multirow{3}{*}{ChestX-ray14} & ResNet-18 & $1.93 \pm 0.49$ &$1.24 \pm 0.14$&  $12.25\pm0.17$ & $12.27\pm0.18$ & 11.92 $\pm$ 0.46 & \textbf{1.21 $\pm$ 0.12}\\
                & ResNet-50 &  $4.51 \pm 0.30$ & $1.35 \pm 0.32$  & $12.24\pm0.22$ &$ 12.21\pm0.20$ & $11.84 \pm 0.28$& \textbf{1.26 $\pm$ 0.29}\\
                & Densenet-121 &  $3.71 \pm 0.80$ &  $1.22 \pm 0.19$ &  $12.25\pm0.18$ &$ 12.24\pm0.17$ & 12.10 $\pm$ 0.49& \textbf{1.26 $\pm$ 0.06} \\  
			\midrule
                \multirow{3}{*}{HAM-10000} & ResNet-18 &  $2.72 \pm 1.41$ &   $0.79 \pm 0.35$ &  $18.84\pm1.61$ &$ 14.87\pm 4.01$ & $12.53 \pm 6.51$& \textbf{2.52 $\pm$ 0.69}\\
                & ResNet-50 &  $1.69 \pm 0.61$&  $0.61 \pm 0.12$ &  $19.00\pm0.14$ &$ 17.50\pm0.88$ & 16.50 $\pm$ 2.41 &\textbf{1.41 $\pm$ 1.10}\\
                & Densenet-121 &  $3.04 \pm 1.67$ & $1.23 \pm 0.34$&  $18.39\pm1.74$ &$ 17.13\pm1.24$ & $16.23 \pm 2.13$& \textbf{1.72 $\pm$ 0.74}\\ \midrule          
                \multirow{3}{*}{PathMNIST} & ResNet-18 & 13.12 $\pm$ 0.52&  $1.89\pm 0.11$ &   $23.22\pm0.59$ &$ 23.00\pm0.06$ & 11.91 $\pm$ 1.35 & \textbf{2.09 $\pm$ 0.46}\\
                & ResNet-50 &   11.50 $\pm$ 0.52& $2.30 \pm 0.34$  &  $21.31\pm3.93$ &$ 27.14\pm 4.75$ & $12.90 \pm 0.11$ & \textbf{2.52 $\pm$ 0.49} \\
                & Densenet-121 & $11.02 \pm 0.40$ & $1.82 \pm 0.54$  &  $25.29\pm0.67$ &$ 25.58\pm 0.44$ & 12.02 $\pm$ 0.79& \textbf{2.58 $\pm$ 0.46} \\
               			\bottomrule
		\end{tabular}%
          }
	\end{table*}

\begin{table*}[!t]
	\caption{Adaptive ECE for top-1 predictions (in \%) using 15 bins (with the lowest in bold) on various medical imaging classification datasets and models with different calibration methods with varying noise levels. Model training and noise matrix  estimation used \cite{li2021provably}.}
	\label{table:adaece_tab2}
	\centering
          \resizebox{\linewidth}{!}{%
		\begin{tabular}{lc|c|cc|c|cc}
			\toprule 
			{\small Dataset}& {\small Noise level (\%)}& { Acc (\%)} & {\small Uncalibrated} & {TS$_{clean}$}& NTS($P$) 
			  & {TS$_{noisy}$} &  NTS($\hat{P}$) \\ \midrule
\multirow{4}{*}{ChestX-ray14-bal} & 
        $0$      & $44.37 \pm 0.12$ &  $26.21 \pm 0.56$ & $2.88 \pm 0.48$ && & \\
       &  $5$    & $44.10 \pm 0.08$ &  $25.52 \pm 0.79$ & $3.18 \pm 0.44$ &  {3.20 $\pm$ 0.46} & $5.91 \pm 0.46$ & \textbf{5.27 $\pm$ 1.41} \\
      &   $10$   & $43.76 \pm 0.31$ & $23.85 \pm 1.02$ & $3.41 \pm 0.30$ & {3.40 $\pm$ 0.54}  & $7.35 \pm 1.18$ & \textbf{6.08 $\pm$ 1.52} \\
         & $20$  & $43.02 \pm 0.28$ & $21.34 \pm 0.33$ & $3.30 \pm 0.36$ & {3.33 $\pm$ 0.44}  & $10.48 \pm 0.63$ & \textbf{6.43 $\pm$ 0.96}\\
\midrule   
    \multirow{4}{*}{ChestX-ray14} & 0 & $65.13 \pm 0.09$   &  $21.50 \pm 0.18$&  $5.70 \pm 0.63$ & & & \\
                & 5 &  $64.95 \pm 0.05$  & $18.71 \pm 0.20$ & $4.46 \pm 0.67$& {4.29 $\pm$ 0.44} & \textbf{5.86 $\pm$ 0.49} &  $12.8 \pm 1.34$ \\
                & 10 &  $64.76 \pm 0.36$  & $19.04 \pm 2.85$ &$4.34 \pm 0.55$ & {4.42 $\pm$ 0.33} & \textbf{6.81 $\pm$ 0.64} &  $9.65 \pm 0.78$ \\
                & 20 & $64.71 \pm 0.09$   & $18.22 \pm 3.26$ & $4.64 \pm 0.54$  & {4.64 $\pm$ 0.54} & 12.02 $\pm$ 1.74 & \textbf{7.28 $\pm$ 1.75}\\
			\midrule 
                \multirow{4}{*}{HAM-10000} & 0 & $90.94 \pm 0.78$ & $3.95 \pm 0.67$ &  $2.48 \pm 1.02$ & & & \\ 
                & 5 &  $90.64 \pm 0.83$ &  $3.47 \pm 1.16$& $3.19 \pm 1.03$& {2.98 $\pm$ 0.49} & $4.34 \pm 0.57$&  \textbf{ 3.41 $\pm$ 1.21}\\
                & 10 & $90.16 \pm 0.81$ & $2.95 \pm 0.32$ & $2.70 \pm 0.39$&  {2.71 $\pm$ 0.29} &$8.08 \pm 1.12$ & \textbf{  2.76 $\pm$ 0.31 }\\
                & 20 & $89.79 \pm 1.11$ &$4.78 \pm 0.67$ & $3.56 \pm 0.72$ &  {4.44 $\pm$ 0.97} & $19.12 \pm 1.28$ & \textbf{  6.52 $\pm$ 2.35 }\\
                \midrule 
                \multirow{4}{*}{PathMNIST} & 0 &  $85.33 \pm 0.99$  &  $4.52 \pm 0.75$&  $1.39 \pm 0.61$ &&& \\ 
                & 5 & $85.30 \pm 0.67$ & $3.71 \pm 0.49$ &$1.73 \pm 0.28$ & {1.65 $\pm$ 0.26} &$4.08 \pm 0.37$& \textbf{ 1.86 $\pm$ 0.37 }\\
                & 10 &  $85.26 \pm 0.55$ & $3.73 \pm 0.71$ & $1.67 \pm 0.62$& {1.56 $\pm$ 0.59} &$7.09 \pm 0.61$  & \textbf{  2.44 $\pm$ 0.92} \\
                & 20 & $84.46 \pm 0.57$ & $3.36 \pm 0.63$ & $1.73 \pm 0.16$  &  {2.33 $\pm$ 0.82} & $12.30 \pm 1.34$ & \textbf{ 2.63 $\pm$ 0.19}\\
              			\bottomrule
		\end{tabular}%
          }
	\end{table*}

\begin{table*}[!t]
	\caption{ECE for top-1 predictions (in \%) using 15 bins (with the lowest in bold) on various medical imaging classification datasets and models with different calibration methods with varying noise levels. Model training and noise matrix  estimation used \cite{li2021provably}.}
	\label{table:ece_tab2}
	\centering
          \resizebox{\linewidth}{!}{%
		\begin{tabular}{lc|cc|c|cc}
			\toprule 
			{\small Dataset}& {\small Noise level (\%)} & {\small Uncalibrated} & {TS$_{clean}$}& NTS($P$)  &
			TS$_{noisy}$  &  NTS($\hat{P})$ \\
\midrule
          \multirow{4}{*}{ChestX-ray14-bal} & 0  &  $26.21 \pm 0.56$&   $2.63 \pm 0.32$& && \\ 
                & 5 &  $25.53 \pm 0.80$ & $3.03 \pm 0.37$&  $3.08 \pm 0.38$& $4.89 \pm 0.82$&  \textbf{3.83 $\pm$ 0.36}\\
                & 10 &  $23.87 \pm 1.01$ & $3.00 \pm 0.61$&  $2.92 \pm 0.46$&  $7.49 \pm 0.73$&   \textbf{4.13 $\pm$ 0.39}\\
                & 20 &  $21.37 \pm 0.35$& $3.33 \pm 0.37$ &   $3.26 \pm 0.28$&  $9.78 \pm 0.98$&  \textbf{4.94 $\pm$ 0.19}\\
   \midrule
          \multirow{4}{*}{ChestX-ray14} & 0  &  $21.05 \pm 0.55$&  $5.86 \pm 0.82$ & 
          & & \\
                & 5 &   $18.73 \pm 0.18$ & $4.43 \pm 0.71$&  {4.25 $\pm$ 0.49} &  \textbf{5.74 $\pm$ 0.68}& $12.9 \pm 1.44$ \\
                & 10 &   $19.11 \pm 2.85$ &$4.21 \pm 0.75$ & {4.30 $\pm$ 0.62} &  \textbf{7.06 $\pm$ 0.22}  & $10.0 \pm 1.27$ \\
                & 20 &  $16.24 \pm 2.02$ & $4.95 \pm 1.01$  & {4.98 $\pm$ 1.08} & $11.6 \pm 0.93$ &   \textbf{5.62 $\pm$ 0.79}\\
			\midrule 
                \multirow{4}{*}{HAM-10000} & 0 & $1.83 \pm 0.27$ &  $1.19 \pm 0.59$ 
                & & \\ 
                & 5 &  $1.54 \pm 0.38$ & $1.31 \pm 0.38$&  {2.03 $\pm$ 0.94} &  $4.10 \pm 0.87$&\textbf{ 1.31 $\pm$ 0.33} \\
                & 10 &  $1.53 \pm 0.13$ & $1.41 \pm 0.43$&  {2.35 $\pm$ 0.47}  & $6.61 \pm 1.56$ & \textbf{ 1.40 $\pm$ 0.68} \\
                & 20 & $3.49 \pm 0.80$ & $0.94 \pm 0.30$ &   {3.30 $\pm$ 1.68} & $12.3 \pm 8.82$ & \textbf{ 5.37 $\pm$ 2.02} \\
                \midrule 
                \multirow{4}{*}{PathMNIST} & 0 &    $4.48 \pm 0.76$&  $1.58 \pm 0.04$ & & & \\ 
                & 5 &  $4.00 \pm 0.01$ &$1.40 \pm 0.00$ & {1.65 $\pm$ 0.01} & $3.91 \pm 0.00$& \textbf{1.34 $\pm$ 0.01}\\
                & 10 &   $3.82 \pm 0.45$ & $1.73 \pm 0.37$& {1.65 $\pm$ 0.17} & $6.41 \pm 0.80$  & \textbf{1.68 $\pm$ 0.39} \\
                & 20 &  $3.42 \pm 0.44$ & $1.97 \pm 0.51$  & {2.42 $\pm$ 0.41} &  $12.5 \pm 1.36$ & \textbf{2.57 $\pm$ 0.29}\\
             			\bottomrule
		\end{tabular}%
          }
	\end{table*}

\textbf{The compared calibration methods}. (1)  TS$_{noisy}$ - A standard TS that was applied to the validation set with noisy labels.
(2) VS$_{noisy}$ and MS$_{noisy}$ -  Vector Scaling (VS) and  Matrix Scaling (MS) calibration methods \cite{Guo2017}  were applied to the validation set with noisy labels.
(3) The proposed Noisy Temperature Scaling (NTS).
 (4) TS$_{clean}$ - Temperature Scaling that was applied to  the clean validation set.
The main comparison was between the  TS$_{noisy}$ and NTS that were applied to the noisy  validation dataset  with the same label corruption.  TS$_{clean}$ is served as an upper bound on the  performance of NTS. 
We are not aware of any other noise robust calibration strategies. 
  For each method, we report the adaECE and ECE scores (computed using 15 bins) on the test set. Although adaECE was used as the objective function in our algorithm and is a more robust calibration measure, ECE is still a standard measure to evaluate  calibration results, so we also used it to compare our calibration results to previous studies.

\textbf{Results of calibration  with uniform noise.}
We first applied the calibration methods to a validation set corrupted by noise with  noise level $\epsilon=0.2$.  
Table \ref{table:adaece_tab1} reports the calibration results using adaECE. We report the mean and standard deviation over 3 different trained models and noise labels sampling. The results indicate that when the validation set contains noise, the standard TS$_{noisy}$ method fails to properly calibrate the network and may make the calibration worse. The same behavior was observed for VS$_{noisy}$ and NTS$_{noisy}$.
By contrast,  the proposed NTS method  produces calibration results that are similar to those obtained by the TS$_{clean}$ method that has access to the clean data. we tested the assumption that (in the case of noise level $\epsilon=0.2$) NTS has no effect compared to TS$_{noisy}$ and its  $p$-value was 0.005. 
Note that since  NTS works well, the temperatures computed by NTS and TS$_{clean}$ are similar so that  the performance of NTS can  even be slightly better than TS$_{clean}$ on the test set.

In this experiment we assume that the noise level is known and the network was trained using clean data. Below we show the results of end-to-end experiments where the noise matrix is estimated during the training procedure.

We next  show that our noise-tolerant NTS method works well across various levels of noise.
Figure \ref{fig:different_epsilons} displays the adaECE calibration results for various classification tasks 
with noise levels $\epsilon$ that ranged from 0\% to 40\%.
 While higher noise levels negatively impacted the calibration of TS$_{noisy}$, NTS demonstrated resilience even in the presence of high levels of noise. 
 Figure \ref{fig:different_epsilons} also illustrates the optimal temperature obtained for each experiment. It shows that the optimal temperature found by TS$_{noisy}$ increased in a linear manner with the noise level. The presence of noisy labels led to an underestimation of accuracy in each adaECE bin, causing the TS$_{noisy}$ algorithm to incorrectly conclude that the network was over-confident, thus resulting in an aggressive calibration using a high temperature.
Our proposed NTS algorithm avoids this misinterpretation, by computing an accurate  estimation of the network accuracy.
It is worth mentioning that typical neural network training techniques can handle a limited amount of incorrectly labeled data in the training set, typically less than 10\% (see e.g. \cite{Xue2022}). However, TS$_{noisy}$ does not perform well under these conditions.

Our method is justified by the law of large numbers (see (\ref{lln})). Therefore, we expected a correlation between the size of the validation set used for calibration and the stability of the NTS method when dealing with different noise samples. 
To verify this, we selected ChestX-ray14 as a dataset from our experiments and generated various validation set sizes (using the sklearn split function). For each sample size, and noise level $\epsilon$, we created multiple  noisy versions of the validation set. For each noisy version we applied the NTS algorithm and computed the adaECE score on the test set. Finally, we computed the standard deviation (STD) of all the adaECE scores.
The results, as shown in Figure \ref{fig:dataset_size}, indicate that as the validation set size increased, the standard deviation of the NTS algorithm decreased, thus resulting in more stable model accuracy estimates for each bin.

{\bf End-to-End training and calibration with Noisy labels.}
In the following experiment, we combined our noise-robust calibration method with the  method for network training using  training data with noisy labels described in the previous section \cite{li2021provably}. Thus, we simulated a real-world scenario in which the labels of both the training set and the validation set had the  same noise level. Making it even more realistic, we assumed here that the label noise  matrix $P$ is unknown and was estimated during the training step
\cite{li2021provably}. 
We trained  models for the three medical-imaging  datasets (ChestX-ray14, HAM-10000, PathMNIST), using training data with  label noise level $\epsilon \in \{0\%, 5\%, 10\%, 20\%\}$. In addition to training the network, we also estimated the noise matrix  $P$, denoted as $\hat{P}$.  We report the results of two variants of our calibration method NTS;  namely,    NTS($P$) where $P$ is known and NTS($\hat{P}$) where $P$ is estimated during the training phase from the noisy training set.
Tables \ref{table:adaece_tab2} and \ref{table:ece_tab2} report the calibration results using adaECE and ECE respectively.
The results show that the noisy labels corrupted  the calibration of the network, whereas our method, even in the case where $P$ was estimated achieved a much better calibration result. We can also observe that the network training procedure is much more resilient to label noise compared to calibration. A noise level of 20\% only results in a slight degradation of the network's accuracy. However, the calibration process does not provide any assistance and instead significantly reduces the network's calibration.
The ChestX-ray14 task contains a dominant "no-finding"
class in addition to the 14 pathology-related classes and is heavily unbalanced.
In that case, when the true noise matrix $P$
was used we obtained good calibration results but when the estimated noise matrix was used, the calibration procedure failed. The reason for that is that in  ChestX-ray14, the classes are unbalanced, and in that case there is currently no reliable way to estimate the noise matrix. When The No-finding class is removed (dataset 
ChestX-ray1-bal) the dataset is relatively balanced and the training algorithm manages to estimate the noise matrix.
Finally, results satisfy the statistical significance requirement by a large margin, with an average p-value across the 4 different datasets and $\epsilon=0.2$ of $P-value=0.00508$

{\bf Calibration results for a general noise  matrix.}
So far, we conducted experiments with  uniform noise.  Next, we evaluate our NTS method on  a validation set that was corrupted by a general label noise matrix $P$, which allows all types of noise to affect  the labels.
We tested four cases that were different in terms of their label noise matrix $P$: (1) Symmetric noise where labels are  converted to any other label with equal probability, (2) Neighbor noise where labels can only be flipped to adjacent labels, (3) Decreasing noise where labels are flipped to adjacent labels with decreasing probability and finally (4) Random noise - a general noise matrix whose off-diagonal probabilities were randomly selected. 
The same noise model was applied to the training and validation sets. Figure \ref{fig:general_noise} shows the noise transition matrices and the corresponding adaECE calibration results on the test set for the baseline calibration method  TS$_{noisy}$  and NTS.  
  We report the results of two variants of our calibration method NTS;  namely,    NTS($P$) where the noise matrix $P$ is known and NTS($\hat{P}$) where $P$ is estimated during the training phase from the noisy training set. We also report the results of TS$_{clean}$ where calibration was applied to the clean validation set.
In all cases, regardless of the noise matrix, NTS achieved calibration results that were on par with the results obtained by applying  TS$_{clean}$ on the clean validation set and much better than the results obtained by TS$_{noisy}$.

\begin{figure*}
    \centering
         \includegraphics[, trim= 10 0 720 0,clip,scale=0.16]{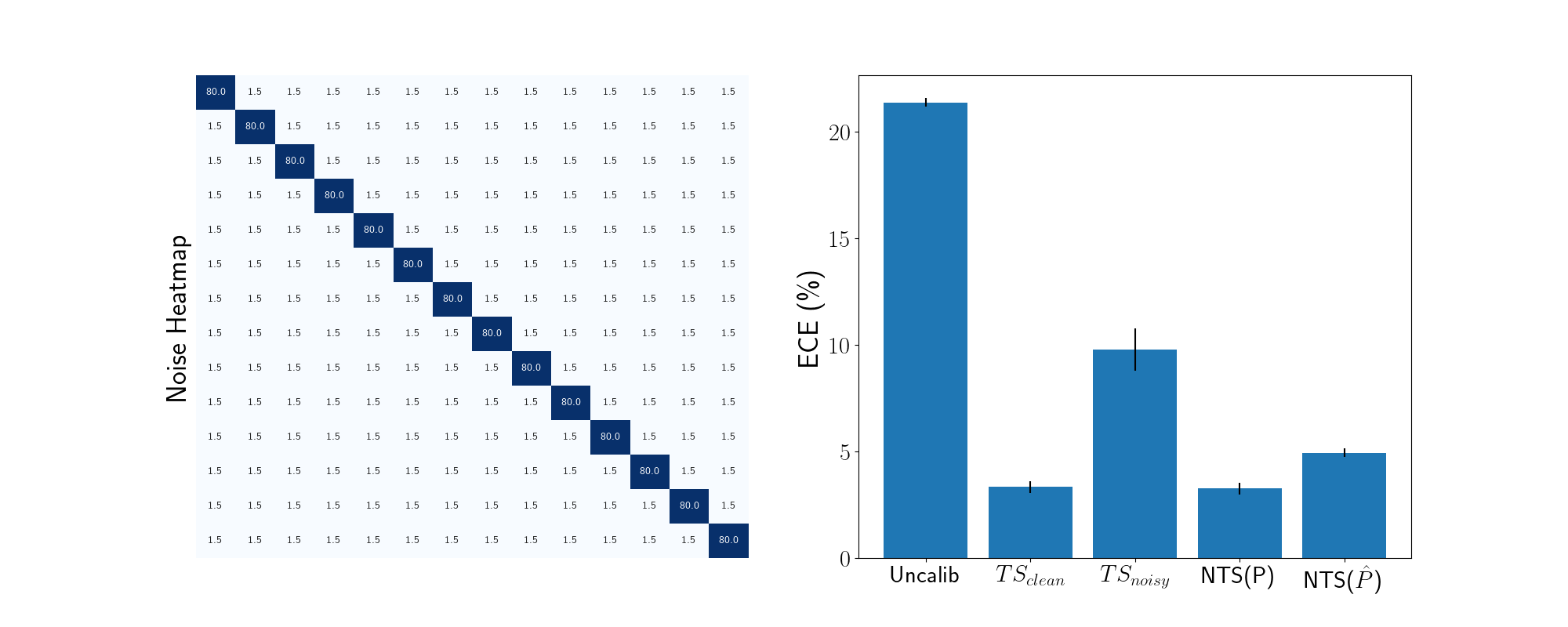}
     \includegraphics[, trim=  730 0 100 0,clip,scale=0.16]{images/TMI/images/general_noise/cxr14_only_patology_14_uniform_noise_img_with_std.png}     
     \includegraphics[, trim= 10 0 720 0,clip,scale=0.16]{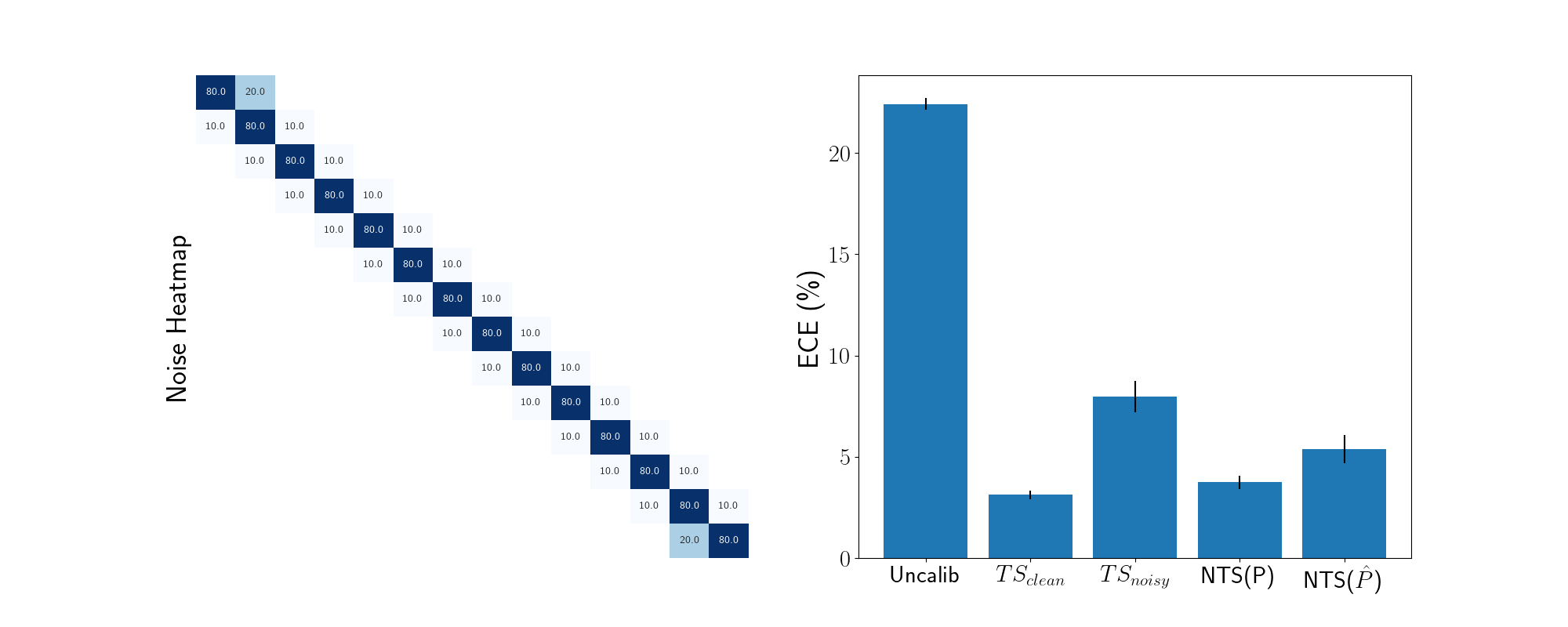}
     \includegraphics[, trim=  730 0 100 0,clip,scale=0.16]{images/TMI/images/general_noise/cxr14_only_patology_14_neighbors_noise_img_with_std.png} \hspace{0.1cm}\\
    (1) symmetric noise \hspace{6cm} (2)  neighbor  noise
       \\
     \centering
       \includegraphics[, trim= 10 0 720 0,clip,scale=0.16]{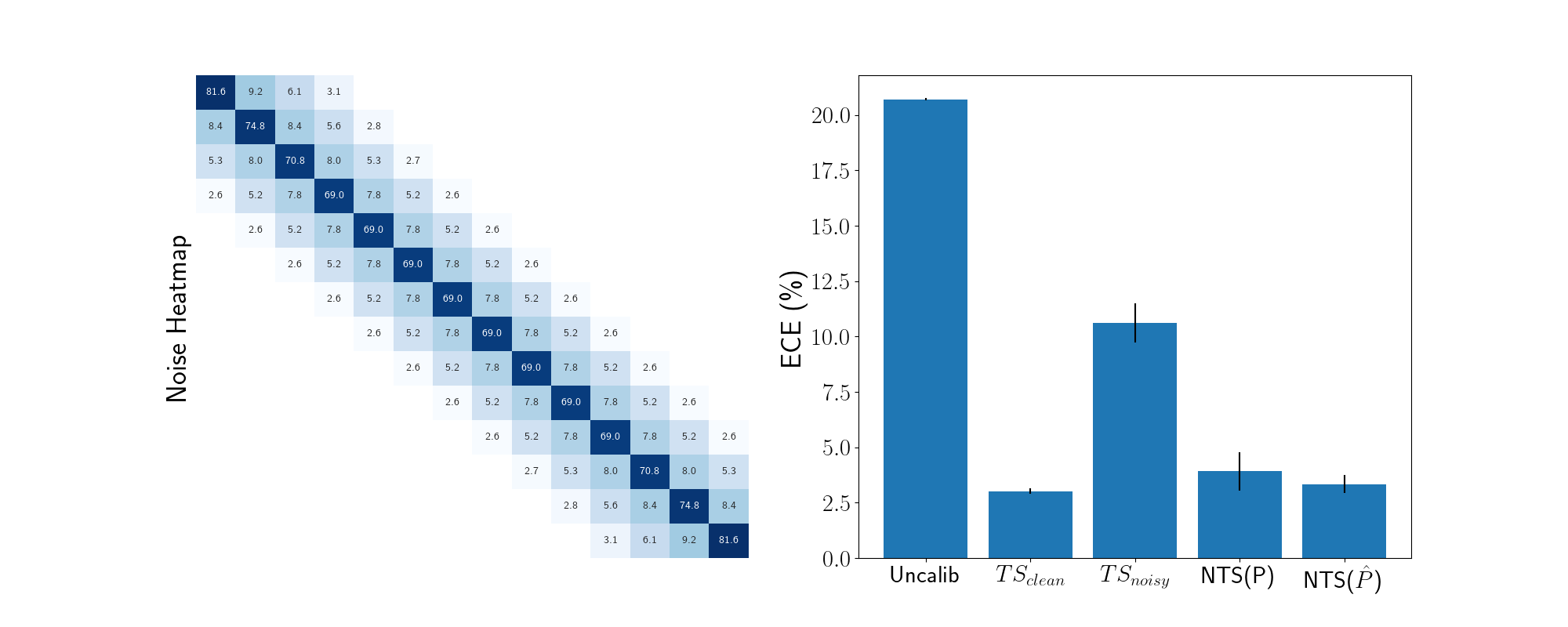}
     \includegraphics[, trim=  730 0 100 0,clip,scale=0.16]{images/TMI/images/general_noise/cxr14_only_patology_14_decremental_noise_img_with_std.png}  
    \includegraphics[, trim= 10 0 720 0,clip,scale=0.16]{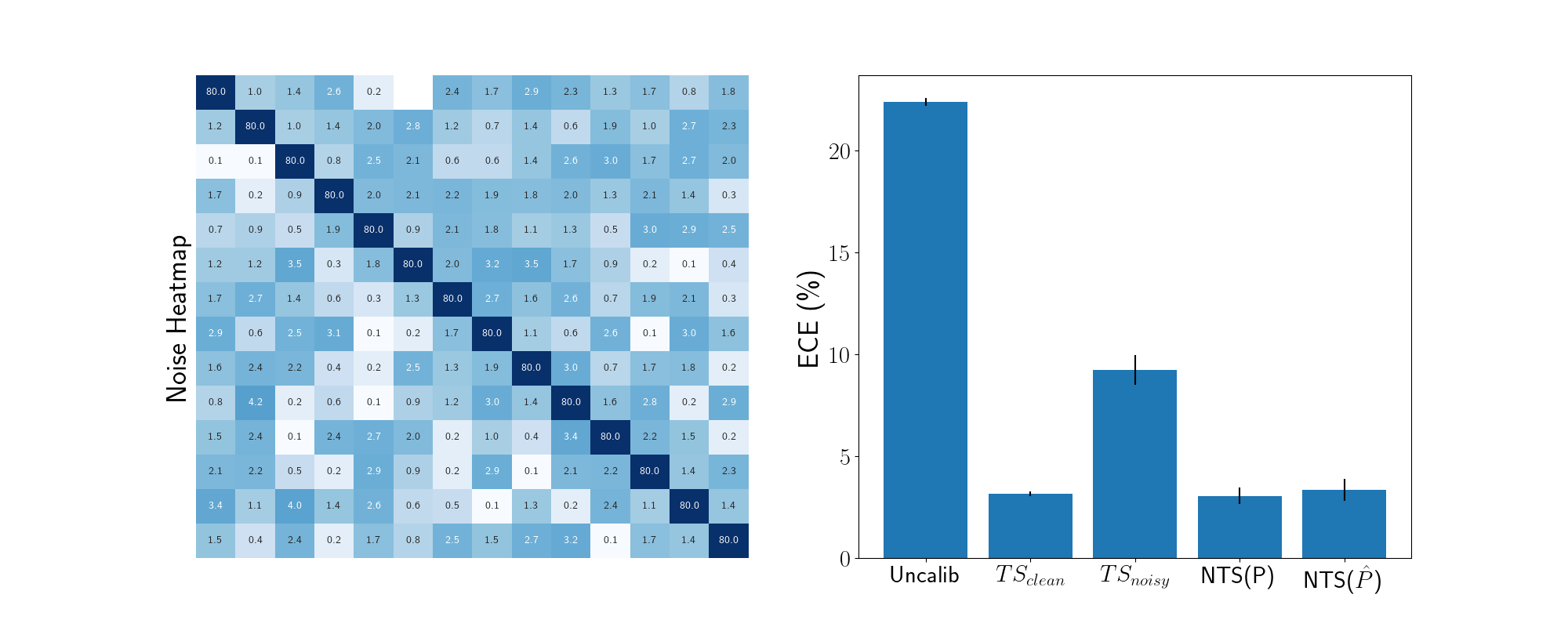}
     \includegraphics[, trim=  730 0 100 0,clip,scale=0.16]{images/TMI/images/general_noise/cxr14_only_patology_14_random_noise_img_with_std.png} \hspace{0.1cm} 
    \\
    (3) decreasing noise \hspace{6cm} (4)  random  noise
    \caption{Calibration results for several noise transition matrices on ChestX-ray14-bal and ResNet-18. In each example we show the noise transition matrix   (Left) and the adaECE measure on the test set  for the compared calibration methods (right).}
    \label{fig:general_noise}
\end{figure*}

\begin{figure*}
    \centering
         \includegraphics[scale=0.35]{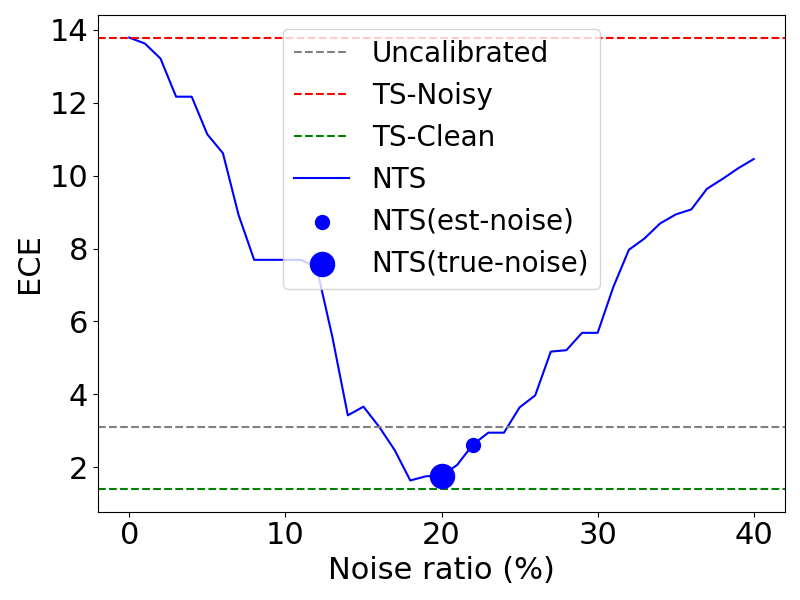} \hspace{2mm}
     \includegraphics[scale=0.35]{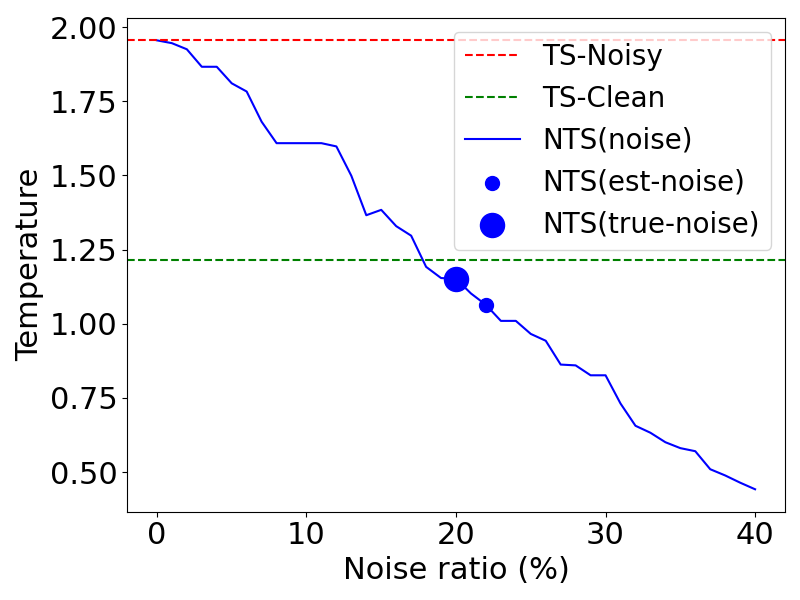}     
        \caption{ 
        Calibration performance, measured by adaECE (left),  and the corresponding temperature (right) as a function of the noise ratio used by the NTS algorithm.
                 }
    \label{fig:noise_sweep}
\end{figure*}

{\bf Sensitivity to the noise estimation accuracy.}
Our NTS method is based on estimating the noise level, which is carried out as part of the network training.  Crucially, the NTS method must not be highly sensitive to variations in the noise estimation, i.e., there is a range of values around the correct noise level in which NTS performs well. In the following experiment, we corrupted the validation set using noise level true-$\epsilon=0.2$.  Figure \ref{fig:noise_sweep} shows the adaECE and the temperature obtained by the  NTS as a function of the actual $\epsilon$ used in the calibration process. We denote this method NTS$(\epsilon)$. 
We also report the results of NTS(est-$\epsilon$) and NTS(true-$\epsilon$), which are variants of  NTS using the estimated noise matrix and the exact noise matrix, respectively. The experiments were run on the test set of PathMNIST with the ResNet18 architecture.
The results show that  there is indeed a large range of $\epsilon$ values in which NTS performs much better than TS$_{noisy}$.

\chapter{Conformal Prediction under Noisy Labels}
\label{ch:nrcp_nacp}

In this chapter, we address the application of Conformal Prediction (CP) methods in classification networks under the challenge of noisy labels. Conformal Prediction is a powerful tool for safety-critical applications, such as medical imaging, where maintaining a predefined level of certainty in model predictions is essential. The ability to generate a set of possible class candidates, ensuring the true class is included with high confidence, provides valuable support for clinical practitioners by narrowing down the possible diagnoses while controlling the risk of mistakes. We introduce novel methods that enhance the robustness of CP under label noise, demonstrating their effectiveness on medical and natural image datasets.

\section{Problem Statement}

In machine learning for safety-critical applications, the model must only make predictions it is confident about. One way to achieve this is by returning a (hopefully small) set of possible class candidates that contain the true class with a predefined level of certainty. This is a natural approach for medical imaging, where safety is of the utmost importance and a human makes the final decision. This allows us to aid the practitioner, by reducing the number of possible diagnoses he needs to consider, with a controlled chance of mistake.
The general approach to return a prediction set without any assumptions on the data distribution (besides i.i.d. samples) is called Conformal Prediction (CP) \citep{angelopoulos2023conformal,vovk2005conformal}. It creates a prediction set with the guarantee that the probability of the correct class being within this set meets or exceeds a specified confidence threshold. The goal is to return the smallest set possible while maintaining the confidence level guarantees. Recently, with the growing use of neural network systems in safety-critical applications such as medical imaging, CP has become an important calibration tool \citep{lu2022improving,lu2022fair,olsson2022estimating}. We note that CP is a general framework rather than a specific algorithm. The most common approach builds the prediction set using a conformity score, and different algorithms mostly vary in terms of how the conformity score is defined.

When dealing with conformal predictions, a critical challenge arises in applications such as medical imaging due to label noise. In these domains, datasets frequently contain noisy labels stemming from ambiguous data that can confuse even clinical experts. Furthermore, physicians may disagree on the diagnosis for the same medical image, leading to inconsistencies in the ground truth labeling. Noisy labels also occur when applying differential privacy techniques to overcome privacy issues \citep{ghazi2021deep}. While significant efforts have been devoted to the problem of noise-robust network training \citep{9729424,Xue2022}, the challenge of calibrating the models has only recently begun to receive attention.

In this chapter, we tackle the challenge of applying CP to classification networks using a validation set with noisy labels. \citet{einbinder2022conformal} suggested ignoring label noise and simply applying the standard CP algorithm on the noisy labeled validation set. This strategy results in large prediction sets especially when there are many classes. The most related studies to ours are \citep{sesia2023adaptive, Clarkson2024} which  present a noisy CP algorithm using conservative coverage guarantee bounds which can result in large prediction sets, failing in classification tasks with many classes.  In the following sections, we present two novel approaches and algorithms for CP on noisy data. The first approach propose a new score that is robust to label noise. The second approach propose to estimate the threshold in the presence of noisy labels, similar to \citet{sesia2023adaptive} and \cite{Clarkson2024}, but yields an effective coverage guarantee even in tasks with a large number of classes in the uniform noise setting. We applied the algorithms to several standard medical and scenery imaging classification datasets and show that the latter method outperformed previous methods by a significant margin and achieved results comparable to those obtained by using a clean validation set. The greatest value and novelty of our approach lies in tasks with many classes, such as CIFAR-100, TinyImageNet, and ImageNet, where all other methods fail.

The study described in this chapter was published in \cite{Penso_2024} and \cite{PensoNACP2025}.

\section{Score That Is Robust to Label Noise}
\label{subsec:nrcp}

Let $(x_1,\tilde{y}_1),...,(x_n,\tilde{y}_n)$ be a noisy validation set where the labels were corrupted by uniform noise with a noise level $\epsilon$.
 We aim to find a noise-robust conformal score that can be applied to the noisy labeled data. 
 Since $y_t$ is not observed, we cannot directly compute the score $S(x_t,y_t)$. Instead, we can estimate it using its noisy version $\tilde{y}_t$:
\begin{equation}
\mathbf{E} ( S(x_t,y_t)| \tilde{y}_t) = \sum_{i=1}^k p(y_t=i|\tilde{y}_t) S(x_t,i).
\label{est}
\end{equation}
Assuming a non-informative uniform  prior on the correct label $y_t$, i.e., $p(y_t=i)=1/k$), we obtain:
\begin{equation}
  p(y_t=i|\tilde{y}_t)  = (1\!-\!\epsilon) 1_{\{\tilde{y}_t=i\}}+ \frac{\epsilon}{k}.
  \label{ytnoise}
  \end{equation}
Substituting  (\ref{ytnoise}) in  (\ref{est}), yields an estimate 
$ \hat{S}(x_t,\tilde{y}_t,\epsilon)$ of the noise-free score:
\begin{equation}
\hat{S}(x_t,\tilde{y}_t,\epsilon) = \mathbf{E} ( S(x_t,y_t)| \tilde{y}_t) = (1\!-\!\epsilon) S ( x_t,\tilde{y}_t)  + \epsilon S(x_t)
\label{avg_score}
\end{equation}
s.t. $S(x_t) =   \frac{1}{k} \sum_{i=1}^k S(x_t,i)$.
Note that to obtain the score estimation $\hat{S}(x_t,\tilde{y}_t,\epsilon)$,  we need to either know the noise level $\epsilon$ or estimate it from the noisy-label data. We elaborate further on this issue in the next section.

\begin{table}[H]
\caption{Conformal Prediction methods for validation sets with noisy labels. Given an image $x$, $y$ is the true label, $\hat{y}$ is its noisy version and $\epsilon$ is the noise level.  $S$ is a conformal score and $\hat{S}$ is its noise robust variant.}
\centering
    \resizebox{\linewidth}{!}{%
        \begin{tabular}{l|c|c|c|c}
        \hline
        Stage & CP (Oracle) & Noisy-CP \cite{einbinder2022conformal} & NRESCP & NRSCP  
        \\  \hline
        Learning phase& $S(x,y) \xrightarrow{} q$ & $S(x,\tilde{y}) \xrightarrow{} q_{noise}$ & $\hat{S}(x,\tilde{y},\epsilon) \xrightarrow{} q_{\epsilon}$ & $\hat{S}(x,\tilde{y},\epsilon) \xrightarrow{} q_{\epsilon}$  \\  \vspace{1mm} \hspace{-2mm}
        Inference phase & $\{y|S(x,y) \le q\} $ & $\{y|S(x,y)\! \le \!q_{noise}\} $ & $\{y|\hat{S}(x,y,\epsilon) \le q_{\epsilon}\}$ & $\{y|S(x,y) \le q_{\epsilon}\} $ \\  \hline
        \end{tabular} %
    }
\label{noisy_methods}
\end{table}

We next apply the CP algorithm on the estimated conformal scores and set $q_{\epsilon}$ to be the $(1\!-\!\alpha)$ quantile of $\hat{S}(x_1,\tilde{y}_1,\epsilon),...,\hat{S}(x_n,\tilde{y}_n,\epsilon)$.
According to the general CP theory, the prediction set of a given test sample $x$ is:
  \begin{equation}
  \hat{C}_{\epsilon}(x)=\{ y \,|\, \hat{S}(x,{y},\epsilon) \le q_{\epsilon}\}  =
 \{ y \, | \, S (x, y)  \le \frac{q_{\epsilon}-\epsilon S(x)}{1\!-\!\epsilon}\}.
 \label{nrescp}
\end{equation}
Let $x$ be a test point and $y$ and $\tilde{y}$ be its true label and noisy label respectively.
  The general CP theory guarantees that $1\!-\!\alpha \le p( \tilde{y}\in \hat{C}_{\epsilon}(x))$. This guarantee, however, is for the noisy labeled data. Assume that for every $x$ the order of the network class predictions  $p(y=i|x;\theta)$ coincides with the order of the true probabilities $p(y=i|x)$. In that case, the same argument that appears in  \cite{einbinder2022conformal} for Noisy-CP, implies a coverage guarantee in the noise-free case.
However, the prediction set obtained by the estimated score  (\ref{nrescp}), is usually still too large.

In the case of noisy labels, during the CP learning phase, we need to  estimate the score of the correct class. However,  to form the prediction set at test time we need to compute the scores of all the possible classes. Hence, 
in a way similar to the Noisy-CP algorithm,
\begin{algorithm}[t]
\caption{Noise-Robust Score Conformal Prediction (NRSCP)}
       \begin{algorithmic}[1]
      \State Input: A conformal score $S(x,y)$, a coverage level $1\!-\!\alpha$ and a validation set $(x_1,y_1),...,(x_n,y_n)$,
      s.t. the labels are corrupted by uniform noise with parameter~$\epsilon$.  
    \State Compute the estimated scores:
    \[  s_t = \hat{S}(x_t,y_t,\epsilon) = (1-\epsilon) S ( x_t,y_t)  + \frac{\epsilon}{k} \sum_{i=1}^k {S}(x_t,i), \hspace{0.5cm} t=1,...,n\]  
    \State Set $q$ to be the 
    $\lceil(n+1)(1\!-\!\alpha)/n\rceil$ quantile of $s_1,...,s_n$.
        \vspace{1mm}
    \State The prediction set of a test sample $x$
         is $C(x)=\{y\,|\,S(x,y)<q\}$. 
   \end{algorithmic}
   \label{nrcp_alg1}
       \end{algorithm}
we can thus construct the prediction set of a test sample $x$ using the exact score $S(x,y)$  instead of the estimated score $\hat{S}(x,\tilde{y},\epsilon)$:
\begin{equation}
C_{\epsilon}(x)=\{ y \,| \,S(x,y) \le q_{\epsilon}\}.
\label{nrcp}
\end{equation}
 We denote the algorithm variants based on Eqs. (\ref{nrescp}) and (\ref{nrcp}) as the  Noise Robust Estimated Score CP  (NRESCP) and the Noise Robust Score  CP (NRSCP) respectively. 
 The only difference between them lies in how the test-time prediction set is formed.
 The various noisy label CP methods discussed above are summarized in Table~\ref{noisy_methods}.
We empirically show below that NRSCP satisfies the coverage requirement and yields an average size that is much smaller than the one obtained by NRESCP. The NRSCP is summarized in Algorithm Box \ref{nrcp_alg1}.

We next analyze the proposed noise robust conformal score.
It is easy to verify that the prediction set obtained by NRSCP is smaller than the one obtained by NRESCP, i.e.,  $C_{\epsilon}(x)  \subset \hat{C}_{\epsilon}(x)$ if and only if  $S(x) \le q_{\epsilon}$.
In the case of HPS, $S(x)=\frac{1}{k}\sum_i (1-p(y=i|x))=(k-1)/k$ and therefore $\hat{S}(x,\tilde{y},\epsilon) = (1-\epsilon)  S(x,\tilde{y})+ \epsilon\frac{k-1}{k}$.
This implies that  $q_{\epsilon}  = (1-\epsilon)  q_{noise} + \epsilon\frac{k-1}{k}$ 
such that $q_{noise}$ and $q_{\epsilon}$ are the thresholds computed by Noisy-CP \cite{einbinder2022conformal} and NRESCP (\ref{nrescp}) respectively. It is easy to verify that Noisy-CP  and NRESCP yield the same prediction set, i.e. 
$\{y|S(x,y) \le q_{noise}\}   = \{y|\hat{S}(x,y,\epsilon) \le q_{\epsilon}\}$.  
  In the case of the APS conformal score, 
  it is easy to see that:  $
  (\hat{p}+1)/2 \le  S(x) \le (\hat{p} + k-1)/k$,
 s.t.  $ \hat{p}=  \max_i p(y=i|x;\theta)$ is the network confidence on its single-class prediction.
      Hence,  when the prediction sets obtained by NRSCP are smaller than those obtained by NRESCP, the prediction confidence satisfies $(\hat{p}+1)/2 \le S(x) \le  q_{\epsilon}$. 

\section{Procedure of Threshold Estimation That Is Robust to Label Noise}
\label{subsec:nacp}
In the previous section we presented a new conformal score that takes into account the noise rate making it robust to the label noise. Next, we tackle the problem of conformal prediction with noisy labels from a different angle by defining a conformal prediction procedure that is robust to noisy labels along with theoretic and coverage guarantees. Here we show how, given a simple noise model and a known noise level, we can get the correct CP threshold based on noisy data. We will generalize this beyond the simple noise model in the following section. Consider a network that classifies an input $x$ into $k$ pre-defined classes.
Given a conformity score $S(x,y)$ and a specified coverage $1-\alpha$, the goal of the conformal prediction algorithm is to find a minimal $q$ such that $p(y\in C_q(x))\ge 1-\alpha$.
Let $(x_1,\tilde{y}_1),...,(x_n,\tilde{y}_n)$ be a validation set with noisy labels and
let $y_i$ be the unknown correct label of $x_i$. Let $s_i=S(x_i,\tilde{y}_i)$ be the conformity score of $(x_i,\tilde{y}_i)$.
We assume that the label noise follows a uniform distribution, where with a probability of $\epsilon$,  the correct label is replaced by a label that is randomly sampled from  the $k$ classes:
 \begin{equation}
 p(\tilde{y}=j| y=i) = \mathds{1}_{\{i=j\}}(1-\epsilon) + \frac{\epsilon}{k}.
 \label{nnoise}
 \end{equation}
Uniform noise is relevant, for example, when applying differential privacy techniques to overcome privacy issues \citep{ghazi2021deep}.
In that setup the noise level $\epsilon$ is usually known. 
In other applications such as medical imaging, where the noise parameter $\epsilon$ is not given, it can be estimated with sufficient accuracy from the noisy-label data during training \citep{zhang2021learning, li2021provably, Lin2023Holistic}.
 We can write $\tilde{y}$ as $\tilde{y}=(1-z)\cdot y+z \cdot u$,
s.t. $u$ is a random label uniformly sampled from $\{1,...,k\}$ and
  $z$ is a binary random variable $(p(z=1)=\epsilon)$ indicating whether the label of the sample $(x,y)$ was replaced by a random label or not.
For each candidate threshold, $q$ denote:
\begin{equation}
F^c(q) = p(y\in C_q(x)),  \hspace{1cm}   F^n(q) = p(\tilde{y}\in C_q(x)), \nonumber
\end{equation}
$$
F^r(q) = p(u\in C_q(x)), 
$$
where $F^c$, $F^n$, and $F^r$ represent the clean, noisy and random labels. Note as well that each one is the CDF of the appropriate conformal score function, e.g., $F^c(q) = p(y\in C_q(x))=p(S(x,y)\leq q)$.

It is easily verified that
  \begin{equation}
  F^n(q) =  p(z=0)F^c(q) + p(z=1) F^r(q)
    \label{eqAB}
\end{equation}
  $$= (1 - \epsilon )F^c(q) + \epsilon F^r(q).$$

For each value $q$, we can estimate $F^n(q)$ from the noisy validation set:
 \begin{equation}
\hat{F}^n(q) = \frac{1}{n} \sum_i \mathds{1}_{\{\tilde{y}_i\in C_q(x_i)\}}
          = \frac{1}{n}\sum_i \mathds{1}_{\{s_i \le q \}}.
\label{hatA}
   \end{equation}
   Note that $q$ is the $\hat{F}^n(q)$-quantile of $s_1,...,s_n$.
Similarly we can also estimate  $F^r(q)$:
 \begin{equation}
\hat{F}^r(q) = \frac{1}{n} \sum_i p ( u_i \in C_q(x_i) ) =
\frac{1}{n} \sum_i  \frac {|C_q(x_i)|}{k},
\label{hatD}
 \end{equation}
s.t. $u_i$ is  uniformly sampled from $\{1,...,k\}$.

By substituting  (\ref{hatA}) and (\ref{hatD}) in  (\ref{eqAB}) we obtain an estimation of $F^c(q) = p ( y\in C_q(x))$ based on the noisy validation set and the noise level $\epsilon$:
 \begin{equation}
  \hat{F}^c(q) = \frac{\hat{F}^n(q) - \epsilon  \hat{F}^r(q)}{1-\epsilon}.
  \label{hatB}
\end{equation}

 For each candidate  $q$ we first compute $\hat{F}^n(q)$ and $\hat{F}^r(q)$ and
 then by using (\ref{hatB})  obtain the coverage estimation $\hat{F}^c(q)$. Given a coverage requirement $(1-\alpha)$,  we can thus use the noisy validation set to find a threshold $q$ such that $\hat{F}^c(q)=1-\alpha$. Note that since $F^c(q)$ is monotonous, it seems reasonable to search for the threshold $q$ using the bisection method. However, as $\hat{F}^c(q)$ is an approximation based on the \emph{difference} between two monotonic functions, it might not be exactly monotonous. We therefore find the threshold $q$ using an exhaustive grid search.
If there are several solutions we select the largest value.
(In practice selecting one of the solutions has almost no effect on the results.)  We note that even with an exhaustive search the entire runtime is negligible compared to the training time.

We can narrow the threshold search domain  as follows:
 \begin{lemma}
For every threshold $q$ we have: $\hat{F}^n_q/k \le \hat{F}^r(q)$.
\end{lemma}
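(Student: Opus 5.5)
The argument is a per-sample comparison followed by averaging, using the empirical definitions (\ref{hatA}) and (\ref{hatD}). For each validation point $x_i$, the prediction set $C_q(x_i)=\{y \mid S(x_i,y)\le q\}$ is a subset of the $k$ classes. Our plan is to show that, sample by sample,
\[
\frac{1}{k}\,\mathds{1}_{\{\tilde{y}_i\in C_q(x_i)\}} \;\le\; \frac{|C_q(x_i)|}{k}.
\]
Summing this over $i$ and dividing by $n$ gives exactly $\hat{F}^n(q)/k \le \hat{F}^r(q)$, which is the statement of the lemma.

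The per-sample inequality splits into two cases. If $\tilde{y}_i\notin C_q(x_i)$, the left-hand side is $0$ and the right-hand side is nonnegative, so there is nothing to prove. If $\tilde{y}_i\in C_q(x_i)$, the set $C_q(x_i)$ is nonempty because it contains $\tilde{y}_i$. Hence $|C_q(x_i)|\ge 1$, and the left-hand side is $1/k\le |C_q(x_i)|/k$. Equivalently, in both cases $\mathds{1}_{\{s_i\le q\}}\le |C_q(x_i)|$, since $s_i=S(x_i,\tilde{y}_i)\le q$ means $\tilde{y}_i$ is one of the counted classes.

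No step is a genuine obstacle. The only point needing care is that the lemma is stated for the empirical quantities $\hat{F}^n$ and $\hat{F}^r$, which are computed on the same sample points $x_i$, so the comparison can be made point by point without any probabilistic argument. We would also note that the population analogue $F^n(q)/k\le F^r(q)$ follows the same way, since $p(u\in C_q(x)\mid x)=|C_q(x)|/k$.

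Finally, we would state the consequence used for narrowing the search. Substituting the bound into (\ref{hatB}) gives
\[
\hat{F}^c(q)\le \frac{\hat{F}^n(q)(1-\epsilon/k)}{1-\epsilon}.
\]
Therefore any $q$ achieving $\hat{F}^c(q)=1-\alpha$ must satisfy $\hat{F}^n(q)\ge (1-\alpha)(1-\epsilon)/(1-\epsilon/k)$. This gives a lower endpoint for the grid search over quantiles of $s_1,\dots,s_n$.
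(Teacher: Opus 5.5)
Your proof is correct and is essentially the paper's argument. The paper uses index sets $A=\{i \mid \tilde{y}_i\in C_q(x_i)\}\subseteq B=\{i \mid |C_q(x_i)|\ge 1\}$ and bounds $|A|\le|B|\le\sum_i|C_q(x_i)|=nk\hat{F}^r(q)$, which is your pointwise inequality $\mathds{1}_{\{\tilde{y}_i\in C_q(x_i)\}}\le|C_q(x_i)|$ summed over $i$; your closing remark also gets the lower endpoint $(1-\alpha)(1-\epsilon)/(1-\epsilon/k)$ in one step, where the paper iterates the same bound into a geometric series.
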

\begin{proof}
Denote $A=\{i|\hat{y}_i\in C_q(x_i)\}$ and $B=\{i| 1\le |C_q(x_i)|\}$. Note that  $\hat{F}^n(q) = |A|/n$.
$$ |B| =\sum_{i \in B} 1  \le \sum_{i\in B} |C_q(x_i)| \le \sum_{i=1}^n |C_q(x_i)|
 =nk \hat{F}^r(q).$$
Finally  $A \subset B$ implies that:
$ \hat{F}^n(q) = |A|/n \le |B|/n \le k \hat{F}^r(q). $
\end{proof}
 \begin{theorem}
Let $q_1$ be the $(1\!-\!\alpha)(1-\epsilon)/(1-\frac{\epsilon}{k})$  quantile of  $s_1,...,s_n$ and let $q_2$ be the
$(1-\alpha)+\alpha\epsilon$ quantile.
If  $q$ satisfies  $\hat{F}^c(q)=1-\alpha$ then
 $q_1\le q\le q_2$.
\end{theorem}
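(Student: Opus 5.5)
The plan is to turn the condition $\hat{F}^c(q)=1-\alpha$ into two-sided bounds on $\hat{F}^n(q)$. These then become bounds on $q$, because $q$ is the $\hat{F}^n(q)$-quantile of $s_1,\dots,s_n$ and the empirical CDF $\hat{F}^n$ is nondecreasing. By (\ref{hatB}), the condition $\hat{F}^c(q)=1-\alpha$ is equivalent to
\[
\hat{F}^n(q) = (1-\epsilon)(1-\alpha) + \epsilon \hat{F}^r(q).
\]
So it suffices to sandwich $\hat{F}^r(q)$ between quantities that depend only on $\hat{F}^n(q)$ or are constants.

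\textbf{Upper bound.} Each prediction set satisfies $|C_q(x_i)|\le k$, so $\hat{F}^r(q)\le 1$. Substituting into the identity gives
\[
\hat{F}^n(q)\le (1-\epsilon)(1-\alpha)+\epsilon = (1-\alpha)+\alpha\epsilon .
\]
Since $q_2$ is the $(1-\alpha)+\alpha\epsilon$ quantile, we have $\hat{F}^n(q_2)\ge(1-\alpha)+\alpha\epsilon$. Monotonicity of $\hat{F}^n$ then gives $q\le q_2$.

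\textbf{Lower bound.} Here I use the preceding Lemma, $\hat{F}^r(q)\ge \hat{F}^n(q)/k$. Substituting gives
\[
\hat{F}^n(q)\ge (1-\epsilon)(1-\alpha)+\frac{\epsilon}{k}\hat{F}^n(q),
\]
which rearranges to
\[
\hat{F}^n(q)\ge \frac{(1-\alpha)(1-\epsilon)}{1-\epsilon/k}.
\]
Because $q_1$ is the quantile at exactly this level, and the quantile is the smallest sample value whose empirical CDF reaches the level, we get $q\ge q_1$.

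\textbf{Main obstacle.} The delicate step is the quantile bookkeeping for the empirical step function, since ties and flat regions make "$\hat{F}^n(q)\ge\beta$ implies $q\ge$ the $\beta$-quantile" hold only under the right convention. I would fix the quantile definition $Q(\beta)=\min\{s_j:\hat{F}^n(s_j)\ge\beta\}$ and restrict $q$ to the grid of sample scores, which is where the grid search lives. With that convention, $\hat{F}^n(q)\ge\beta$ gives $q\ge Q(\beta)$ directly.

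For the upper direction I must exclude $q>Q(\beta_2)$ with $\beta_2=(1-\alpha)+\alpha\epsilon$. If $q>Q(\beta_2)$, then $\hat{F}^n(q)\ge\beta_2$ already, and I need strictness. If the step strictly increases past $Q(\beta_2)$, the strict excess contradicts the upper bound on $\hat{F}^n(q)$. The remaining case is equality with a flat region; I would handle it by invoking the convention, stated in the text, that among equivalent solutions the one corresponding to the sample point is taken, so $q$ is identified with $Q(\beta_2)$ there.
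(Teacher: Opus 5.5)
Your main line is the paper's own proof. You rewrite $\hat{F}^c(q)=1-\alpha$ as $\hat{F}^n(q)=(1-\alpha)(1-\epsilon)+\epsilon\hat{F}^r(q)$, bound $\hat{F}^r(q)\le 1$ for the upper end, and use the Lemma $\hat{F}^r(q)\ge \hat{F}^n(q)/k$ for the lower end. Solving the resulting linear inequality in one step gives the closed form of the geometric series the paper obtains by iterating the Lemma, and it is cleaner. The lower-bound step also needs no restriction of $q$ to the sample scores. If $s^*$ is the largest $s_j\le q$, then $\hat{F}^n(s^*)=\hat{F}^n(q)$ already reaches the level, so $q_1\le s^*\le q$.

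The one step you flagged, the upper bound under ties, is not handled correctly.
\begin{itemize}
\item The paper does not say that the solution at a sample point is taken. It says that among several solutions the \emph{largest} is selected, which would sit at the right end of any flat region. Your patch therefore rests on a misreading.
\item Restricting $q$ to sample scores proves a weaker statement than the theorem, which concerns every $q$ with $\hat{F}^c(q)=1-\alpha$.
\item Under that restriction your ``flat region'' case could not occur anyway: a sample score $q>q_2$ already has $\hat{F}^n(q)\ge \hat{F}^n(q_2)+1/n$.
\end{itemize}

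The missing observation is that equality in the upper bound is impossible. Suppose $\hat{F}^n(q)=(1-\alpha)+\alpha\epsilon$ with $\epsilon>0$. The identity then forces $\hat{F}^r(q)=1$, i.e.\ every $C_q(x_i)$ contains all $k$ labels. In particular $s_i\le q$ for all $i$, so $\hat{F}^n(q)=1$, and substituting back yields $\alpha(1-\epsilon)=0$. Hence for $\alpha>0$ and $0<\epsilon<1$ you get the strict inequality $\hat{F}^n(q)<(1-\alpha)+\alpha\epsilon\le\hat{F}^n(q_2)$. Monotonicity then gives $q<q_2$ for every real solution, with no convention needed.

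The paper's proof passes over this point silently. Ties genuinely matter only in the degenerate case $\epsilon=0$, where points in the flat step just to the right of $q_2$ can also solve the equation.
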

\begin{proof} 
 Assume $q$  satisfies
$\hat{F}^c({q})=1-\alpha$. Eq. (\ref{hatB}) implies that
  \begin{equation}
  1-\alpha =  \hat{F}^c({q})=\frac{\hat{F}^n({q}) - \epsilon  \hat{F}^r({q})}{1-\epsilon } 
    \label{hatcx}
\end{equation}
  $$  \Rightarrow \,\,
     \hat{F}^n({q})  = (1-\alpha) (1-\epsilon) + \epsilon\hat{F}^r({q}).
$$
Since $0 \le \hat{F}^r(q) \le 1$ we get that:
  \begin{equation}
  (1-\alpha)(1-\epsilon) \le \hat{F}^n({q}) \le  (1-\alpha)+\alpha\epsilon =\hat{F}^n(q_2).
  \label{aq}
    \end{equation}
For every $q$ we have  $ \hat{F}^n(q)/k \le \hat{F}^r(q)$ (Lemma 3.1).
Hence, $ (1-\alpha)(1-\epsilon) \le \hat{F}^n(q)$  (\ref{aq})    implies that $ (1-\alpha)(1-\epsilon)/k \le   \hat{F}^r(q) $.
Combining this inequality with Eq. (\ref{hatcx}) yields a better lower bound:
$(1-\alpha) (1-\epsilon)(1+\epsilon/k) \le \hat{F}^n(q)   $.
Iterating  this process yields:
$$
   (1-\alpha) (1-\epsilon)\left(1+\frac{\epsilon}{k} + \left(\frac{\epsilon}{k}\right)^2 + \dots   \right)
$$ $$ = (1-\alpha) \frac{1-\epsilon}{1-\frac{\epsilon}{k}} = \hat{F}^n(q_1) \le \hat{F}^n(q).$$
Finaly, $\hat{F}^n(q)$ is a monotonically increasing function of $q$  which implies that $q_1 \le q\le q_2$.
\end{proof}

As an alternative to the grid search we can sort the noisy conformity scores $s_i=S(x_i,\tilde{y}_i)$ and look for the minimal $i$ such that $\hat{F}^c(s_i)\ge 1-\alpha$. In the 
noise-free case  $\hat{F}^c$ is piece-wise constant, with jumps determined exactly by the order statistics $s_i$, namely,  $\hat{F}^c(s_i)=i/n$ and thus this algorithm coincides with the standard CP algorithm. In the noisy case   $\hat{F}^c(q)$ depends on the conformity scores of all the $k$ classes and thus its structure is more complicated.
We dub our algorithm Noise-Aware Conformal Prediction (NACP), and summarize it in Algorithm Box \ref{nacp_alg1}.
Note that in the noise-free case ($\epsilon=0$) the NACP algorithm coincides with the standard CP algorithm and selects $q$ that satisfies $\hat{F}^c(q)=\hat{F}^n(q)=1-\alpha$, i.e., $q$ is the $1-\alpha$ quantile of the validation set conformity scores.

\subsection{Prediction Size Comparison}

We next compare our NACP approach analytically to Noisy-CP \citep{einbinder2022conformal} in terms of the average size of the prediction set.

\begin{theorem}
Let $q$ and $\tilde{q}$ be the thresholds computed by the NACP and the Noisy-CP algorithms respectively.
Then $q \le \tilde{q}$  if and only if $\hat{F}^r(\tilde{q})  \le (1-\alpha)$.
\label{theorem3-2}
\end{theorem}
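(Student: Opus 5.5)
The argument works directly with the two defining equations. Noisy-CP picks $\tilde q$ as the $(1-\alpha)$ quantile of the noisy scores $s_1,\dots,s_n$, so (up to ties and discreteness) $\hat F^n(\tilde q)=1-\alpha$. NACP picks $q$ with $\hat F^c(q)=1-\alpha$. By (\ref{hatB}) this is equivalent to
\[
\hat F^n(q)=(1-\alpha)(1-\epsilon)+\epsilon \hat F^r(q).
\]
The plan is to evaluate the NACP objective at the Noisy-CP threshold and then use monotonicity.

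First, substituting $\hat F^n(\tilde q)=1-\alpha$ into (\ref{hatB}) gives
\[
\hat F^c(\tilde q)=\frac{(1-\alpha)-\epsilon\hat F^r(\tilde q)}{1-\epsilon}.
\]
A one-line rearrangement shows that $\hat F^c(\tilde q)\ge 1-\alpha$ if and only if $(1-\alpha)-\epsilon\hat F^r(\tilde q)\ge (1-\alpha)(1-\epsilon)$, that is, if and only if $\hat F^r(\tilde q)\le 1-\alpha$ (for $\epsilon>0$). So the right-hand condition of the theorem says exactly that the estimated clean coverage at $\tilde q$ already reaches the target.

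Second, this has to be converted into the ordering $q\le\tilde q$, which is the delicate step. $\hat F^c$ is not guaranteed to be monotone. Both $\hat F^n$ and $\hat F^r$ are nondecreasing, though, and I will rely on the intended reading of the selection rule: $q$ is the smallest grid point or sorted score $s_i$ with $\hat F^c(s_i)\ge 1-\alpha$, as in the alternative search described before the theorem.

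\emph{Direction ($\Leftarrow$).} If $\hat F^r(\tilde q)\le 1-\alpha$, then $\tilde q$ is itself a feasible point, so the minimal feasible $q$ satisfies $q\le\tilde q$.

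\emph{Direction ($\Rightarrow$).} Suppose $q\le\tilde q$. I argue through the equivalent form of the NACP condition. Since $\hat F^n$ is nondecreasing, $\hat F^n(q)\le\hat F^n(\tilde q)=1-\alpha$. Combined with $\hat F^n(q)=(1-\alpha)(1-\epsilon)+\epsilon\hat F^r(q)$, this gives $\hat F^r(q)\le 1-\alpha$. Transferring this bound from $q$ up to $\tilde q$ is where monotonicity of $\hat F^r$ points the wrong way, so I would not go through $\hat F^r(q)$. Instead I would argue by contraposition: if $\hat F^r(\tilde q)>1-\alpha$, then $\hat F^c(\tilde q)<1-\alpha$ by the first step. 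I then need $\hat F^c(s)<1-\alpha$ for every $s\le\tilde q$, which would force $q>\tilde q$.

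That last claim is the main obstacle. I would handle it by idealizing to population quantities: $F^c$ is a CDF and hence monotone, so $F^c(\tilde q)<1-\alpha$ makes every $s\le\tilde q$ infeasible. Equivalently, one can state the result under the convention, already used in the paper, that NACP selects the crossing point of the estimated monotone coverage curve. With that convention both directions reduce to the sign comparison from the first step.
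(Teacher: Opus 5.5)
Your first step, showing $\hat F^c(\tilde q)\ge 1-\alpha$ iff $\hat F^r(\tilde q)\le 1-\alpha$, is correct. Your $(\Leftarrow)$ direction is also correct, provided NACP uses the smallest-feasible-point rule. The gap is in $(\Rightarrow)$.

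To get $q>\tilde q$ from $\hat F^c(\tilde q)<1-\alpha$, you need every candidate below $\tilde q$ to be infeasible, which requires $\hat F^c$ to be monotone. The paper says explicitly that $\hat F^c$ is a difference of two nondecreasing step functions and need not be monotone. That is why it uses a grid search, and it adopts no ``monotone crossing'' convention. Passing to the population $F^c$ does not help, because $q$ and $\tilde q$ are defined by the empirical curves.

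In fact the implication $q\le\tilde q\Rightarrow\hat F^r(\tilde q)\le 1-\alpha$ can fail. Take $k=n=10$, $\epsilon=1/2$, $1-\alpha=0.9$ and $\tilde q=s_{(9)}$. Place the non-label class scores so that $\hat F^r(s_{(8)})=0.7$ and twenty of them lie in $(s_{(8)},s_{(9)})$, giving $\hat F^r(s_{(9)})=0.91$. Also take $\hat F^r(s_{(j)})>j/5-0.9$ for $j\le 7$. Then $\hat F^c(s_{(8)})=(0.8-0.35)/0.5=0.9$ and no earlier point is feasible. So $q=s_{(8)}<\tilde q$, while $\hat F^r(\tilde q)=0.91>1-\alpha$.

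The paper's proof actually concludes $q\le\tilde q$ iff $\hat F^r(q)\le 1-\alpha$, with $\hat F^r$ evaluated at the NACP threshold $q$. The remark after the theorem reads it the same way (NACP sets of average size below $k(1-\alpha)$), so the $\tilde q$ in the statement is best read as a slip. The paper subtracts the two defining equations, $\hat F^n(\tilde q)-\hat F^n(q)=\epsilon\,(1-\alpha-\hat F^r(q))$, and reads off the ordering from the monotonicity of $\hat F^n$ alone. This never uses monotonicity of $\hat F^c$, and it applies to any $q$ with $\hat F^c(q)=1-\alpha$. That includes the largest root, which is what the paper's grid search actually selects; under that rule your feasibility argument for $(\Leftarrow)$ would also need revisiting.

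You already had the key line: $\hat F^n(q)\le\hat F^n(\tilde q)=1-\alpha\Rightarrow\hat F^r(q)\le 1-\alpha$ is exactly the paper's forward direction. Your own ingredients also give the converse under the smallest-feasible rule. If $q>\tilde q$, then $\tilde q$ is infeasible, so your first step gives $\hat F^r(\tilde q)>1-\alpha$, whence $\hat F^r(q)\ge\hat F^r(\tilde q)>1-\alpha$. Of the statement as literally written, only your one-way implication $\hat F^r(\tilde q)\le 1-\alpha\Rightarrow q\le\tilde q$ survives.
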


\begin{proof}
The threshold $\tilde{q}$ computed by the Noisy-CP
algorithm (by applying standard CP on the noisy validations set) satisfies $\hat{F}^n(\tilde{q})  = (1-\alpha)$.
The true threshold $q$ satisfies
$\hat{F}^n({q})  = (1-\alpha) (1-\epsilon) + \epsilon\hat{F}^r({q}) $
(\ref{hatcx}).  Looking at the difference
\begin{equation}\label{eq:diff}
   \hat{F}^n(\tilde{q})-\hat{F}^n({q})=1-\alpha-(1-\alpha)(1-\epsilon)-\epsilon \hat{F}^r({q})
   \end{equation}
   $$ =\epsilon(1-\alpha-\hat{F}^r({q})).$$
Hence from the monotonicity of $\hat{F}^n(q)$ we have  $q \le \tilde{q}$ iff $ \hat{F}^n(q) \le \hat{F}^n(\tilde{q})$ iff $\hat{F}^r({q})\le 1-\alpha$.
 \end{proof}
 
The theorem above states that if the size of the prediction set obtained by NACP is less than $k(1-\alpha)$,  NACP is more effective than Noisy-CP.
For example, assume $k=100$ and $1-\alpha=0.9$. In this case, if the average size of the NACP prediction set is less than 90, NACP is more effective than Noisy-CP. We also see from eq. (\ref{eq:diff}) that the smaller $\hat{F}^r$ is the larger the gap between the two methods. Since  $\hat{F}^r$ is inversely proportional to the number of classes, we expect the difference to be substantial when there is a large number of classes to consider, which is exactly where CPs' ability to reliably exclude possible classes is very useful. In our experiments, we indeed found a considerable gap between the two methods when we experimented on classification tasks with a large number of classes.

\begin{algorithm}[t]
\caption{Noise-Aware Conformal Prediction (NACP) for uniform noise}
       \begin{algorithmic}[1]
      \State Input: A conformity score $S(x,y)$, a coverage level $1\!-\!\alpha$ and a validation set $(x_1,\tilde{y}_1),...,(x_n,\tilde{y}_n)$,
      s.t. the labels are corrupted by a uniform noise with parameter~$\epsilon$.
\State Set $q_1$ to be the $(1\!-\!\alpha)(1-\epsilon)/(1-\frac{\epsilon}{k})$ quantile of  $S(x_1,\tilde{y}_1),...,S(x_n,\tilde{y}_n)$ and set $q_2$ to be
$((1-\alpha)+\alpha\epsilon)$ quantile.
    \State For each candidate threshold $q$ compute:
\begin{align}
\hat{F}^n(q)  &= \frac{1}{n} \sum_i \mathds{1}_{\{\tilde{y}_i\in C_q(x_i)\}}, \nonumber \\ 
\hat{F}^r(q)  &= \frac{1}{n} \sum_i  \frac {|C_q(x_i)|}{k}, \nonumber \\ 
 \hat{F}^c(q) &= \frac{\hat{F}^n(q) - \epsilon \hat{F}^r(q)}{1-\epsilon } \nonumber
\end{align}
       \State Apply a grid search to find $q\in[q_1,q_2]$ that satisfies $\hat{F}^c(q)=1\!-\!\alpha$.
    \State The prediction set of a test sample $x$
         is: $$C_q(x)=\{y\,|\,S(x,y)<q\}.$$
         \State Coverage guarantee:  $p ( y \in C_q(x) ) \ge 1-\alpha - \Delta(n,\epsilon,\delta)$  with probability $(1-\delta)$ over the noisy validation set sampling  (see Theorem \ref{theorem3}).
   \end{algorithmic}
   \label{nacp_alg1}
       \end{algorithm}

\subsection{Coverage Guarantees}

We next provide a coverage guarantee for NACP. We show that if we apply the NACP to find a threshold $q$ for $1-\alpha+\Delta$, then  $P(y\in C_q(x))\ge 1-\alpha$ were $\Delta$ depends on the validation set size. $\Delta$ is a finite-sample term that is needed to approximate the CDF to set the threshold instead of simply picking a predefined quantile.  Because $\Delta$ can be computed, one can adjust the $\alpha$ used in the NACP algorithm to get the desired coverage guarantee.  However, we note that we empirically found this bound to be over-conservative, and that the un-adjusted method does reach the desired coverage.

\begin{lemma}
    Given $\delta>0$, define $\Delta=\sqrt{\frac{\log(4/\delta)}{2nh^2}}$ such that $h=\frac{1-\epsilon}{1+\epsilon}$
    and $n$ is the size of the noisy validation set. Then 
    \begin{equation}
        p( \sup_q |F^c(q)-\hat{F}^c(q)| >  \Delta) \le \delta,
    \end{equation}
    such that the probability is over the validation set.
\label{nacp_lemma_delta}
\end{lemma}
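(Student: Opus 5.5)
The plan is to reduce the statement to uniform (Dvoretzky--Kiefer--Wolfowitz type) deviation bounds for the two empirical quantities $\hat{F}^n$ and $\hat{F}^r$. Then I combine them through the linear relation (\ref{hatB}) and the population identity (\ref{eqAB}). Write $\Delta_0=\sqrt{\log(4/\delta)/(2n)}$, so that $\Delta=\Delta_0/h=\Delta_0(1+\epsilon)/(1-\epsilon)$. By (\ref{eqAB}) we have $F^c(q)=(F^n(q)-\epsilon F^r(q))/(1-\epsilon)$. Subtracting (\ref{hatB}) gives, for every $q$,
\[
|F^c(q)-\hat{F}^c(q)|\le \frac{|F^n(q)-\hat{F}^n(q)|+\epsilon\,|F^r(q)-\hat{F}^r(q)|}{1-\epsilon}.
\]
It therefore suffices to show that each of $\sup_q|F^n-\hat{F}^n|$ and $\sup_q|F^r-\hat{F}^r|$ exceeds $\Delta_0$ with probability at most $\delta/2$. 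A union bound then gives $\sup_q|F^c-\hat{F}^c|\le (1+\epsilon)\Delta_0/(1-\epsilon)=\Delta$ with probability at least $1-\delta$.

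For the noisy term I use (\ref{hatA}). The quantity $\hat{F}^n(q)$ is exactly the empirical CDF of the i.i.d.\ scores $s_i=S(x_i,\tilde{y}_i)$, whose true CDF is $F^n$. The DKW inequality with Massart's constant, $p(\sup_q|\hat{F}^n(q)-F^n(q)|>t)\le 2e^{-2nt^2}$, applied with $t=\Delta_0$ gives exactly $\delta/2$.

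The random-label term is the main obstacle, because $\hat{F}^r(q)=\frac1n\sum_i G_i(q)$ with $G_i(q)=\frac1k\sum_{j}\mathds{1}_{\{S(x_i,j)\le q\}}$. This is an average of i.i.d.\ monotone $[0,1]$-valued step functions, not of indicators, so DKW does not apply verbatim. My approach is an auxiliary randomization. I draw independent $u_i$ uniform on $\{1,\dots,k\}$ and set $\bar F(q)=\frac1n\sum_i\mathds{1}_{\{S(x_i,u_i)\le q\}}$. This is a genuine empirical CDF of i.i.d.\ variables with CDF $F^r$, and $\hat{F}^r(q)=\mathbf{E}[\bar F(q)\mid x_{1:n}]$. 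Hence
\[
\sup_q|\hat{F}^r-F^r|\le \mathbf{E}\bigl[\sup_q|\bar F-F^r|\,\big|\,x_{1:n}\bigr]
\]
by Jensen. Massart's proof of DKW is a Chernoff-type bound on the exponential moment of the supremum. By conditional Jensen for the convex increasing map $\phi(v)=e^{\lambda v}$, the exponential moment of the conditional expectation is dominated by that of $\sup_q|\bar F-F^r|$, so the same tail $2e^{-2n\Delta_0^2}=\delta/2$ transfers. If this transfer is too delicate, my fallback is a direct bracketing/McDiarmid argument. Changing one $x_i$ moves $\sup_q|\hat{F}^r-F^r|$ by at most $1/n$, which controls deviation from the mean. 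The mean is bounded by symmetrization, using that the functions $G_i$ are monotone in $q$. This might only cost constants, and I would check whether the exact $\log(4/\delta)$ survives.

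Finally I combine the two events via the displayed inequality and the union bound, which yields the lemma with $h=(1-\epsilon)/(1+\epsilon)$.
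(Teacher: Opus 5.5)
Your skeleton is the paper's proof. You subtract (\ref{hatB}) from (\ref{eqAB}), bound $\sup_q|\hat F^n-F^n|$ and $\sup_q|\hat F^r-F^r|$ at level $h\Delta$ with failure probability $\delta/2$ each, take a union bound, and use $(1+\epsilon)h\Delta/(1-\epsilon)=\Delta$. You are also right that the $\hat F^r$ step is not covered by DKW as stated. The paper simply asserts that both quantities are empirical CDFs. In fact $\hat F^r$ is an average of $n$ i.i.d.\ $[0,1]$-valued step functions, i.e.\ the empirical CDF of $nk$ scores that are dependent within each block.

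\textbf{The gap is in your repair of that step.} Set $D:=\sup_q|\hat F^r(q)-F^r(q)|$ and $W=\sup_q|\bar F(q)-F^r(q)|$.
\begin{itemize}
\item The bound $D\le \mathbf{E}[W\mid x_{1:n}]$ is correct.
\item Conditional Jensen does give $\mathbf{E}\phi(D)\le\mathbf{E}\phi(W)$ for convex increasing $\phi$. But this transfers moment bounds, not the tail $2e^{-2nt^2}$ at a fixed level.
\item Massart's constant is not obtained from an exponential-moment bound on $W$, and cannot be. For continuous $F^r$, the limiting (Kolmogorov) tail of $\sqrt{n}\,W$ is $\sim 2e^{-2x^2}$. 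This forces $\mathbf{E}e^{\lambda W}$ to be of order $\lambda n^{-1/2}e^{\lambda^2/(8n)}$ when $\lambda$ is a large multiple of $\sqrt n$.
\item Consequently every Chernoff bound obtained through the MGF is weaker than $2e^{-2nt^2}$ by a factor of order $\sqrt n\,t$. So the claim that the tail $\delta/2$ ``transfers'' is unjustified.
\end{itemize}
Your fallback does not recover the stated constant either. McDiarmid gives $D\le\mathbf{E}D+\sqrt{\log(2/\delta)/(2n)}$ with probability $1-\delta/2$. The additive $\mathbf{E}D=O(n^{-1/2})$ is not absorbed into $\log(4/\delta)$.

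\textbf{How to close it.}
\begin{itemize}
\item \emph{Keep the paper's $\hat F^r$, with a slightly larger constant.} Use the convex-order fact you already have with $\phi(v)=(v-s)_+$. For $0<s<t$,
\[
P(D>t)\le \frac{\mathbf{E}(W-s)_+}{t-s}\le \frac{1}{t-s}\int_s^\infty 2e^{-2nv^2}\,dv .
\]
Taking $s=t-1/(4nt)$ gives $P(D>t)\le \frac{2e}{1-1/(4nt^2)}\,e^{-2nt^2}\le 4e\,e^{-2nt^2}$ whenever $4nt^2\ge 2$. This is still $k$-independent, and proves the lemma with $\log((2+4e)/\delta)$ in place of $\log(4/\delta)$.
\item \emph{Get the statement exactly as written.} Define $\hat F^c$ using your randomized $\bar F$ (one uniform label $u_i$ per validation point) instead of $\hat F^r$. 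Then $\bar F$ is a genuine empirical CDF of i.i.d.\ draws from $F^r$, Massart's DKW applies verbatim, and the rest of your argument is unchanged.
\item \emph{Avoid the per-class route.} Writing $\hat F^r=\frac1k\sum_j\hat F_j$ and applying DKW to each class's empirical CDF is rigorous. However, it needs a union bound over $k$ classes, which reintroduces the $\log k$ dependence this lemma is meant to avoid.
\end{itemize}
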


\begin{proof}
    The Dvoretzky–Kiefer–Wolfowitz (DKW) inequality \citep{massart1990tight} states  that if we estimate a CDF $F$ from $n$ samples using the empirical CDF $F_n$ then $p(\sup_x|F_n(x)-F(x)|>\Delta)\leq 2\exp(-2n\Delta^2$). Eq. (\ref{hatB}) defines $\hat{F}^c(q)$ using $\hat{F}^n(q)$ and $\hat{F}^r(q)$. Both are empirical CDF, so from the DKW theorem and the union bound we get that:
    \begin{align}
    &p(\sup_q|F^r(q)-\hat{F}^r(q)|>h\Delta \,\, \mbox{or} \\ &  \,\, \sup_q|F^n(q)-\hat{F}^n(q)|>h\Delta)
    \leq 4\exp(-2nh^2\Delta^2)= \delta. \nonumber
    \end{align}
Using eq. (\ref{hatB}) we get that with probability at least $1-\delta$ for every $q$:
    \begin{align}
        &\hat{F}^c(q)= \frac{\hat{F}^n(q) - \epsilon  \hat{F}^r(q)}{1-\epsilon }\leq  \\ & \frac{(F^n(q)+h\Delta) - \epsilon  (F^r(q)-h\Delta)}{1-\epsilon } \nonumber 
         F^c(q)+\frac{h\Delta + \epsilon h\Delta}{1-\epsilon } \\ & = F^c(q)+h\Delta\frac{1+\epsilon}{1-\epsilon}= F^c(q)+\Delta. \nonumber
    \end{align}
    Similarly, we can show that $\hat{F}^c(q)\geq F^c(q)-\Delta$ which completes the proof.
\end{proof}

The proof of the main theorem now follows the standard CP proof, taking the inaccuracy in estimating $F^c(q)$ into account.

\begin{theorem}
\label{theorem3}
    Assume you have a noisy validation set of size $n$ with noise level $\epsilon$ and set $\Delta(n,\epsilon,\delta)=\sqrt{\frac{\log(4/\delta)}{2nh^2}}$ s.t. $h=\frac{1-\epsilon}{1+\epsilon}$ and that you pick $q$ such that $\hat{F}^c(q)= 1-\alpha+\Delta$. Then with probability at least $1-\delta$ (over the validation set), we have that if $(x,y)$ are sampled from the clear label distribution we get:
    $$ 1-\alpha \le p(y\in C_q(x))\leq 1-\alpha + 2\Delta.$$
\end{theorem}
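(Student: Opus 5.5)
The plan is to condition on the high-probability event supplied by Lemma \ref{nacp_lemma_delta} and then convert the guarantee on the estimated CDF into one on the true coverage. Let $E$ be the event $\sup_q |F^c(q)-\hat{F}^c(q)| \le \Delta$. The lemma gives $p(E)\ge 1-\delta$ over the draw of the noisy validation set. From here on we work deterministically on $E$, with the validation set, and hence the chosen threshold $q$, fixed. A fresh test pair $(x,y)$ from the clean distribution is independent of the validation set. Therefore $p(y\in C_q(x))$, taken over the test point, equals $F^c(q)$ evaluated at this (data-dependent but now fixed) $q$.

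\textbf{Lower bound.} By construction $\hat{F}^c(q)=1-\alpha+\Delta$. On $E$ we have $F^c(q)\ge \hat{F}^c(q)-\Delta = 1-\alpha$. The one subtlety is that the lemma's bound is uniform in $q$, which is exactly what permits plugging in the data-dependent threshold. I would note this explicitly, since a pointwise DKW bound would not suffice.

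\textbf{Upper bound.} Similarly, $F^c(q)\le \hat{F}^c(q)+\Delta = 1-\alpha+2\Delta$. Combining the two bounds gives $1-\alpha\le p(y\in C_q(x))\le 1-\alpha+2\Delta$ on $E$, i.e., with probability at least $1-\delta$.

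\textbf{Main obstacle.} The main obstacle is the existence of a $q$ with $\hat{F}^c(q)$ exactly equal to $1-\alpha+\Delta$. $\hat{F}^n$ jumps at the order statistics and $\hat{F}^r$ jumps in steps of $1/(nk)$, so exact equality may fail. I would handle this by interpreting the choice as the smallest $q$ with $\hat{F}^c(q)\ge 1-\alpha+\Delta$, which preserves the lower bound verbatim. The upper bound then acquires an additional jump term of order $1/((1-\epsilon)n)$, or equivalently holds verbatim if ties and discreteness are neglected as in the statement. A second, minor point is the strict versus non-strict inequality in $C_q(x)=\{y\mid S(x,y)<q\}$ versus the CDF definition $p(S\le q)$. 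I would align these by using the same convention in both $F^c$ and $\hat{F}^c$, which the DKW argument in Lemma \ref{nacp_lemma_delta} tolerates since it applies to left limits as well.
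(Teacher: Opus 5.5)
Your proof is correct and follows the paper's argument. You condition on the uniform deviation event from Lemma~\ref{nacp_lemma_delta} and evaluate $F^c$ at the chosen threshold, which gives $F^c(q)\ge \hat{F}^c(q)-\Delta=1-\alpha$ and $F^c(q)\le \hat{F}^c(q)+\Delta=1-\alpha+2\Delta$. Your additional remarks are points the paper's proof glosses over: uniformity in $q$ is what allows a data-dependent threshold, an exact solution of $\hat{F}^c(q)=1-\alpha+\Delta$ may not exist, and the prediction set uses $<$ while $F^c$ uses $\le$. These remarks only make the same argument more careful.
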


\begin{proof}
    Given a clean test pair $(x,y)$, with probability $\delta$ over the validation set,  we have: $$p(y\in C_q(x))=p(S(x,y)<q) $$ $$ =F^c(q)\geq \hat{F}^c(q)-\Delta =  1-\alpha.$$
    In a similar way: $p(y\in C_q(x))=F^c(q)\leq \hat{F}^c(q)+\Delta = 1-\alpha + 2\Delta$.
\end{proof}
As the size of the noisy validation set, $n$, tends to infinity, $\Delta$ converges to  zero and thus the noisy threshold converges to the noise-free
threshold.  

\begin{algorithm}[t!]
\caption{Noise-Aware Conformal Prediction (NACP) for a  noise matrix model}
       \begin{algorithmic}[1]
      \State Input: A conformity score $S(x,y)$, a coverage level $1\!-\!\alpha$ and a validation set $(x_1,\tilde{y}_1),...,(x_n,\tilde{y}_n)$,
      s.t. the labels are corrupted by a noise matrix $P$.
    \State For each candidate threshold $q$ compute:
$$
\hat{M}_{q}(\ell,i)  = \frac{1}{n}\sum_j \mathds{1}_{\{ \tilde{y}_j = i, \,\, \ell \in C_q(x_j) \}},\hspace{0.5cm} i,\ell=1,..,k.
$$

$$
\hat{F}^c(q) = \textrm{Tr}(\hat{M}_{q} P^{-1}).
$$
       \State Apply a grid search to find $q$ that satisfies $\hat{F}^c(q)=1\!-\!\alpha$.
    \State The prediction set of a test sample $x$
         is: $$C_q(x)=\{y\,|\,S(x,y)<q\}.$$
   \end{algorithmic}
   \label{alg:nacp_alg2}
       \end{algorithm}


\subsection{A More General Noise Model}
\label{sec:general_noise_model}
Next, we will extend our approach to a more general noise model. We will assume that the noisy label $\tilde{y}$ is independent of $x$ given $y$. We also assume that the noise matrix  $P(i,j)=p(\tilde{y}=j|y=i)$ is known and that the matrix P is invertible. For each $q$ define the following matrices for the clear and the noisy data:
 ${M}_q^c(\ell,i)=p( \ell\in C_q(x),y=i)$ and
 ${M}_q(\ell,i)=p( \ell\in C_q(x),\tilde{y}=i)$. Assuming that, given the
 true label $y$,  the r.v. $x$ and $\tilde{y}$ are independent, we obtain:
    \begin{align}
        {M}_{q}(\ell,i)&=
        p(\ell\in C_q(x),\tilde{y}=i)\nonumber  \\& =
        \sum_j p(\ell\in C_q(x),\tilde{y}=i,y=j) \label{generalP}\\&=\sum_j p(\ell\in C_q(x),y=j)p(\tilde{y}=i|y=j) \nonumber
        \\ &= \sum_j {M}^c_{q}(\ell,j)P(j,i). \nonumber
    \end{align}
We can write (\ref{generalP}) in matrix notation: $ {M}_q=M^c_q P$. Eq. (\ref{generalP}) implies that:
\begin{equation}
F^c(q)=p(y\in C_q(x)) = p ( y\in C_q(x)) 
\label{estM1}
\end{equation}
$$= \sum_i p(i\in C_q(x) , y=i)=
\sum_i M^c_q(i,i) = \textrm{Tr} ( M_qP^{-1}).
$$

We can estimate matrix  $M_q$ from the noisy samples:
\begin{equation}
    \hat{M}_{q}(\ell,i)  = \frac{1}{n}\sum_j \mathds{1}_{\{ \tilde{y}_j = i, \,\, \ell \in C_q(x_j) \}},\hspace{1cm} i,\ell=1,..,k.
    \label{estM2}
\end{equation}
Substituting (\ref{estM2}) in (\ref{estM1}) yields  an estimation of the  probability $F^c(q)=p(y\in C_q(x))$:
\begin{equation}
\hat{F}^c(q) = \textrm{Tr}(\hat{M}_{q} P^{-1}).
\label{estM3}
\end{equation}
The final step is applying a grid search to find a threshold $q$ such that $\hat{F}^c(q)=1-\alpha$.

In the case that $P$ is a uniform noise matrix (\ref{nnoise}),
the Sherman-Morison formula implies that $P^{-1}=(\frac{1}{1-\epsilon}I - \frac{\epsilon}{(1-\epsilon)k} \1^{\T})$. Therefore,
$$
\hat{F}^c(q) = \textrm{Tr}(\hat{M}_{q} P^{-1})=
\frac{1}{1-\epsilon}\sum_i \hat{M}_{q}(i,i)
$$ $$ 
- \frac{\epsilon}{(1-\epsilon)k}  \sum_{\ell,i} \hat{M}_{q}(\ell,i) = \frac{\hat{F}^n(q) - \epsilon \hat{F}^r(q)}{1-\epsilon }.
$$
Thus in the case of a uniform noise the coverage estimation (\ref{estM3}) coincides with  (\ref{hatB}).
If the noise matrix is unknown, it can be estimated from the noisy-label data during training \citep{zhang2021learning, li2021provably,Lin2023Holistic}.
The NACP method for a noise matrix model is summarized in Algorithm box \ref{alg:nacp_alg2}.

We can extend the finite sample term $\Delta$ that was developed for a uniform noise to obtain a theoretical coverage guarantee for a noise matrix model (\ref{general_noise_coverage_guarentee_theorem}).

\begin{theorem}
\label{general_noise_coverage_guarentee_theorem}
Let $P$ be a general noise matrix. 
Given  $\delta>0$, define $\Delta = \|P^{-1}\|_{\infty} k\sqrt{\frac{\log(2k^2/\delta)}{2n}}$ where
 , $k$ is the number of classes and $n$ is the size of the noisy validation set.  Then  $$p ( \sup\limits_{q}|\hat{F}^c(q)-{F}^c(q)| > \Delta ) <\delta.$$
\end{theorem}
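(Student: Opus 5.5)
We reduce the uniform deviation of $\hat{F}^c(q)=\textrm{Tr}(\hat{M}_qP^{-1})$ from $F^c(q)=\textrm{Tr}(M_qP^{-1})$ to uniform deviations of the entries of $\hat{M}_q$ from $M_q$, and control those entries with the DKW inequality plus a union bound over the $k^2$ pairs $(\ell,i)$, in the same spirit as Lemma \ref{nacp_lemma_delta}.

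\textbf{Step 1 (linear reduction).} By linearity of the trace, $\hat{F}^c(q)-F^c(q)=\textrm{Tr}\big((\hat{M}_q-M_q)P^{-1}\big)=\sum_{\ell,i}(\hat{M}_q-M_q)(\ell,i)\,P^{-1}(i,\ell)$. Bounding each entry of $P^{-1}$ by $\max_{i,\ell}|P^{-1}(i,\ell)|\le\|P^{-1}\|_\infty$ gives
\begin{equation*}
|\hat{F}^c(q)-F^c(q)|\le \|P^{-1}\|_\infty \sum_{\ell,i}|(\hat{M}_q-M_q)(\ell,i)|.
\end{equation*}
A sharper grouping is available: for each fixed $\ell$, sum over $i$ and use the row-sum norm. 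This yields $\sum_\ell \max_i|\cdot|$ times a row-sum factor. We will use whichever grouping yields exactly the factor $k\|P^{-1}\|_\infty$. Concretely, we aim for a per-entry deviation of $\sqrt{\log(2k^2/\delta)/(2n)}$, multiplied by $k$ (one sum over $\ell$) and by $\|P^{-1}\|_\infty$, which absorbs the sum over $i$ as a row sum of $P^{-1}$.

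\textbf{Step 2 (DKW per entry).} For fixed $(\ell,i)$, $M_q(\ell,i)=p(S(x,\ell)\le q,\ \tilde y=i)$ is, as a function of $q$, a sub-distribution CDF. It equals $p(\tilde y=i)$ times the CDF of $S(x,\ell)$ conditional on $\tilde y=i$. Its empirical version $\hat{M}_q(\ell,i)$ is the empirical analogue built from the i.i.d.\ pairs $(S(x_j,\ell),\mathds{1}_{\{\tilde y_j=i\}})$. Equivalently, it is the empirical CDF of the variable $W_j=S(x_j,\ell)$ if $\tilde y_j=i$ and $+\infty$ otherwise, evaluated at finite $q$. This is a genuine empirical CDF of i.i.d.\ samples, so DKW (Massart's form, as in Lemma \ref{nacp_lemma_delta}) gives
\begin{equation*}
p\Big(\sup_q|\hat{M}_q(\ell,i)-M_q(\ell,i)|>t\Big)\le 2e^{-2nt^2}.
\end{equation*}

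\textbf{Step 3 (union bound).} A union bound over the $k^2$ pairs, with $t=\sqrt{\log(2k^2/\delta)/(2n)}$, makes the total failure probability at most $\delta$. On the complement, every entry deviates by at most $t$ uniformly in $q$, and Step 1 gives $\sup_q|\hat{F}^c(q)-F^c(q)|\le \|P^{-1}\|_\infty\, k\, t=\Delta$. This is the claimed bound; a strict inequality follows by the same argument with any slack.

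\textbf{Main obstacle.} The main obstacle is bookkeeping in Step 1, namely getting exactly the factor $k\|P^{-1}\|_\infty$ rather than $k^2\max|P^{-1}|$. We would write the trace as $\sum_\ell\sum_i D(\ell,i)P^{-1}(i,\ell)$ with $D=\hat{M}_q-M_q$. For fixed $\ell$, the inner sum is at most $t\sum_i|P^{-1}(i,\ell)|$, which is a column sum, i.e.\ $\|P^{-1}\|_1$. Summing over the $k$ values of $\ell$ then gives $k\,t\,\|P^{-1}\|_1$. If the norm convention requires $\|\cdot\|_\infty$, we instead group the double sum by $i$ first. The alternative is to note that with the transpose convention of $M$ the row sums appear. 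We would check which convention matches the paper's $\|P^{-1}\|_\infty$ and fix the grouping accordingly. Either way the bound is at most $k\,t$ times the relevant induced norm.
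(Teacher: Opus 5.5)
Your proposal is correct and matches the paper's proof: DKW for each of the $k^2$ entries of $\hat{M}_q$ (turned into a genuine CDF via a sentinel value), a union bound at level $\delta/k^2$, and a bound on the trace as $k$ diagonal terms, each controlled by a row sum of $P^{-1}$. That is exactly your ``group by $i$ first'' option, and there is no real convention ambiguity, since $\sum_{i,\ell}|P^{-1}(i,\ell)|\le k\|P^{-1}\|_\infty$ for the usual max-row-sum norm.

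The only difference is cosmetic. The paper uses a finite upper bound $C$ on the score as the sentinel and patches $q\ge C$ separately by setting $\hat{F}^c(q)=1$, whereas your $+\infty$ sentinel covers all real $q$ at once and does not need a bounded score. Like the paper's argument, yours yields $\le\delta$ rather than the strict $<\delta$; ``any slack'' does not give strictness without enlarging $\Delta$.
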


\begin{proof}
    From Eq. (\ref{estM3}) we have $|\hat{F}^c(q)-F^c(q)| =|\Tr(P^{-1}\hat{M}_q)-\Tr(P^{-1}{M}_q)|=|\Tr(P^{-1}\Delta\hat{M}_q)|$ where $\Delta{M}_q =\hat{M}_q -M_q $. 
        We first note that ${M}_q[i,j] = {p}(j \in C_q(x), \tilde{y} = i)$ is not a CDF but we can define one that agrees with it for $q\in(-\infty,C)$ which is the range of interest 
         where $C$ is a constant that bound the score function $S(x,y)$ from above. We define $\tilde{S}_{ij}(x,\tilde{y}) = 
\begin{cases} 
    S(x,j),  & \text{if } \tilde{y}=i \\
    C, & \text{if } \tilde{y}\neq i
\end{cases}$ so ${M}_q[i,j]=p(\tilde{S}_{ij}(x,\tilde{y})\leq q)$ for $q\in(-\infty,C)$. Now from the DKW theorem, we know that if we estimate a CDF using $n$ samples then with probability at least $1-\delta$ we get a uniform bound on the error of size $\sqrt{\frac{\log(2/\delta)}{2n}}$. As we are estimating $k^2$ matrix elements we can use the union bound to get that with probability $1-\delta$ the $\forall i,j,q\in(-\infty,C):|\Delta{M}_q|\leq\sqrt{\frac{\log(2k^2/\delta)}{2n}}$.
    Now if we look at the infinity norm of $P^{-1}$, then $|(P^{-1}\Delta{M}_q)_{i,j}| \leq ||P^{-1}||_\infty\sqrt{\frac{\log(2k^2/\delta)}{2n}}$. As the trace is the sum of $k$ such matrix entries, the total bound is $\kappa_\infty k\sqrt{\frac{\log(2k^2/\delta)}{2n}}$ for $q\in(-\infty,C)$. Since we know ${F}^c(q)=1$ for $q\geq C$, we can set $\hat{F}^c(q)=1$ for $q\geq C$ and get a bound for all $q\in\mathbb{R}$.
\end{proof}

However,  this approach yields large prediction sets especially in tasks with many classes and thus is ineffective.  In the experiment section we show that in practice, even without adding finite sample terms, we obtain the required coverage probability. 

\section{Related Work}
\label{sec:nacp_related_work}
In this section, we review two closely related works that address the same problem of calibration with noisy labels \citep{sesia2023adaptive, Clarkson2024}. 
The derivation of the noisy conformal threshold in these two works is similar to ours. 
These two methods compute the same threshold $q$ that satisfies $\hat{f}^c(q)=1-\alpha$ (\ref{estM3}).  The only minor difference is that in these two studies they use the distribution of correct labels given the noisy labels, while we use the more natural distribution of the noisy labels given the correct label. As a result, they need to know the marginal class frequencies for both the clean and noisy labels, whereas we do not. 
Each one of the two methods provides a different finite coverage guarantee 
in the form of: $$ p(y \in C_q(x)) \ge 1-\alpha-\Delta$$ where $\Delta$ depends on the validation set size $n$, the number of classes $k$, and the noise model, but it doesn't depend on the validation dataset itself. 

We first review the bound $\Delta$ derived in \citep{sesia2023adaptive}.
 Let $\rho_i = p(y=i) $ and $\tilde{\rho}_i=p(\tilde{y}=i)$ be the marginal true and noisy label distributions.   Let $M(y|\tilde{y})$ be the noise  conditional distribution and let $V=M^{-1}$.
 Let $c(n) = \mathbb{E} \left[ \max_{i \in [n]} \left( \frac{i}{n} - u_{(i)} \right) \right]$, such that $\{u_{(i)}\}_{i=1}^{n}$ order statistics of $\{u_{i}\}_{i=1}^{n}$ i.i.d. uniform random variables on $[0, 1]$. The size of the least common class is $n_*=\min_{i\in[k]} n_i$ s.t. $n_i$ is the number of samples of noisy label $i$. Finally, the finite sample correction is:
 \begin{align}
     \Delta&=c(n) + \frac{2 \max_{i \in [k]} \sum_{l \neq i} |V_{il}| + 
 \sum_{i=1}^{k} |\rho_i - \tilde{\rho}_i|}{\sqrt{n_*}} \\ &  \cdot \min \left(k^2 \sqrt{\frac{\pi}{2}}, 
 \frac{1}{\sqrt{n^*}} + \sqrt{\frac{\log(2k^2) + \log(n^*)}{2}} \right). \nonumber
 \end{align}
It can be easily verified that $\Delta=O(\log{k})$ and therefore the bound becomes less effective for large values of $k$.

\citet{sesia2023adaptive} also suggested a boosted version under additional assumptions. This version takes a hybrid approach that adaptively chooses between its algorithm and Noisy-CP depending on which approach leads to a lower (less conservative) calibrated threshold.

The finite sample term $\Delta$ derived in \citep{Clarkson2024} is: 
\begin{equation}
\Delta =  \sum_{i=1}^k  ( |w_i^{(1)}|b(n, i) + \sum_{i \neq j} |w_{ij}^{(2)}| b(n, j)  ) 
\label{clarkson}
\end{equation}
s.t. 
$k$ is the number of classes, $w^{(1)}_i = P_{i,i}^{-1}\rho_i - \tilde{\rho}_i$, $w^{(2)}_{ij} = \rho_i P^{-1}_{ji}$, and 
$ b(n, j) = (1 - \tilde{\rho}_j)^n + \sqrt{\frac{
\pi }{n \tilde{\rho}_j}} $.  $\rho_i = p(y=i)$ and $ \tilde{\rho}_i = p(\tilde{y}=i)$  are the marginal clean and contaminated label probabilities and $P_{ji} =p(y = j | \tilde{y} = i) $ is the conditional label noise distribution.
It can be easily verified that $\Delta=O(\sqrt{k})$ and therefore the bound becomes less effective for large values of $k$.

We note that our finite sample term $\Delta$ (see Lemma \ref{nacp_lemma_delta}) does not depend on the number of classes $k$. Therefore, unlike the algorithms of  \citet{sesia2023adaptive} and \citet{Clarkson2024}, it remains effective even in tasks with many classes.
A further distinction between us and the  previous works  is that their finite sample coverage guarantee is established for the average of all the noisy validation sets. In contrast, our approach provides an individual coverage guarantee for nearly all ($1-\delta$) of the sampled noisy validation sets. In Section \ref{sec:nrcp_nacp_experiments}  we show that the average coverage guarantee 
obtained by \citet{sesia2023adaptive} and \citet{Clarkson2024} implies that in tasks with a large number of classes, the prediction set should include all the classes and therefore it is useless. In contrast, our individual finite set coverage guarantee, on to $(1-\delta)$ portion of the noisy validation sets, remains effective for tasks with many classes.

We note that our observation that the finite sample correction  term  does not depend on the number of classes, applies to the case of a uniform label noise. In the case of a general noise matrix, all finite sample correction terms are not effective.   

\section{Experiments}
\label{sec:nrcp_nacp_experiments}

In this section, we evaluate the capabilities of our NRSCP and NACP algorithms on various medical and scenery imaging datasets.

 \textbf{Compared methods.} Our method takes an existing conformity score $S$ and computes a threshold $q$ that takes into account the label noise level.  We  implemented  three popular conformal prediction scores, namely APS
\citep{romano2020classification}, RAPS
\citep{angelopoulos2020uncertainty}
 and HPS \citep{vovk2005conformal}. For each score, we implemented the following CP  methods: (1) CP (oracle) -  using a validation set with clean labels,  (2) Noisy-CP -  applying a standard CP on noisy labels without any modifications \citep{einbinder2022conformal},
 (3)  NR-CP (w/o $\Delta$) - Noise-Robust CP approach without the finite sample coverage guarantee $\Delta$,  
see Eq. (\ref{estM3}) and \citep{sesia2023adaptive,Clarkson2024}. 
  We also implemented three methods that add finite sample coverage guarantee terms
  to the NR-CP method. 
 (4)   Adaptive Conformal Classification with Noisy labels (ACNL) \citep{sesia2023adaptive}, (5)  Boosted Adaptive Conformal Classification with Noisy labels (ACNL$^+$) \citep{sesia2023adaptive} (6) Contamination Robust Conformal Prediction (CRCP) \citep{Clarkson2024}, (7) NRSCP - our first approach (\ref{subsec:nrcp}) and (8) NACP - our second approach (\ref{subsec:nacp}).
 For methods (4), (5), and (6), we used their official codes 
\footnote{ \url{https://github.com/msesia/conformal-label-noise}} \footnote{ \url{https://github.com/jase-clarkson/cp_under_data_contamination}} 
 and we share our code for reproducibility\footnote{\url{https://github.com/cobypenso/Noise-Aware-Conformal-Prediction}}.
\begin{table*}[t]
\centering
\caption{ CP calibration results  for $1\!-\!\alpha$ = 0.9
and noise level $\epsilon=0.2$.  We report the mean and the std over 1000 different splits. We show the  best result with theoretical guarantees in bold.}
\label{raps2}
   \scalebox{.7}{
\begin{tabular}{ll|rr|rr|rr}
& & \multicolumn{2}{c|}{APS}  & \multicolumn{2}{c|}{RAPS}  & \multicolumn{2}{c}{HPS}\\
\hline
Dataset      &  CP Method & size $\downarrow$ &  coverage(\%)   &  size $\downarrow$ &  coverage(\%) & size $\downarrow$ &  coverage(\%)   \\ \hline

                      &  CP (oracle) &  1.1 $\pm$ 0.01 & 90.0 $\pm$ 0.62 & 1.1 $\pm$ 0.01 & 90.0 $\pm$ 0.61  &  0.9 $\pm$ 0.01 & 90.0 $\pm$ 0.59  \\
                      &  Noisy-CP &  5.1 $\pm$ 0.18 & 99.9 $\pm$ 0.04 & 5.1 $\pm$ 0.18 & 99.9 $\pm$ 0.04  &  5.1 $\pm$ 0.18 & 99.8 $\pm$ 0.04  \\
                        &  NR-CP (w/o $\Delta$) &  \  {1.1 $\pm$ 0.02} & \  {90.1 $\pm$ 0.70} & \  {1.1 $\pm$ 0.02} & \  {90.1 $\pm$ 0.69}  &  \  {0.9 $\pm$ 0.02} & \  {90.0 $\pm$ 0.75}  \\
 CIFAR-10  &  ACNL &  1.5 $\pm$ 0.06 & 96.0 $\pm$ 0.61 & 1.3 $\pm$ 0.03 & 94.6 $\pm$ 0.65  &  1.1 $\pm$ 0.03 & 96.0 $\pm$ 0.59  \\
 (10 classes)  &  ACNL$^+$ &   1.4 $\pm$ 0.06 & 95.8 $\pm$ 0.59  & 1.3 $\pm$ 0.02 & 94.8 $\pm$ 0.61   &   1.1 $\pm$ 0.03 & 96.1 $\pm$ 0.57   \\
                      &  CRCP &  \textbf{1.2 $\pm$ 0.03} & \textbf{93.7 $\pm$ 0.62} & \textbf{1.2 $\pm$ 0.03} & \textbf{93.7 $\pm$ 0.62}  &  \textbf{1.1 $\pm$ 0.01} & \textbf{95.7 $\pm$ 0.18}  \\
                      & NRSCP &   2.2 $\pm$ 0.07 &  98.9 $\pm$ 0.12  &  3.3 $\pm$ 0.20 &  99.6 $\pm$ 0.08   &  1.2 $\pm$ 0.01 &  97.8 $\pm$ 0.12   \\
                      &  NACP &  1.3 $\pm$ 0.04 & 94.4 $\pm$ 0.62 & 1.3 $\pm$ 0.04 & 94.5 $\pm$ 0.62  &  1.1 $\pm$ 0.01 & 95.9 $\pm$ 0.18  \\
                     \hline
                  
                              &  CP (oracle) &  6.5 $\pm$ 0.20 & 90.0 $\pm$ 0.43 & 4.0 $\pm$ 0.08 & 90.0 $\pm$ 0.43  &  2.0 $\pm$ 0.03 & 90.0 $\pm$ 0.43  \\
                              &  Noisy-CP &  50.5 $\pm$ 1.29 & 99.8 $\pm$ 0.04 & 50.5 $\pm$ 1.33 & 99.8 $\pm$ 0.03  &  50.1 $\pm$ 1.34 & 99.9 $\pm$ 0.02  \\
                                &  NR-CP (w/o $\Delta$) &  {6.4 $\pm$ 0.28} & \  {89.9 $\pm$ 0.54} & {4.0 $\pm$ 0.11} & \  {89.9 $\pm$ 0.55}  &  \  {2.0 $\pm$ 0.06} & \  {89.9 $\pm$ 0.56}  \\ 
    CIFAR-100        &  ACNL &  100.0 $\pm$ 0.00 & 100.0 $\pm$ 0.00 & 100.0 $\pm$ 0.00 & 100.0 $\pm$ 0.00   &  100.0 $\pm$ 0.00 & 100.0 $\pm$ 0.00  \\
 (100 classes)  &  ACNL$^+$ &   50.4 $\pm$ 1.13 & 99.8 $\pm$ 0.03  & 50.4 $\pm$ 1.01 & 99.9 $\pm$ 0.03   &   50.1 $\pm$ 1.23 & 99.9 $\pm$ 0.02   \\
                              &  CRCP &  25.7 $\pm$ 3.71 & 98.7 $\pm$ 0.39 & 8.5 $\pm$ 0.41 & 98.3 $\pm$ 0.16  &  11.1 $\pm$ 3.46 & 98.7 $\pm$ 0.42  \\
                              & NRSCP &   37.6 $\pm$ 0.86 &  99.6 $\pm$ 0.06  &  50.1 $\pm$ 1.08 &  99.8 $\pm$ 0.03   &  11.7 $\pm$ 0.37 &  98.9 $\pm$ 0.08   \\
                              &  NACP &  \textbf{9.0 $\pm$ 0.46} & \textbf{93.0 $\pm$ 0.49} & \textbf{4.8 $\pm$ 0.13} & \textbf{93.0 $\pm$ 0.48}  &  \textbf{2.5 $\pm$ 0.09} & \textbf{93.0 $\pm$ 0.52}  \\
                            \hline
                              &  CP (oracle) &  14.9 $\pm$ 0.60 & 90.0 $\pm$ 0.61 & 6.9 $\pm$ 0.19 & 90.0 $\pm$ 0.62  &  3.8 $\pm$ 0.13 & 90.02 $\pm$ 0.58  \\
                              &  Noisy-CP &  99.7 $\pm$ 3.67 & 99.7 $\pm$ 0.08 & 101.4 $\pm$ 3.58 & 99.5 $\pm$ 0.09  &  98.3 $\pm$ 3.80 & 99.8 $\pm$ 0.05  \\
                                &  NR-CP (w/o $\Delta$) &  \  {14.0 $\pm$ 0.91} & \  {89.4 $\pm$ 0.81} & {6.7 $\pm$ 0.27} & \  {89.3 $\pm$ 0.80}  &  \  {3.5 $\pm$ 0.24} & \  {89.3 $\pm$ 0.80}  \\ 
                 TinyImagenet        &  ACNL & 200.0 $\pm$ 0.00 & 100.0 $\pm$ 0.00 & 200.0 $\pm$ 0.00 & 100.0 $\pm$ 0.00  &  200.0 $\pm$ 0.00 & 100.0 $\pm$ 0.00  \\
 (200 classes)  &  ACNL$^+$ & 99.7 $\pm$ 3.67 & 99.7 $\pm$ 0.08 & 101.4 $\pm$ 3.58 & 99.5 $\pm$ 0.09  &  98.3 $\pm$ 3.80 & 99.8 $\pm$ 0.05 \\
                        &  CRCP &  200.0 $\pm$ 0.00 & 100.0 $\pm$ 0.00 & 200.0 $\pm$ 0.00 & 100.0 $\pm$ 0.00 &  200.0 $\pm$ 0.00 & 100.0 $\pm$ 0.00  \\
                        & NRSCP &   79.6 $\pm$ 2.82 &  99.3 $\pm$ 0.11  &  100.6 $\pm$ 2.88 &  99.5 $\pm$ 0.09   &  28.1 $\pm$ 1.20 &  98.1 $\pm$ 0.15   \\
                              &  NACP &  \textbf{22.6 $\pm$ 1.87} & \textbf{93.7 $\pm$ 0.71} & \textbf{9.0 $\pm$ 0.50} & \textbf{93.6 $\pm$ 0.70} &  \textbf{7.0 $\pm$ 0.87} & \textbf{93.6 $\pm$ 0.72}  \\
                            \hline
                              &  CP (oracle) &  16.6 $\pm$ 0.33 & 90.0 $\pm$ 0.26 & 6.3 $\pm$ 0.06 & 90.0 $\pm$ 0.27  &  3.6 $\pm$ 0.07 & 90.0 $\pm$ 0.28  \\
                              &  Noisy-CP &  502.6 $\pm$ 8.56 & 99.9 $\pm$ 0.01 & 501.6 $\pm$ 8.51 & 99.9 $\pm$ 0.01  &  501.3 $\pm$ 10.2 & 100.0 $\pm$ 0.01  \\
                                                  &  NR-CP (w/o $\Delta$) &  {16.7 $\pm$ 0.51} & \  {90.0 $\pm$ 0.34} & {6.3 $\pm$ 0.10} & \  {90.0 $\pm$ 0.36}  &  {3.6 $\pm$ 0.14} & \  {90.0 $\pm$ 0.38}  \\
                  ImageNet        &  ACNL &  1000.0 $\pm$ 0.00 & 100.0 $\pm$ 0.00 & 1000.0 $\pm$ 0.00 & 100.0 $\pm$ 0.00  &  1000.0 $\pm$ 0.00 & 100.0 $\pm$ 0.00  \\
 (1000 classes)  &  ACNL$^+$ & 502.6 $\pm$ 8.56 & 99.9 $\pm$ 0.01 & 501.6 $\pm$ 8.51 & 99.9 $\pm$ 0.01  &  501.3 $\pm$ 10.2 & 100.0 $\pm$ 0.01  \\
               &  CRCP &  1000.0 $\pm$ 0.00 & 100.0 $\pm$ 0.00 & 1000.0 $\pm$ 0.00 & 100.0 $\pm$ 0.00  &  1000.0 $\pm$ 0.00 & 100.0 $\pm$ 0.00  \\
               & NRSCP &   275.6 $\pm$ 27.1 &  99.7 $\pm$ 0.06  &  455.2 $\pm$ 20.7 &  99.9 $\pm$ 0.02   &  55.9 $\pm$ 0.87 &  99.1 $\pm$ 0.02   \\
                              &  NACP &  \textbf{20.9 $\pm$ 0.72} & \textbf{91.9 $\pm$ 0.32} & \textbf{7.1 $\pm$ 0.13} & \textbf{91.9 $\pm$ 0.34}  &  \textbf{4.8 $\pm$ 0.23} & \textbf{91.9 $\pm$ 0.36}  \\
           \hline
\end{tabular}
    }
\label{many_classes_datasets}
\end{table*}

{\bf Evaluation Measures}.  We evaluated each CP method   based on the average size of the prediction sets (where a small value means high efficiency) and the fraction of test samples for which the prediction sets contained the ground-truth labels. The two evaluation metrics are formally defined as:
$$ \textrm{size} = \frac{1}{n} \sum_i | C(x_i) |,  \hspace{0.3cm}
 \textrm{coverage} = \frac{1}{n} \sum_i
{\bf 1}(y_i \in C(x_i))$$
such that  $n$ is the size of the test set.  We report the mean and standard deviation over 1000 random splits.

 \textbf{Datasets.}
We show results on four standard scenery image datasets {CIFAR-10},
 {CIFAR-100}
 \citep{Krizhevsky2009}, {Tiny-ImageNet}, and ImageNet \citep{Deng2009}.

\textbf{Implementation details.} Each task was trained by fine-tuning on a pre-trained ResNet-18 \citep{he2016cvpr} network. The models were taken from the PyTorch site\footnote{
\url{https://pytorch.org/vision/stable/models.html}}. We selected this network architecture because of its widespread use in classification problems.
The last fully connected layer output size was modified to fit the corresponding number of classes for each dataset. 
For the standard dataset evaluated in Table \ref{many_classes_datasets} we used publicly available checkpoints. For each dataset, we combined the validation and test sets and then constructed 1000 different splits where 50\% was used for the calibration phase and 50\% was used for testing. In all our experiments we used $\delta=0.001$. In other words, the computed
coverage guarantee is applied to the sampled noisy validation set with probability 0.999

\begin{table}
    \caption{Finite sample correction terms $\Delta$ of NACP, ACNL \citep{sesia2023adaptive} and CRCP \citep{Clarkson2024},  for several datasets and two noise levels, $n$ is the size of the validation set.}
    \centering
     \scalebox{.90}{
    \begin{tabular}{lrr|cc|cc|cc}
      Dataset &  $n $&\#classes &  \multicolumn{2}{c}{NACP  }  & \multicolumn{2}{c}{ACNL}  &
        \multicolumn{2}{c}{CRCP} \\
                 & & & $\epsilon=0.1$ & $ \epsilon=0.2$ &
            $\epsilon=0.1$ &
            $\epsilon=0.2$ &
            $\epsilon=0.1$ &
            $\epsilon=0.2$ \\
            \hline
            CIFAR-10 & 5000 &10& 0.035 & 0.043 & 0.031 & 0.059 & \textbf{0.016} & \textbf{0.036}\\
            CIFAR-100 & 10000 &100& \textbf{0.025} & \textbf{0.030} & 0.077 & 0.163 & 0.039 & 0.088 \\
            TinyImagenet & 5000 & 200 & \textbf{0.035} & \textbf{0.043} & 0.175 & 0.382 & 0.078 & 0.176 \\
            ImageNet & 25000 &1000  & \textbf{0.016} & \textbf{0.019} & 0.194 & 0.466 & 0.079 & 0.177\\ \hline
    \end{tabular}
           }
    \label{tab:deltas_only_cv}
\end{table}

 \textbf{Conformal prediction results.}
 Table \ref{many_classes_datasets} reports the noisy label calibration results across 3 different conformal prediction scores, HPS, APS, and RAPS for four standard publicly available datasets,  CIFAR-10, CIFAR-100, Tiny-ImageNet, and ImageNet. In all cases, we used $1\!-\!\alpha=0.9$ and a noise level of $\epsilon=0.2$. The results indicate that in the case of a validation set with noisy labels, the Noisy-CP threshold became larger to facilitate the uncertainty induced by the noisy labels. This yielded larger prediction sets and the coverage was higher than the target coverage which was set to $90\%$. We can see that  NACP outperformed the ACNL, and CRCP methods for all datasets except for CIFAR-10 with fewer classes. Following Theorem \ref{theorem3-2}, we expect the gain in performance when using NACP versus Noisy-CP to increase with the number of classes, indeed validated empirically in Table \ref{many_classes_datasets}.  Here for CIFAR-100, Tiny-ImageNet, and ImageNet the ACNL  and CRCP  methods failed due to the large number of classes and the relatively small number of samples per class. For TinyImagenet and Imagenet, ACNL$^+$ falls back to Noisy-CP. A direct comparison of the finite sample correction terms $\Delta$ obtained by NACP, ACNL and CRCP is shown in Table  \ref{tab:deltas_only_cv}. Note that if $1-\alpha+\Delta >1$, the prediction set includes all the classes and thus it becomes useless. We can see in Table \ref{tab:deltas_only_cv} that this is the case for ACNL and CRCP in datasets with a large number of classes.

\textbf{Correction term analysis.} Following the theoretical and empirical results, the effectiveness of our method and baselines can be fully explained by the correction terms $\Delta$ each method guarantees as practitioners require coverage guarantee and therefore will use $1-\alpha+\Delta$.  Note that, as explained in Section \ref{sec:nacp_related_work},
our finite sample coverage guarantee is different from the one provided by the baseline method. Figure \ref{fig:corr_terms_analysis} presents the correction term as a function of calibration set size and the number of classes. 
Note that the NACP curve remains exactly the same across the 3 plots. 
Our main  contribution is grounded in the fact that NACP is not dependent on the number of classes $k$, clearly shown in plots as the number of classes grows.

\begin{figure}
  \begin{center}
    \includegraphics[width=0.3\textwidth]{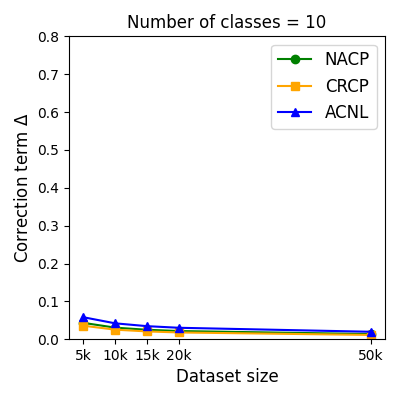}
    \includegraphics[width=0.3\textwidth]{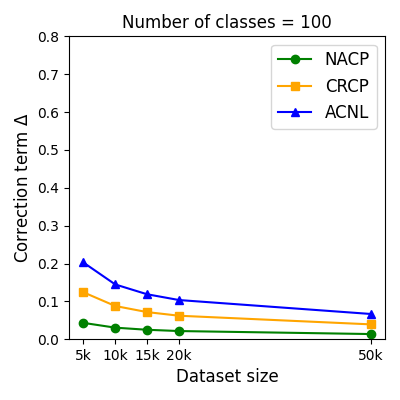}
    \includegraphics[width=0.3\textwidth]{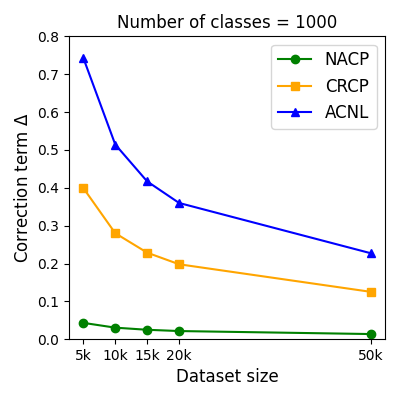}
  \end{center}
  \caption{Correction terms $\Delta$ of NACP, ACNL and CRCP as a function of the validation set size $n$ given $\epsilon=0.2$.
  We show results for 3 numbers of classes, 10, 100 and 1000.}
  \label{fig:corr_terms_analysis}
\end{figure}


\textbf{General noise transition matrix.}
Finally, we evaluate NRSCP and NACP on two common general noise matrices: Neighborhood noise and Random noise (see details in Section \ref{s:noisy}). While existing final sample terms bounds are not effective, in practice NACP  (without a finite sample correction) achieves the required coverage guarantee 
and the average prediction size is similar to the one obtained by the noise-free CP. We observe the same pattern when using uniform noise. This indicates that the current coverage guarantee bounds are too conservative. 
Table \ref{generalNoiseResults} shows the results on the CIFAR-100 dataset and the rand-APS technique when using NACP without finite sample correction term $\Delta$. Results show a clear dominance of NACP over Noisy-CP and NRCP on the two different noise models, presenting the robustness of NACP across various noise models. ACNL  (without the finite sample term) achieves here similar results. 

\begin{table*}
\caption{Rand-APS calibration results for $1\!-\!\alpha$ = 0.9  on CIFAR-100 dataset and two noise models. We report the mean and the std over 1000 different splits.}
\centering
\scalebox{.90}{
    \begin{tabular}{l|rr|rr}
    &  \multicolumn{2}{c|}{Neighborhood noise}  & \multicolumn{2}{c}{Random noise }  \\
    \hline
    CP Method & size $\downarrow$ &  coverage (\%)   &  size $\downarrow$ &  coverage (\%)     \\ \hline
    CP (oracle) &         6.48 $\pm$ 0.19 & 90.01 $\pm$ 0.41 &  6.48 $\pm$ 0.19 & 90.01 $\pm$ 0.41      \\
    Noisy-CP &  48.89 $\pm$ 1.13 & 99.80 $\pm$ 0.04   &  50.25 $\pm$ 1.37 & 99.82 $\pm$ 0.04 \\
    NRSCP & 12.82 $\pm$ 0.36 &  95.62 $\pm$ 0.21 & 37.01 $\pm$ 0.88 & 99.53 $\pm$ 0.06\\
    NR-CP (w/o $\Delta$)  &  \textbf{6.52 $\pm$ 0.22}& 90.03 $\pm$ 0.47
    & \textbf{6.45 $\pm$ 0.30} & 89.97 $\pm$ 0.57 \\  \hline
    \label{generalNoiseResults}
    \end{tabular}
    }
\end{table*}

Next, we show the results of the following experiments. In \emph{Calibration set size} we test the performance of various conformal prediction methods under noisy labels as a function of the calibration set size.  In \emph{NACP Agnostic to Different model architectures} we show that NACP is agnostic to different classification network architectures.
Finally, in \emph{Experiments on real noisy datasets} we report experiments on real noisy datasets where the noise is due to manual annotation error. We show that in this case,  by imposing a uniform noise model, we get better results than the one obtained by ignoring the noise and applying CP directly on the noisy validation set.   

\textbf{Calibration set size.}  In the following experiment, we test the performance of various conformal prediction methods under noisy labels as a function of the calibration set size on the ImageNet dataset. Figure \ref{fig:size_sweep} shows the mean size and coverage as a function of the calibration set size. In addition, the correction term $\Delta$ is depicted for ImageNet for each calibration set size. Results show that even with as little as 2500 images that correspond to 2.5 images per class the calibration results are almost on par with the oracle calibration given clean labels.

\begin{figure}
    \centering
    \includegraphics[width=0.32\linewidth]{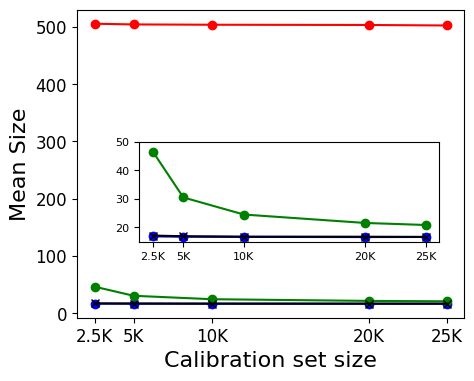}
    \includegraphics[width=0.32\linewidth]{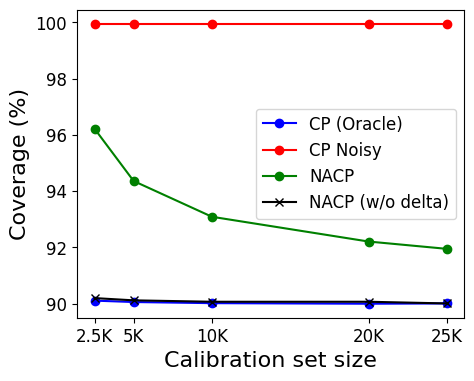} 
    \includegraphics[width=0.32\linewidth]{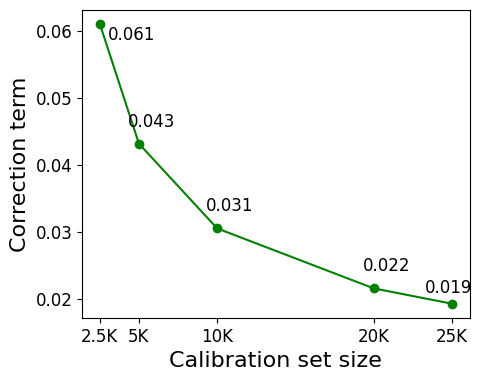} \\
    \hspace{1cm} (a) \hspace{4cm} (b) \hspace{4cm} (c)
    \caption{Noisy labels conformal prediction on ImageNet with different calibration set sizes. (a) Mean size (b) Coverage (\%), and (c) Correction terms $\Delta$ as a function of calibration set size.}
    \label{fig:size_sweep}
\end{figure}

\textbf{NACP Agnostic to Different model architectures.} Conformal prediction in general and our method NACP specifically has no assumption and is agnostic to the underlying model architecture.
In the following section, we verify that by experimenting with ImageNet across different model architectures. Table \ref{many_archs} presents the results of applying conformal prediction with and without noisy labels on ResNet18, ResNet50, DenseNet121, ViT-B16 (Vision transformer).

\begin{table}
\centering
\caption{ CP calibration results on ImageNet and various model architectures for $1\!-\!\alpha$ = 0.9
and $\epsilon=0.2$.  We report the mean and the std over 1000 different splits. Bold for best result with theoretical guarantees.}
\label{raps2appn}
   \scalebox{.65}{
\begin{tabular}{ll|rr|rr|rr|rr}
 & & \multicolumn{2}{c|}{ResNet-18} & \multicolumn{2}{c|}{ResNet-50}  & \multicolumn{2}{c|}{DenseNet121}  & \multicolumn{2}{c}{ViT-B16} \\
\hline
Dataset      &  CP Method & size $\downarrow$ &  coverage(\%)   &  size $\downarrow$ &  coverage(\%) & size $\downarrow$ &  coverage(\%)    & size $\downarrow$ &  coverage(\%)     \\ \hline

APS    &  CP (oracle) &  16.6 $\pm$ 0.33 & 90.0 $\pm$ 0.26  &  13.9 $\pm$ 0.34 & 90.0 $\pm$ 0.28   &   12.0 $\pm$ 0.28 & 90.0 $\pm$ 0.27  & 10.7 $\pm$ 0.38 & 90.0 $\pm$ 0.25   \\
            &  Noisy-CP &  502.6 $\pm$ 8.56 & 99.9 $\pm$ 0.01  &  505.5 $\pm$ 8.11 & 99.9 $\pm$ 0.01   &   502.8 $\pm$ 8.46 & 99.9 $\pm$ 0.01  & 506.8 $\pm$ 8.14 & 99.8 $\pm$ 0.02   \\
            &  NR-CP (w/o $\Delta$)     &  \  {16.7 $\pm$ 0.51} & \  {90.0 $\pm$ 0.34}  &  13.9 $\pm$ 0.47 & 90.0 $\pm$ 0.37   &   12.0 $\pm$ 0.38 & 90.0 $\pm$ 0.34  & 10.7 $\pm$ 0.55 & 90.0 $\pm$ 0.35   \\ 
            & ACNL & 1000.0 $ \pm 0.00$  & 100.0  $\pm$ 0.00   & 1000.0 $ \pm 0.00$  & 100.0  $\pm$ 0.00  & 1000.0 $ \pm 0.00$  & 100.0  $\pm$ 0.00  & 1000.0 $ \pm 0.00$  & 100.0  $\pm$ 0.00   \\
            &  ACNL$^+$ &  502.6 $\pm$ 8.56 & 99.9 $\pm$ 0.01  &  505.5 $\pm$ 8.11 & 99.9 $\pm$ 0.01   &   502.8 $\pm$ 8.46 & 99.9 $\pm$ 0.01  & 506.8 $\pm$ 8.14 & 99.8 $\pm$ 0.02   \\
            & CRCP & 1000.0 $ \pm 0.00$  & 100.0  $\pm$ 0.00  & 1000.0 $ \pm 0.00$  & 100.0  $\pm$ 0.00  & 1000.0 $ \pm 0.00$  & 100.0  $\pm$ 0.00  & 1000.0 $ \pm 0.00$  & 100.0  $\pm$ 0.00                   \\
            &  NACP   &  \textbf{20.9 $\pm$ 0.72} & \textbf{91.9 $\pm$ 0.32}  &  \textbf{17.4 $\pm$ 0.62} & \textbf{91.9 $\pm$ 0.36}   &   \textbf{15.1 $\pm$ 0.55} & \textbf{91.9 $\pm$ 0.34}  & \textbf{15.5 $\pm$ 0.81} & \textbf{91.9 $\pm$ 0.31} \\
            \hline               
RAPS   &  CP (oracle)  &  6.3 $\pm$ 0.06 & 90.0 $\pm$ 0.27  &  4.5 $\pm$ 0.05 & 89.9 $\pm$ 0.29   &   4.7 $\pm$ 0.06 & 90.0 $\pm$ 0.26  &  2.6 $\pm$ 0.04 & 90.0 $\pm$ 0.25   \\
            &  Noisy-CP  &  501.6 $\pm$ 8.51 & 99.9 $\pm$ 0.01  &  501.1 $\pm$ 8.85 & 99.9 $\pm$ 0.01   &   501.9 $\pm$ 8.80 & 99.9 $\pm$ 0.01  & 505.8 $\pm$ 7.90 & 99.9 $\pm$ 0.01   \\
            &  NR-CP (w/o $\Delta$)    &  \  {6.3 $\pm$ 0.10} & \  {90.0 $\pm$ 0.36}  &  4.5 $\pm$ 0.06 & 90.0 $\pm$ 0.35  & 4.7 $\pm$ 0.08 & 90.0 $\pm$ 0.36  &   2.6 $\pm$ 0.05 & 90.0 $\pm$ 0.36    \\
            & ACNL & 1000.0 $ \pm 0.00$  & 100.0  $\pm$ 0.00  & 1000.0 $ \pm 0.00$  & 100.0  $\pm$ 0.00  & 1000.0 $ \pm 0.00$  & 100.0  $\pm$ 0.00  & 1000.0 $ \pm 0.00$  & 100.0  $\pm$ 0.00            \\
            &  ACNL$^+$  &  501.6 $\pm$ 8.51 & 99.9 $\pm$ 0.01  &  501.1 $\pm$ 8.85 & 99.9 $\pm$ 0.01   &   501.9 $\pm$ 8.80 & 99.9 $\pm$ 0.01  & 505.8 $\pm$ 7.90 & 99.9 $\pm$ 0.01   \\
            & CRCP & 1000.0 $ \pm 0.00$  & 100.0  $\pm$ 0.00  & 1000.0 $ \pm 0.00$  & 100.0  $\pm$ 0.00  & 1000.0 $ \pm 0.00$  & 100.0  $\pm$ 0.00  & 1000.0 $ \pm 0.00$  & 100.0  $\pm$ 0.00                   \\
            &  NACP    &  \textbf{7.1 $\pm$ 0.13} & \textbf{91.9 $\pm$ 0.34}  &  \textbf{5.0 $\pm$ 0.08} & \textbf{91.9 $\pm$ 0.34}   &   \textbf{5.3 $\pm$ 0.10} & \textbf{92.0 $\pm$ 0.35}  &  \textbf{2.9 $\pm$ 0.07} & \textbf{92.0 $\pm$ 0.30}    \\
            \hline                
HPS         &  CP (oracle) &  3.6 $\pm$ 0.07 & 90.0 $\pm$ 0.28  &  2.0 $\pm$ 0.03 & 90.0 $\pm$ 0.28   &   2.4 $\pm$ 0.03 & 90.0 $\pm$ 0.25  &   1.5 $\pm$ 0.02 & 90.0 $\pm$ 0.26   \\
            &  Noisy-CP  &  501.3 $\pm$ 10.2 & 100.0 $\pm$ 0.01   &  502.4 $\pm$ 9.50 & 99.9 $\pm$ 0.01   &   502.3 $\pm$ 10.3 & 99.9 $\pm$ 0.20  &  504.3 $\pm$ 8.19 & 99.9 $\pm$ 0.01   \\
             &  NR-CP (w/o $\Delta$)    &  \  {3.6 $\pm$ 0.14} & \  {90.0 $\pm$ 0.38} & 2.1 $\pm$ 0.06 & 90.0 $\pm$ 0.38  &  2.4 $\pm$ 0.07 & 90.0 $\pm$ 0.34   &   1.5 $\pm$ 0.03 & 90.0 $\pm$ 0.35   \\
            & ACNL & 1000.0 $ \pm 0.00$  & 100.0  $\pm$ 0.00  & 1000.0 $ \pm 0.00$  & 100.0  $\pm$ 0.00  & 1000.0 $ \pm 0.00$  & 100.0  $\pm$ 0.00  & 1000.0 $ \pm 0.00$  & 100.0  $\pm$ 0.00                   \\
            &  ACNL$^+$  &  501.3 $\pm$ 10.2 & 100.0 $\pm$ 0.01   &  502.4 $\pm$ 9.50 & 99.9 $\pm$ 0.01   &   502.3 $\pm$ 10.3 & 99.9 $\pm$ 0.20  &  504.3 $\pm$ 8.19 & 99.9 $\pm$ 0.01   \\
            & CRCP & 1000.0 $ \pm 0.00$  & 100.0  $\pm$ 0.00  & 1000.0 $ \pm 0.00$  & 100.0  $\pm$ 0.00  & 1000.0 $ \pm 0.00$  & 100.0  $\pm$ 0.00  & 1000.0 $ \pm 0.00$  & 100.0  $\pm$ 0.00                   \\
            &  NACP   &  \textbf{4.8 $\pm$ 0.23} & \textbf{91.9 $\pm$ 0.36}  &  \textbf{2.6 $\pm$ 0.10} & \textbf{91.9 $\pm$ 0.37}      & \textbf{3.1 $\pm$ 0.12} & \textbf{91.9 $\pm$ 0.34} & \textbf{1.7 $\pm$ 0.04} & \textbf{91.9 $\pm$ 0.33}  \\
           \hline
\end{tabular}
    }
\label{many_archs}
\end{table}

\textbf{Experiments on real noisy datasets.} We evaluate our methods on real-world noisy datasets, focusing on the CIFAR-10N dataset, which contains human annotation errors and was introduced in \cite{wei2021learning}. Specifically, we analyze four variations of CIFAR-10N: CIFAR-10-aggregate and CIFAR-10-random-{1,2,3}. CIFAR-10-aggregate combines three noisy labels using majority voting. If the three submitted labels differ, the aggregated label is randomly selected from the three options. CIFAR-10-random-i ($i \in \{1, 2, 3\}$) refers to the i-th submitted label for each image. Importantly, the data collection process ensures that no image is labeled multiple times by the same annotator.

While this noise model realistically reflects human annotation behavior, it does not adhere to a strict uniform noise distribution. For our experiments with NACP and baseline methods, we adopt the noise ratio reported in \cite{wei2021learning} to compute $\epsilon$ for the noise-aware conformal prediction algorithm. Notably, \cite{wei2021learning} defines the noise rate with $\epsilon$ such that $T_{i,i} = 1-\epsilon$. In contrast, our notation uses $\epsilon$ with the formulation $T_{i,i} = 1-\epsilon + \frac{\epsilon}{k}$. Consequently, we adjust $\epsilon$ values from the original paper to align with our approach.

The purpose of this experiment is to demonstrate that even when the true noise—arising from annotators—is not exactly uniform, approximating it as such can still yield effective performance in real-world datasets and scenarios. 

Table \ref{real_cifar10n} summarizes the results for the four CIFAR-10N variations. In this real-world scenario, clean labels (CP Oracle) are unavailable. Instead, Noisy-CP results reflect calibration using annotator-provided labels as-is.
Our noise-aware approach demonstrates a consistent improvement over this baseline.

\begin{table}[H]
\centering
\caption{ CP calibration results on CIFAR-10N for $1\!-\!\alpha$ = 0.9
and $\epsilon=0.2$.  We report the mean and the std over 1000 different splits. Bold for best result with theoretical guarantees.}
\label{raps21}
   \scalebox{.73}{
\begin{tabular}{ll|cc|cc|cc|cc}
 & & \multicolumn{2}{c|}{CIFAR-10N-aggregate}  & \multicolumn{2}{c|}{CIFAR-10N-random-1}  & \multicolumn{2}{c}{CIFAR-10N-random-2} & \multicolumn{2}{c}{CIFAR-10N-random-3} \\
 & & \multicolumn{2}{c|}{(10.0\%)}  & \multicolumn{2}{c|}{(19.1\%)}  & \multicolumn{2}{c}{ (20.1\%)} & \multicolumn{2}{c}{(19.6\% )} \\
\hline
Dataset      &  CP Method & size $\downarrow$ &  coverage(\%)   &  size $\downarrow$ &  coverage(\%) & size $\downarrow$ &  coverage(\%)    & size $\downarrow$ &  coverage(\%)     \\ \hline

rand-APS          &  Noisy-CP &  \textbf{1.76} & \textbf{91.73} & 2.00 & 93.18 & 2.13 & 93.39 & 2.44 & 96.45  \\
                  &  NACP    &  1.58 & 88.49 & \textbf{1.82} & \textbf{91.38} & \textbf{1.93} & \textbf{91.67} & \textbf{2.13} & \textbf{94.50}  \\ \hline
                  
HPS               &  Noisy-CP &  \textbf{1.32} & \textbf{91.19} & 1.67 & 92.97 & 1.79 & 93.15 & 2.16 & 96.31  \\
                  &  NACP   &  1.38 & 91.85 & \textbf{1.45} & \textbf{91.00} & \textbf{1.47} & \textbf{90.41} & \textbf{1.53} & \textbf{92.18}  \\ \hline

\end{tabular}
    }
\label{real_cifar10n}
\end{table}
\chapter{Local Differential Private Conformal Prediction}
\label{ch:ldpcp}

In this chapter, we explore the intersection of conformal prediction (CP) and privacy-preserving techniques, introducing a Local Differentially Private Conformal Prediction (LDP-CP) framework. While traditional CP methods rely on cleanly labeled calibration sets, data privacy concerns often prevent access to true labels, particularly in sensitive domains like medical or personal data. Our proposed framework addresses the challenge of maintaining valid prediction set coverage while protecting user data through Local Differential Privacy (LDP). We present two complementary approaches—LDP-CP-L and LDP-CP-S—that cater to varying privacy goals, computational resources, and data scenarios, offering a robust solution for secure, decentralized data handling.

\section{Problem Statement}

Conformal prediction typically relies on having access to a cleanly labeled calibration set to set the CP threshold. However, in many real-world scenarios, such clean labels may be unavailable due to data privacy concerns.  For instance, in scenarios involving medical or personal data, or if the data aggregator (e.g.\ a cloud-based ML service) is considered \emph{untrusted}, privacy constraints may prohibit access to true labels and data, requiring a privacy-preserving mechanism that provides \emph{Local Differential Privacy} (LDP). In such situations, the aggregator might only be allowed to view \emph{noisy versions} of the labels.

Next we tackle the challenge of applying conformal prediction when validation-set labels (or conformity scores) must be protected through privacy-preserving mechanisms. Specifically, we introduce a Local Differentially Private Conformal Prediction (LDP-CP) framework that balances privacy with real-world considerations such as user-side computational capacity, aggregator trustworthiness, and intellectual property.

We present two complementary LDP-CP solutions. LDP-CP-L locally perturbs labels using a randomized response, that shifts all score-related computations to the aggregator. This design suits cases where users have minimal computational resources, or the model’s internal structure is not disclosed to them, while not sharing their sensitive labels. However,  it only achieves label-DP \cite{beimel2013private, ghazi2021deep}. In contrast, LDP-CP-S allows users to generate and locally randomize their own conformity scores, which is ideal for scenarios where both feature and label privacy are paramount and the user can handle additional computational tasks. We also offer guidelines on choosing which method aligns better with specific privacy goals, resource constraints, and per-dataset properties such as sample size and the number of classes.

By considering both score-based and label-based perturbations, we provide a flexible framework that adapts to various privacy budgets and computational setups. This framework guarantees valid coverage for the true labels asymptotically and in finite samples. As a result, LDP-CP aligns well with modern demands for secure, decentralized data handling.

\paragraph{Relationship to Past Work.} Conformal prediction in the presence of label noise has received growing attention \citep{einbinder2022conformal,sesia2023adaptive, Clarkson2024} and Chapter \ref{ch:nrcp_nacp},
 with methods that adjust the calibration threshold when a known noise matrix corrupts labels. Simultaneously, LDP has become a leading approach for protecting sensitive data at the user end \citep{warner1965randomized,kairouz2016discrete,wang2017locally}, allowing users to randomize their own inputs (e.g., labels) before sharing with an untrusted aggregator. We bridge these lines of work by recognizing that local DP can be \emph{seen as a known noisy channel} on the labels or conformal scores, we can plug that channel into a ``noise-aware" conformal procedure. The end result is a conformal predictor whose coverage remains valid, while also protecting each user's label or score via $\varepsilon$-LDP. To the best of our knowledge, there is no prior work on LDP conformal prediction. Thus, we are the first to research local differential private conformal prediction. The closest work to ours is that of \citet{angelopoulos2022private}, which suggests a centrally differentially private conformal prediction procedure where a trusted aggregator has access to raw data. In our setting, the aggregator never observes true labels directly, hence, closing a key gap in privacy-preserving conformal prediction research. 
 
This dissertation makes three key contributions:
\begin{itemize}
    \item We introduce two complementary LDP-CP methods, LDP-CP-S and LDP-CP-L, that accommodate different privacy constraints and computational setups. (Figures~\ref{fig:main_pipeline_l} and ~\ref{fig:main_pipeline_s}).
    \item We prove coverage guarantees for both methods under LDP constraints. (Theorems ~\ref{theorem:main_ldpcpl} and~\ref{theorem:main_ldpcps}).
    \item We demonstrate the feasibility and effectiveness of our approaches in privacy-sensitive applications, such as medical data analysis and untrusted cloud-based ML services (see Section~\ref{sec:exps}).
\end{itemize}
This work bridges the gap between privacy-preserving mechanisms and conformal prediction, by providing a foundation for robust and private uncertainty quantification in real-world scenarios. Unlike other methods that rely on trusted aggregators, our approaches ensure privacy directly at the user level, which aligns with modern demands for decentralized privacy protection.


\section{Local-DP Conformal Prediction on Labels}
\label{subsec:ldp-cp-labels}

In many real-world contexts, users lack the ability—or permission—to compute model-based scores on their own devices. This may be due to limited computational resources, a restricted API that only provides predictions, or a proprietary model architecture. To address these cases, we propose an LDP mechanism that randomizes each user’s label and then applies a noise-aware conformal calibration at the aggregator. This design protects sensitive labels while remaining model-agnostic, since the aggregator performs all the scoring steps. Figure \ref{fig:main_pipeline_l} illustrates the overall pipeline.

\begin{figure*}[t!]
    \centering
    \includegraphics[width=\linewidth]{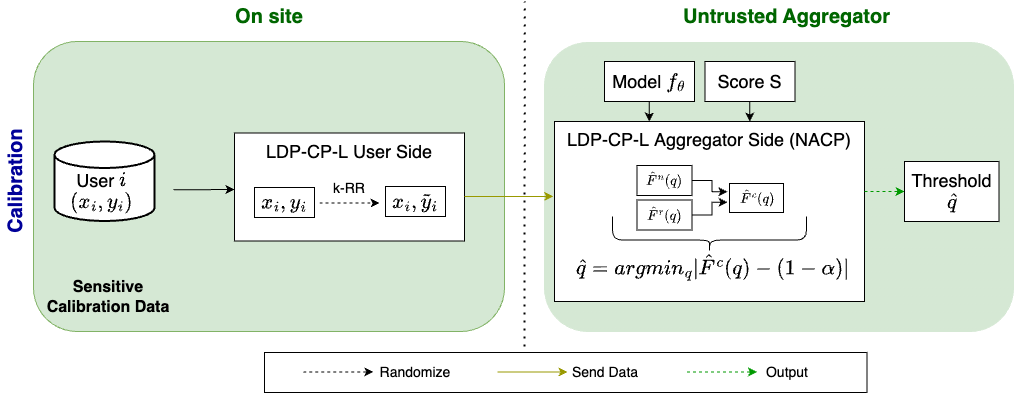}
    \caption{Local Differential Private Conformal Prediction (LDP-CP-L) Pipeline (Best viewed in color).}
    \label{fig:main_pipeline_l}
\end{figure*}

\begin{algorithm}
\caption{LDP-CP-L - User-side Procedure}
\label{alg:ldp_cp_labels_user}
\begin{algorithmic}[1]
\State \textbf{Input:} Calibration set $\{(x_i, y_i)\}_{i=1}^{n}$, privacy parameter $\epsilon$, number of labels $k$
\State \textbf{Output:} Noisy calibration set $\{(x_i, \tilde{y}_i)\}_{i=1}^{n}$
\State Set $\beta = \frac{k}{k-1+e^{\epsilon}}$
\For {each user $i \in \{1, \dots, n\}$}
    \State Apply $k$-ary Randomized Response (RR) to label $y_i$ to obtain $\tilde{y}_i$: \\
    \[
    p(\tilde{y} = t' \mid y=t)
    =\,
    {1}_{\{t=t'\}}(1-\beta) + \frac{\beta}{k}
    \]
    \State Send $(x_i, \tilde{y}_i)$ to the aggregator
\EndFor
\end{algorithmic}
\label{alg:ldpcpl-user}
\end{algorithm}

We next unpack LDP-CP-L by first describing the user's side procedure that applies to its data point, and then the procedure done by the untrusted aggregator to compute the conformal prediction threshold.

\textbf{User's side Procedure.} For each user \(i\) with pair $(x_i, y_i)$:
\begin{enumerate}
\item User applies the  LDP mechanism - $k$-ary RR to their label. This yields $\tilde{y}_i$.
\item Report $(x_i, \tilde{y}_i)$ to the aggregator.
\end{enumerate}

\textbf{Aggregator's Procedure.} Once the aggregator collects $\{x_i, \tilde{y}_i\}_{i=1}^{n}$ (it does not see $y_i$), it runs a ``noise-aware'' conformal method on the pairs $(x_i,\tilde{y}_i)$. Since the aggregator knows the noise model, it can estimate the coverage for the \emph{true} label.

A key advantage of this label-perturbation strategy is that the aggregator never observes raw labels, thus meeting label-LDP guarantees. Meanwhile, by modeling a  $k$-ary Randomized Response as a known noise channel, the aggregator recovers the necessary calibration adjustments to preserve near-correct coverage on the true labels. In what follows, we outline the procedure in more detail, leading to our first main theorem and contribution culminating in Theorem~\ref{theorem:main_ldpcpl}.

For $k$-RR, each label is replaced by a random label with probability $\beta$, known \emph{channel} from $y$ to $\tilde{y}$. Then,
\[
p(\tilde{y} = j \mid y=i)
=\,
{1}_{\{i=j\}}(1-\beta) + \frac{\beta}{k}; \quad \beta = \frac{k}{k-1+e^{\epsilon}}
\]

For each threshold $q$, let $F^c(q) = p\bigl(S(x,y)\le q\bigr)$, $F^n(q) = p\bigl(S(x,\tilde{y})\le q\bigr)$,  $F^r(q) = p\bigl(S(x,u)\le q\bigr), \text{ $u \sim Unif(1,\dots,k)$}$ coverage on true labels, noisy labels, and uniform respectively. 
    One can see that 
    \begin{equation}
    F^n(q) \;=\; (1-\beta)\,F^c(q)\;+\;\beta\,F^r(q).
    \label{fnq}
    \end{equation}
Given a calibration set $\{(x_i,\tilde{y}_i)\}$, we can \emph{estimate} $F^n(q)$ and $F^r(q)$ by
    \[
    \hat{F}^n(q)
    = \frac{1}{n}\sum_{i=1}^n {1}\bigl(S(x_i,\tilde{y}_i)\le q\bigr); \,\,\,\hat{F}^r(q)
    = \frac{1}{n}\sum_{i=1}^n \frac{|C_q(x_i)|}{k},
    \]

Thus, rearranging Eq. (\ref{fnq}) we obtain:
\[
\hat{F}^c(q)
= \frac{\hat{F}^n(q) - \beta\hat{F}^r(q)}{\,1-\beta}.
\]

\begin{algorithm}
\caption{LDP-CP-L - Aggregator-side Procedure}
\label{alg:ldp_cp_labels_agg}
\begin{algorithmic}[1]
\State \textbf{Input:} Noisy calibration set $\{(x_i, \tilde{y}_i)\}_{i=1}^{n}$, privacy parameter $\epsilon$, number of labels $k$, target coverage $1-\alpha$
\State \textbf{Output:} Threshold $q$ ensuring private coverage $1-\alpha-\Delta$ on true labels

\State Collect the noisy calibration set $\{(x_i, \tilde{y}_i)\}_{i=1}^{n}$
\State Compute $\beta = \frac{k}{k-1+e^{\epsilon}}$
\State Estimate $\hat{F}^n(q)$ and $\hat{F}^r(q)$ for candidate thresholds $q$:
\[ \hat{F}^n(q) = \frac{1}{n} \sum_{i=1}^{n} \mathbf{1}(S(x_i, \tilde{y}_i) \leq q); \quad
\hat{F}^r(q) = \frac{1}{n} \sum_{i=1}^{n} \frac{|C_q(x_i)|}{k}
\]

\State Compute $\hat{F}^c(q)$:
\[ \hat{F}^c(q) = \frac{\hat{F}^n(q) - \beta \cdot \hat{F}^r(q)}{1 - \beta} \]
\State Initialize $s_{low}=0, s_{high}=1$
\For {$j=1,...,T$}
    \State Set $q^{(j)} = \frac{s_{low}+s_{high}}{2}$
    \State Obtain $Z^{(j)} = \hat{F}^c(q^{(j)})$
    \State \textbf{if} $Z^{(j)} \gt (1-\alpha) + \frac{\Delta}{2}$ \textbf{then} $s_{high}=q^{(j)}$
    \State \textbf{else if} $Z^{(j)} \lt (1-\alpha) - \frac{\Delta}{2}$ \textbf{then} {$s_{low}=q^{(j)}$}
    \State \textbf{else} break
\EndFor
\State Return the threshold $q^{(j)}$
\end{algorithmic}
\label{alg:ldpcpl-agg}
\end{algorithm}

Hence, to find a threshold $q$ that yields coverage $1-\alpha$ on the \emph{true} labels in a private manner, we solve $\hat{F}^c(q)=1-\alpha$. In practice, we do a binary search over candidate thresholds which continues until either the estimate $\hat{Z}^{(j)}$ satisfies $|\hat{Z}^{(j)} - (1-\alpha)| \leq \Delta$ or the interval length $s_{high} - s_{low}$ becomes smaller than a predefined threshold $\tau$. This is exactly the ``noise-aware'' threshold that corrects for the $k$-RR noise mechanism. User's side and aggregator's side procedures depicted in Algorithm boxes \ref{alg:ldpcpl-user} and \ref{alg:ldpcpl-agg} respectively.

In Chapter \ref{ch:nrcp_nacp} we considered this exact scenario and referred to this procedure as \textbf{NACP} (Noise-Aware Conformal Prediction). However, while we were motivated by problems related to a general noisy channel (e.g. experts' mistakes/disagreements as to the true label), here we use the noisy channel as a privacy protection for the labels in the calibration set. As it turns out, using $k$-RR falls neatly into their paradigm and in turn yields Theorem~\ref{theorem:main_ldpcpl}.
Note that the $k$-RR mechanism is implemented here, instead of more advanced LDP methods, such as RAPPOR, since it aligns well with the NACP framework.

\section{Local-DP Conformal Prediction on Scores}
\label{subsec:ldp-cp-scores}

In some scenarios, users may be able to compute the \emph{full conformity score} locally. Concretely, the aggregator (or model provider) sends the neural network’s parameters or logits to each user, who then computes the conformity score \(S(x_i, y_i)\) for the ground-truth label \(y_i\) on their own. 
This approach leverages the ability to estimate quantiles of a distribution using noisy scores that are locally privatized by the users.
We employ the LDP-binary search algorithm described in  \cite{gaboardi2019locallyprivatemeanestimation} to estimate the $(1-\alpha)$-quantile of the scores.
This method is incorporated into the conformal prediction framework by estimating the $(1-\alpha)$-quantile of the conformity scores derived locally by the users. Once the quantile is estimated, it is used as the threshold for constructing prediction sets. The use of LDP ensures that the process is privacy-preserving, whereas the quantile estimation guarantees accurate calibration of prediction intervals.
By combining the strengths of binary search and randomized response, this method offers a robust approach to privacy-preserving conformal prediction that is both practical and theoretically sound.
The settings are depicted in Figure \ref{fig:main_pipeline_s}.

\textbf{User's side Procedure.} For each user \(i\) with pair \((x_i, y_i)\):
\begin{enumerate}
    \item Compute the conformity score \( S_i = S(x_i, y_i) \) locally.
    \item Compare \( S_i \) with a threshold \( q(j) \) provided by the aggregator. (Note that each user has a single interaction with the aggregator -- see details in Aggregator's procedure).
    \item Return a binary response using randomized response (RR), ensuring \( \varepsilon \)-LDP.
\end{enumerate}

\begin{figure*}[t!]
    \centering
    \includegraphics[trim=0cm 0cm 0cm 0cm, width=0.99\linewidth]{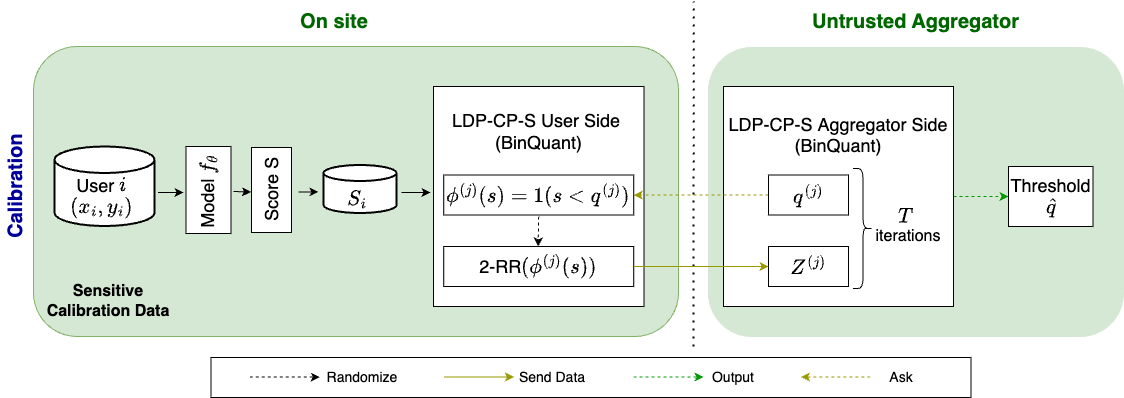}
    \caption{Local Differential Private Conformal Prediction (LDP-CP-S) Pipeline (Best viewed in color).}
    \label{fig:main_pipeline_s}
\end{figure*}

\textbf{Aggregator Procedure.} The aggregator collects the binary responses from users over a binary search procedure. The process starts by defining an initial range that is guaranteed to contain the desired $(1 - \alpha)$-quantile. The range is repeatedly divided in half through a binary search procedure. At each step $j$, a midpoint $q(j)$ is calculated, and a subset of the users is used to privately estimate how many data points fall below this midpoint. This estimation is done using randomized response, which ensures that the algorithm complies with privacy guarantees. Depending on the results of this estimation, the algorithm updates the range: if too many data points are estimated to fall below the midpoint, the upper boundary is adjusted; if too few, the lower boundary is adjusted. Note that the subsets of users used at each step are disjoint, which assures that each user has at most one interaction with the aggregator. This process continues until either the estimate $\hat{Z}^{(j)}$ satisfies $|\hat{Z}^{(j)} - (1-\alpha)| \leq \Delta$ or the interval length $s_{high} - s_{low}$ becomes smaller than a predefined threshold $\tau$. 
In Theorem~\ref{theorem:main_ldpcps} we give the concrete sample complexity, under which we can estimate $\hat{Z}^{(j)}$ both privately and accurately in all iterations of the binary search.
Once the aggregator finds the desired estimation, denoted \( \hat{q} \), the aggregator can now construct the prediction set as:
\(
   C_{\hat{q}}(x) \;=\; \{\, y \mid S(x,y) \le \hat{q}\}.
\)
User's side and aggregator's side procedures depicted in Algorithm box \ref{alg:ldpcps}.



\begin{algorithm}
\caption{LDP-CP-S - Both sides}
\label{alg:ldp_cp_s}
\begin{algorithmic}[1]
\State \textbf{Input:} Calibration set $\{(x_i, y_i)\}_{i=1}^{n}$, privacy parameter $\epsilon$, number of labels $k$, desired coverage $1-\alpha$, $\Delta$, T
\State \textbf{Output:} Threshold $q$ ensuring private coverage $1-\alpha-\Delta$ on true labels
\State Initialize $j=0, n'=\frac{n}{T}, s_{high}=Q_{max}, s_{low}=Q_{min}$
\For {$j=1,...,T$}
    \State Select users $\mathcal{U}^{(j)}=\{j \cdot n' + 1, j \cdot n' + 2,..., (j + 1) \cdot n'\}$
    \State Set $q^{(j)} = \frac{s_{low}+s_{high}}{2}$
    \State Individual users $i\in \mathcal{U}^{(j)}$: (1) compute their score $s=s(x_i, y_i)$, (2) sets $b_i = 1(s \lt q^{(j)})$, (3) sends the aggregator $RR(b_i)$
    \State Obtain $Z^{(j)} = \frac{e^\epsilon + 1}{e^\epsilon - 1}\cdot \frac{1}{n}\sum_{i\in \mathcal{U}^{(j)}} RR(b_i) - \frac{1}{e^\epsilon - 1}$
    \State \textbf{if} $Z^{(j)} \gt (1-\alpha) + \frac{\Delta}{2}$ \textbf{then} $s_{high}=q^{(j)}$
    \State \textbf{else if} $Z^{(j)} \lt (1-\alpha) - \frac{\Delta}{2}$ \textbf{then} {$s_{low}=q^{(j)}$}
    \State \textbf{else} break
\EndFor
\State Return $q^{(j)}$
\end{algorithmic}
\label{alg:ldpcps}
\end{algorithm}

\section{Theoretical Guarantees}
We now present our main theoretical results.
\begin{theorem}[\textbf{LDP-CP-L}]
\label{theorem:main_ldpcpl}
    Fix $\alpha, \delta, \Delta>0$. There exists an $\epsilon$-local differentially private algorithm that draws $n = O\left(\frac{\log(\nicefrac{1}{\delta})}{\Delta^2h^2}\right)$ exchangeable samples from any admissible distribution $\mathcal{D}$, where $h = \frac{1-\beta}{1+\beta}$, and $\beta = \frac{k}{k-1+e^{\epsilon}}$, and, within at most $T = \lceil\log(\nicefrac{1}{\tau})\rceil$ iterations, produces an estimate $\hat{q}$ that satisfies $\Pr\bigl(y\in C_{\hat{q}}(x)\bigr) \;\ge\; 1-\alpha - \Delta$ with probability at least $1 - \delta$, where $1-\alpha$ is the desired coverage and $\tau$ is an a-priori bound on the length of an interval that can hold $\Delta$-probability mass.
\end{theorem}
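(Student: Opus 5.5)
The plan is to combine three ingredients: privacy of the $k$-RR step, uniform accuracy of $\hat{F}^c$ taken from Lemma~\ref{nacp_lemma_delta}, and a deterministic analysis of the aggregator's binary search in Algorithm~\ref{alg:ldpcpl-agg}.

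\textbf{Privacy.} Each user applies $k$-ary randomized response to $y_i$ and sends $(x_i,\tilde{y}_i)$. For this mechanism, $Q(\tilde{y}\mid y)\in\{\tfrac{e^\epsilon}{k-1+e^\epsilon},\tfrac{1}{k-1+e^\epsilon}\}$. Indeed, $1-\beta+\beta/k=\tfrac{e^\epsilon}{k-1+e^\epsilon}$ and $\beta/k=\tfrac{1}{k-1+e^\epsilon}$. The ratio between any two outputs is therefore at most $e^\epsilon$, so the released labels are $\epsilon$-LDP; as the paper notes, this is label-DP. The aggregator only post-processes these reports, so the whole algorithm inherits $\epsilon$-LDP.

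\textbf{Uniform accuracy.} Under $k$-RR, the noise is exactly the uniform model (\ref{nnoise}) with $\epsilon$ replaced by $\beta$, and identity (\ref{fnq}) holds. I will invoke Lemma~\ref{nacp_lemma_delta} with $\beta$ in place of $\epsilon$ and target accuracy $\Delta/2$. This gives
\[
\sup_q\bigl|\hat{F}^c(q)-F^c(q)\bigr|\le \Delta/2
\]
with probability at least $1-\delta$, provided $n\ge \frac{2\log(4/\delta)}{\Delta^2h^2}=O\bigl(\frac{\log(1/\delta)}{\Delta^2h^2}\bigr)$ with $h=\frac{1-\beta}{1+\beta}$. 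Two points need care here.

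First, the lemma's DKW argument treats $\hat{F}^r$ as an empirical CDF. It is actually an average of $|C_q(x_i)|/k$, i.e.\ a mixture of $k$ empirical CDFs of $S(x_i,j)$. I will handle this either by noting that $|C_q(x_i)|/k = p_{u}(S(x_i,u)\le q)$ and applying DKW to the sample $S(x_i,u_i)$ with auxiliary uniform $u_i$, or by a direct uniform-convergence argument for averages of monotone $[0,1]$-valued functions. The second route gives the same rate.

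Second, samples are assumed exchangeable; I will read this as i.i.d.\ for the DKW step, which is what ``admissible distribution'' should mean.

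\textbf{Binary search.} Condition on the good event. Write $Z^{(j)}=\hat{F}^c(q^{(j)})$.
\begin{itemize}
\item If the loop breaks at step $j$, then $|Z^{(j)}-(1-\alpha)|\le \Delta/2$. Hence $F^c(\hat{q})\ge Z^{(j)}-\Delta/2\ge 1-\alpha-\Delta$, which is the claimed coverage since $\Pr(y\in C_{\hat q}(x))=F^c(\hat q)$.
\item If $Z^{(j)}>1-\alpha+\Delta/2$, then $F^c(q^{(j)})>1-\alpha$, so moving $s_{high}$ down to $q^{(j)}$ keeps $s_{high}$ a point with true coverage above $1-\alpha$.
\item If $Z^{(j)}<1-\alpha-\Delta/2$, then $F^c(q^{(j)})<1-\alpha$, so $s_{low}$ stays below the true $(1-\alpha)$-quantile.
\end{itemize}
By monotonicity of $F^c$, these updates maintain the invariant $F^c(s_{low})<1-\alpha<F^c(s_{high})$, with the natural conventions at $0$ and $1$ for scores in $[0,1]$.

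After $T=\lceil\log(1/\tau)\rceil$ halvings the interval has length at most $\tau$. By the definition of $\tau$, it then carries mass less than $\Delta$, so $F^c(s_{high})-F^c(s_{low})<\Delta$. It follows that any point in the interval has $F^c\ge F^c(s_{low})>1-\alpha-\Delta$. To be safe, the algorithm should return $s_{high}$ on non-break termination; I will argue that this is allowed, or that $q^{(T)}$ lies in the interval and therefore works.

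\textbf{Main obstacle.} I expect the hardest step to be justifying uniform convergence of $\hat{F}^r$ (the DKW substitute) together with the termination/$\tau$ argument. The sample-size constant itself is routine.
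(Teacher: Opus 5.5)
Your proposal is correct and follows the same route as the paper: $\epsilon$-LDP of $k$-RR plus post-processing, then the NACP uniform-deviation bound (Lemma~\ref{nacp_lemma_delta} / Theorem~\ref{theorem3}) with $\epsilon$ replaced by $\beta$. The paper's proof stops at that one-line substitution. Your additions fill in steps the paper leaves implicit without changing the argument: the analysis of the binary search's $\pm\Delta/2$ tolerance window and $\tau$-based termination, and the caveat that $\hat{F}^r$ is an average of per-sample CDFs rather than a plain empirical CDF.
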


\begin{proof}
The LDP-CP mechanism applies k-RR to the input data, ensuring $\epsilon$-local differential privacy \citep{kairouz2016discrete}.
Additionally, using the post-processing property of differential privacy \cite{dwork2006differential}, it is safe to perform arbitrary computations on the output of a differentially private mechanism - which maintains the privacy guarantees of the mechanism. Therefore, since $\text{k-RR}(\{x_i,y_i\}_{i=1}^{n_{cal}})$ satisfies $\epsilon$-local-differential privacy, and because NACP is a deterministic or randomized post-processing function, it follows that NACP(k-RR$(\{x_i,y_i\}_{i=1}^{n_{cal}})$) satisfies $\epsilon$-local-differential privacy. 

The remainder of the proof focuses on the conformal prediction coverage guarantee bound. Given our k-RR($\epsilon$) $\epsilon$-LDP mechanism, we derive $\beta = \frac{k}{k-1+e^{\epsilon}}$. Substituting $\beta$, and $n_{cal}$ into \ref{theorem3} we obtain $\Delta(n, \beta, \delta)$ such that $\Pr\bigl(y\in C_{\tilde q}(x)\bigr) \;\ge\; 1-\alpha - \Delta.$
\end{proof}

\begin{theorem}[\textbf{LDP-CP-S}]
Fix $\alpha, \delta, \Delta>0$. There exists an $\epsilon$-local differentially private algorithm that draws $n = O\left(\frac{T}{\Delta^2}(\frac{e^\epsilon + 1}{e^\epsilon - 1})^2\log(\nicefrac{T}{\delta}))\right)$ exchangeable samples from any admissible distribution $\mathcal{D}$ and, within at most $T = \lceil\log(\nicefrac{1}{\tau})\rceil$ iterations, produces an estimate $\hat{q}$ that satisfies $\Pr\bigl(y\in C_{\hat{q}}(x)\bigr) \;\ge\; 1-\alpha - \Delta$ with probability at least $1 - \delta$, where $1-\alpha$ is the desired coverage and $\tau$ is an a-priori bound on the length of an interval that can hold $\Delta$-probability mass.
\label{theorem:main_ldpcps}
\end{theorem}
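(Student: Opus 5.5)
We prove the theorem by analysing Algorithm~\ref{alg:ldpcps} directly, treating privacy and accuracy separately.

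\textbf{Privacy.} Each user belongs to exactly one batch $\mathcal{U}^{(j)}$ and interacts with the aggregator once. In that interaction the user releases a single bit $RR(b_i)$ produced by Warner's binary randomized response, which keeps the bit with probability $\frac{e^\epsilon}{1+e^\epsilon}$. This output is $\epsilon$-LDP with respect to $(x_i,y_i)$, since $b_i$ is a deterministic function of the user's data and the threshold $q^{(j)}$ is chosen using only other users' reports. Everything the aggregator does afterwards (the debiasing, the interval updates and forming $C_{\hat q}$) is post-processing, so the whole protocol is $\epsilon$-LDP.

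\textbf{Concentration.} Fix an iteration $j$ and condition on $q^{(j)}$; this threshold depends only on earlier, disjoint batches. By exchangeability, the users in $\mathcal{U}^{(j)}$ are distributed as fresh draws relative to $q^{(j)}$, so each $b_i$ is Bernoulli with mean $F^c(q^{(j)})=\Pr(S(x,y)<q^{(j)})$.

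The debiased statistic $Z^{(j)}$ (normalized by the batch size $n'=n/T$) is an unbiased estimate of $F^c(q^{(j)})$. It is an average of $n'$ independent terms, each with range $\frac{e^\epsilon+1}{e^\epsilon-1}$. Hoeffding's inequality gives
\[
|Z^{(j)}-F^c(q^{(j)})|\le \tfrac{\Delta}{2}
\]
with probability at least $1-\delta/T$, provided
\[
n'\ge \frac{2}{\Delta^2}\Bigl(\frac{e^\epsilon+1}{e^\epsilon-1}\Bigr)^2\log(2T/\delta).
\]
A union bound over the $T$ iterations gives the good event $G$, with $\Pr(G)\ge 1-\delta$, on which all $T$ estimates are simultaneously $\Delta/2$-accurate. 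Multiplying by $T$ yields the stated bound $n=O\bigl(\frac{T}{\Delta^2}(\frac{e^\epsilon+1}{e^\epsilon-1})^2\log(T/\delta)\bigr)$.

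\textbf{Binary search correctness on $G$.} We maintain the invariant $F^c(s_{low})\le 1-\alpha\le F^c(s_{high})$ up to $\Delta$, using the monotonicity of $F^c$, which is a CDF:
\begin{itemize}
\item If $Z^{(j)}>1-\alpha+\Delta/2$, then $F^c(q^{(j)})>1-\alpha$, so moving $s_{high}$ down to $q^{(j)}$ keeps a valid upper endpoint.
\item If $Z^{(j)}<1-\alpha-\Delta/2$, then $F^c(q^{(j)})<1-\alpha$, so moving $s_{low}$ up is valid.
\item If the loop breaks early, then $|F^c(q^{(j)})-(1-\alpha)|\le\Delta$, and the returned $\hat q$ satisfies $\Pr(y\in C_{\hat q}(x))\ge 1-\alpha-\Delta$.
\end{itemize}

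\textbf{Main obstacle: no early break.} If the loop runs all $T=\lceil\log(1/\tau)\rceil$ iterations (with the score range $[Q_{min},Q_{max}]$ normalized to length one), the final interval has length at most $\tau$. By the definition of $\tau$ this interval carries at most $\Delta$ probability mass, so $F^c(s_{high})-F^c(s_{low})\le\Delta$. Combining this with the invariant gives $F^c(\hat q)\ge 1-\alpha-\Delta$ for the returned endpoint. This step needs care, because the algorithm returns the midpoint $q^{(T)}$ rather than $s_{high}$. We would either return $s_{high}$, or absorb the slack by running the test with $\Delta/2$ margins so the total error stays within $\Delta$.

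A secondary subtlety is the conditional independence of each batch from the data-dependent threshold. We handle it with the disjointness of batches together with exchangeability, which yields i.i.d.-like behaviour conditional on the past.
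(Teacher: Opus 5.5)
Your proof is correct under i.i.d.\ sampling; one step's justification needs fixing (first remark below). It is also far more explicit than the paper's own proof, which is essentially a citation. The paper argues $\epsilon$-LDP because the data are touched only through randomized response. For accuracy it defers to the LDP binary-search quantile procedure of Gaboardi et al., asserting informally that the output is ``the $(1-\alpha)$'th quantile'' of the clean scores.

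Your privacy argument is the same, and you add the useful point that each $q^{(j)}$ depends only on other users' reports, so the sequential interaction is harmless. For accuracy you instead analyze Algorithm~\ref{alg:ldpcps} directly:
\begin{itemize}
\item Hoeffding on each disjoint batch, using the range $\frac{e^\epsilon+1}{e^\epsilon-1}$ of the debiased reports;
\item a union bound over the $T$ rounds;
\item a binary-search invariant, closed off by the $\tau$ hypothesis.
\end{itemize}
This gives an actual derivation of the stated $n$: the factor $T$ comes from disjoint batches, the squared randomized-response factor from the debiased range, and $\log(T/\delta)$ from the union bound. It also shows exactly where the assumption on $\tau$ is needed, and makes clear that the output is only a $\Delta$-approximate quantile. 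The paper's route is shorter but leaves the transfer of the cited analysis to the reader.

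Two remarks.

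First, the step ``by exchangeability the users in $\mathcal{U}^{(j)}$ are distributed as fresh draws relative to $q^{(j)}$'' does not hold. Exchangeability does not make a later batch independent of earlier ones. So neither $\mathbb{E}[b_i\mid q^{(j)}]=F^c(q^{(j)})$ nor the conditional independence that Hoeffding requires follows. You should assume i.i.d.\ draws from $\mathcal{D}$, which is what ``draws from $\mathcal{D}$'' and the cited analysis presuppose. Disjointness of the batches then gives exactly what you use.

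Second, your ``main obstacle'' is not one. On $G$ your bullets already give the exact invariant $F^c(s_{low})<1-\alpha<F^c(s_{high})$. After the final update, $q^{(T)}$ is itself an endpoint of the final interval, whose length is at most $\tau$. By monotonicity,
\[
F^c(q^{(T)})\ge F^c(s_{low})\ge F^c(s_{high})-\Delta>1-\alpha-\Delta,
\]
so the algorithm needs no modification. Finally, $\Pr(y\in C_{\hat q}(x))=\Pr(S\le\hat q)\ge F^c(\hat q)$ handles the mismatch between the strict inequality in $b_i$ and the weak one in $C_{\hat q}$.
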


\begin{proof}
    The privacy proof of a $\epsilon$-LDP quantile binary search algorithm can be found in \citet{gaboardi2019locallyprivatemeanestimation}. Users start by computing scores locally and then a local differentially private quantile binary search algorithm is taken place \citep{gaboardi2019locallyprivatemeanestimation}.
    $\epsilon$-LDP follows immediately from the fact that the only time we access the data is via randomized response. The output of the algorithm is the $(1-\alpha)$'th quantile that one would obtain by applying conformal prediction to the clean data, yet it is recovered solely from the privatized (noisy) scores.
\end{proof}

\section{Practical Considerations}

\subsection{LDP-CP-L vs. LDP-CP-S}
\label{sec:s_vs_l}

The proposed methods, LDP-CP-L and LDP-CP-S, offer distinct approaches for achieving local differential privacy in conformal prediction, each is tailored to specific privacy setups and has different coverage guarantees. An in-depth understanding of their trade-offs is essential to determine their suitability for various scenarios.

LDP-CP-L focuses on achieving local differential privacy by perturbing the labels ($y$) while exposing the features ($x$) to the untrusted aggregator. This approach ensures privacy for the labels, which are typically considered more sensitive in many applications. This mechanism is particularly advantageous in scenarios where users are resource-constrained, since they only need to perturb their labels locally before sending $(x, \tilde{y})$ to the aggregator. This design also keeps the model parameters and scoring functions secret from the users, because all computations related to the score are performed centrally. Consequently, LDP-CP-L is well-suited for real-world calibration datasets of moderate size, such as those containing several thousand records, where the additional noise introduced by the mechanism remains manageable. However, a notable drawback of LDP-CP-L is its sensitivity to the number of classes $k$ in the dataset. As $k$ increases, the calibration error ($\Delta$) grows, potentially compromising coverage guarantees for datasets with a large number of classes. Additionally, while the exposure of $x$ provides practical utility by allowing centralized score computation, it introduces privacy concerns. This drawback can be partially mitigated by employing the shuffle model of differential privacy, which adds an extra layer of anonymity to the users' data (See further discussion in Section~\ref{sec:on_exp_x_main}). Furthermore, privatizing $x$ would result in a substantial insertion of noise, thereby leading to a significant degradation in the accuracy of the algorithms.

By contrast, LDP-CP-S achieves privacy at the score level by having users compute scores locally and perturb their responses before submitting them to the aggregator. This design ensures that both $x$ and $y$ remain private and are never exposed to the aggregator, making LDP-CP-S particularly suitable for scenarios where feature privacy is paramount. Unlike LDP-CP-L, the performance of LDP-CP-S is independent of the number of classes $k$. However, this approach imposes additional computational requirements on the users, who must perform local computations involving the model and scoring function. This requirement necessitates sharing the model with users, which might raise concerns about intellectual property or model misuse. Furthermore, LDP-CP-S requires a larger calibration dataset ($n$) to achieve a sufficiently small calibration error, potentially limiting its applicability in scenarios with limited data availability.

In summary, the choice between LDP-CP-L and LDP-CP-S depends on the specific privacy and computational constraints of the application. LDP-CP-L is better suited for scenarios where label privacy is the primary concern, datasets are of moderate size (and larger). Its design minimizes user-side computations and protects the model from exposure. Conversely, LDP-CP-S is preferable when both feature and label privacy are critical, and when users have the computational resources to perform local scoring. Its robustness to the number of classes makes it a strong candidate for applications with a large class set, provided a sufficiently large calibration dataset is available. By carefully considering these trade-offs, practitioners can select the most appropriate method for their privacy-preserving conformal prediction tasks. In the experiment section, we report a numerical comparison of methods accuracy ($\Delta_L, \Delta_S$) as a function of the calibration set size $n$ and the number of classes $k$ (Figure \ref{fig:s_vs_l}).

\subsection{The Shuffle Model of Differential Privacy}
\label{sec:on_exp_x_main}
The shuffle model of differential privacy enhances privacy guarantees by introducing an additional layer of anonymization between users and the data aggregator. In this model, each user applies a local randomizer to their data and then sends the output to a secure shuffler, which permutes the messages uniformly at random before forwarding them to the aggregator. This anonymization mechanism breaks the association between individual users and their messages, thereby amplifying privacy guarantees beyond those attainable in the purely local model.

A notable benefit of the shuffle model is its capacity for \emph{privacy amplification}~\citep{cheu2022differentialprivacyshufflemodel}: if each user applies an $\varepsilon$-LDP mechanism before sending their message, the effective privacy loss can be reduced to approximately $\epsilon^{\text{eff}}=\nicefrac{\varepsilon}{\sqrt{n}}$ in the shuffled output. This amplification enables stronger privacy guarantees with the same local noise or, conversely, allows for reduced noise to achieve a given privacy target—thereby improving utility in downstream tasks. This is particularly beneficial in regimes where moderately high local privacy levels (e.g., $\varepsilon > 1$) would otherwise impose significant performance degradation.

Given these advantages, our approach incorporates the shuffle model to improve the privacy-utility trade-off of our mechanisms. While earlier versions of our framework considered exclusively the local model, we found that adopting the shuffle model allows us to retain decentralization and local control while achieving significantly improved accuracy through amplification. Importantly, this integration does not alter the algorithmic structure of our mechanisms but rather augments their privacy analysis and performance guarantees under realistic assumptions of an honest-but-curious shuffler. As such, the shuffle model serves not only as a technical enhancement but also as a practical enabler of more effective private learning in our setting.

\section{Experiments}
\label{sec:exps}

In this section, we evaluate the capabilities of our LDP-CP algorithms on various medical and scenery imaging datasets, and address the utility-coverage tradeoff.

 \textbf{Compared Methods.} Our method takes an existing conformity score $S$ and computes a threshold $q$ that considers the injected noise level.  We implemented two popular conformal prediction scores, namely APS
\citep{romano2020classification} and HPS \citep{vovk2005conformal}. For each score $S$, we compared the following CP  methods: (1) Not-Private-CP  with coverage guarantee $1-\alpha$ -  using a validation set with clean labels (2) LDP-CP-\{S,L\} and (3) LDP-CP-\{S,L\}* with coverage guarantee $1-\alpha + \Delta$. We share our code for  reproducibility\footnote{\url{https://anonymous.4open.science/r/LDP-CP}}.

\textbf{Datasets.}
We present results on several publicly available medical imaging classification datasets \citep{medmnistv2}. 
 \textbf{TissuMNIST} \citep{medmnistv1,medmnistv2}:
This dataset contains 236,386 human kidney cortex cells,  organized into 8 categories. Each gray-scale
image is $32 \times 32 \times 7$ pixels. The  2D projections were obtained by taking the maximum pixel value along the axial-axis of each pixel, and were resized into $28 \times 28$ gray-scale images \citep{woloshuk2021situ}.
 \textbf{OrganSMNIST} \citep{medmnistv2}: This dataset contains 25,221 images of abdominal CT in eleven classes. The images are  $28 \times 28$ in size. Here, we used a train/validation/test split of 13,940/2,452/8,829 images. \textbf{OrganAMNIST} and  \textbf{OrganCMNIST} are similar datasets, the differences of Organ\{A,C,S\}MNIST are the views and dataset size. Lastly, \textbf{OCTMNIST} Retina OCT images dataset.
 
\begin{figure*}[ht!]
    \centering
    \includegraphics[trim=0cm 0cm 0cm 0cm, width=1.02\linewidth]{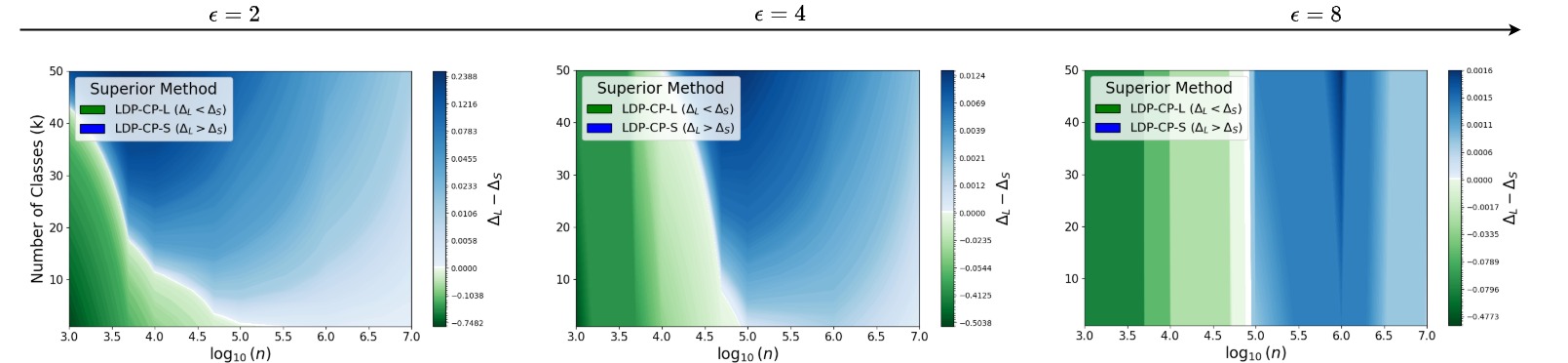}
    \caption{Comparison of $\Delta_l$ and $\Delta_s$ as a function of the number of classes $k$ and dataset size $n$, for $\epsilon=2,4,8$.}
    \label{fig:s_vs_l}
\end{figure*}

\begin{figure}[H]
    \centering
    \includegraphics[trim=0cm 0.5cm 0cm 0cm, width=0.95\linewidth]{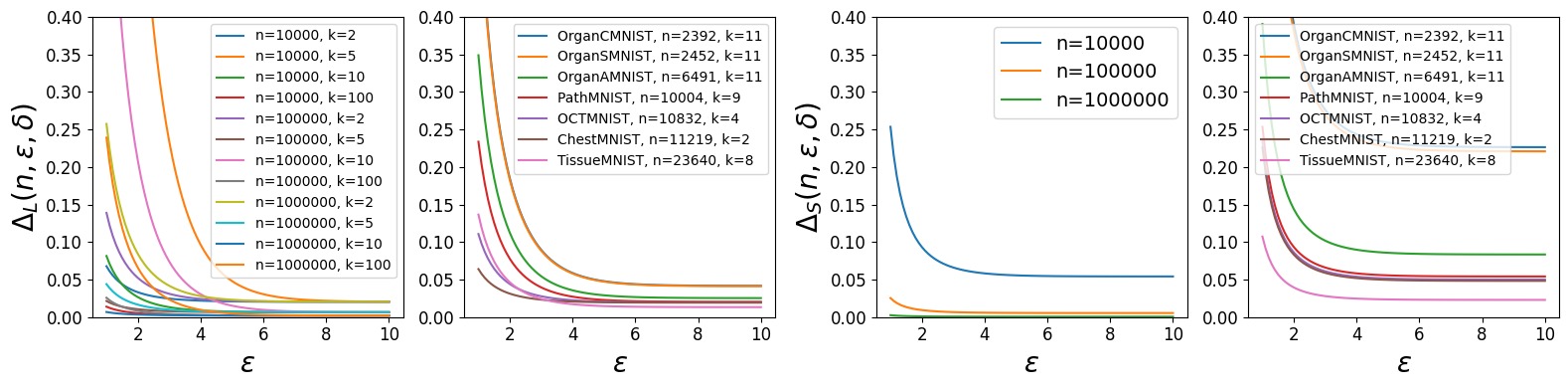}
    \quad \quad  \quad \quad \quad  \quad \quad  \quad \quad \quad \quad \quad \quad \quad \quad \quad \quad \quad \quad \quad  \quad \quad  \quad   \quad (a) LDP-CP-L - $\Delta_L$ \quad \quad \quad  \quad \quad \quad \quad \quad \quad \quad \quad \quad (b) LDP-CP-S - $\Delta_S$ \\
    \caption{CP correction terms $\Delta_L,\Delta_S$ as a function of $\epsilon$ privacy parameter across different dataset configurations of $n$ and $k$ \textbf{without the shuffle model}.}
    \label{fig:tradeoff-general-l-and-s}
    \includegraphics[trim=0cm 0.5cm 0cm 0cm, width=0.95\linewidth]{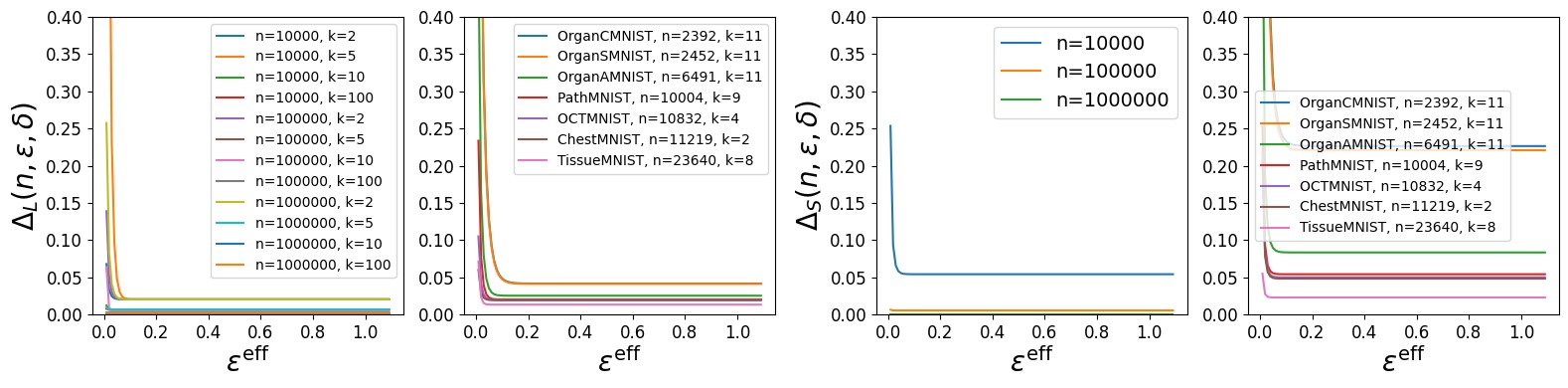}
    \quad \quad  \quad \quad \quad  \quad \quad  \quad \quad \quad \quad \quad \quad \quad \quad \quad \quad \quad \quad \quad  \quad \quad  \quad   \quad (a) LDP-CP-L - $\Delta_L$ \quad \quad \quad  \quad \quad \quad \quad \quad \quad \quad \quad \quad (b) LDP-CP-S - $\Delta_S$ \\
    \caption{CP correction terms $\Delta_L,\Delta_S$ as a function of $\epsilon^{\text{eff}}$ privacy parameter across different dataset configurations of $n$ and $k$ \textbf{with the shuffle model}.}
    \label{fig:tradeoff-general-l-and-s-shuffle}
\end{figure}

\textbf{Utility-Coverage Tradeoff}. 
\label{exp:utility_comp}
Theorems \ref{theorem:main_ldpcpl} and \ref{theorem:main_ldpcps} state that LDP-CP-L and LDP-CP-S are $\epsilon$-LDP with a conformal prediction coverage guarantee correction term of $\Delta_L(n, \beta(\epsilon,k), \delta)$ and $ \Delta_S(n,\epsilon,\delta)$, which depend on the number of samples n, the privacy $\epsilon$, $\delta$, and $\Delta_L$  also depends on the number of classes $k$. Figure \ref{fig:tradeoff-general-l-and-s} explores the trade-offs of $\epsilon,n,k$ on the conformal prediction correction terms $\Delta_L,\Delta_S$, on different configurations and various medical image datasets \citep{medmnistv2} respectively. The results show both the finite-sample practicality and theoretical asymptotic performance of LDP-CP on existing medical datasets that are medium in size and simulated scenarios with growing $n$. For the majority of the evaluated datasets with $\epsilon=3$, LDP-CP-L maintains correction terms that become negligible for $\alpha=0.1$ and higher ($1-\alpha$ is the coverage). $\Delta_L,\Delta_S \xrightarrow{} 0$ as $n \xrightarrow{} \infty$. While as the number of samples grows $\Delta_S$ goes to 0, for the evaluated medical datasets (with $n$ fixed) $\Delta_S>\Delta_L$, i.e. being inferior to $\Delta_L$, and is therefore more applicable to datasets with larger calibration sets.
Figure ~\ref{fig:tradeoff-general-l-and-s-shuffle} covers the same experiment setup, this time when the shuffle model is deployed and presents the correction terms as a function of the effective privacy. The effective privacy $\epsilon^{\text{eff}}$ ranges between 0 and 0.2, which is much more practical then the $\epsilon \geq 3$ acquired without the shuffle model.

Figure \ref{fig:s_vs_l} compares $\Delta_{S}$ and $\Delta_{L}$ as a function of n, k, and $\epsilon$. Results show that roughly speaking for $n \geq 10^5 \xrightarrow{} \Delta_S \leq \Delta_L$. In addition, as the number of classes $k$ increases (particularly for large values of $k$, e.g., 100 or 1000), LDP-CP-S dominates, whereas LDP-CP-L exhibits deteriorating performance.

\textbf{Conformal Prediction Results}. 
Table \ref{tab:ldpcp_full_results} reports the mean size and coverage when applying conformal prediction in a non $\epsilon$-LDP setting (Not-Private-CP), and when satisfying the local differentially private property using LDP-CP-\{S,L\}. The LDP-CP-\{S-L\} method has two variations where the first consists of calibration using $1-\alpha$ and getting $p(y\in C_{q}(x)) \;\ge\; 1-\alpha - \Delta$ (Theorems \ref{theorem:main_ldpcpl},\ref{theorem:main_ldpcps}).
 The second variation needs to satisfy $p(y\in C_{q}(x)) \;\ge\; 1-\alpha$ and therefore uses $1-\alpha+\Delta$ in the calibration phase. For the experiment in Table \ref{tab:ldpcp_full_results} we used $\epsilon=4$ and $\alpha=0.1$ over 100 different data splits (seeds). Table \ref{tab:ldpcp_full_results} also provides the effective privacy $\epsilon^{\text{eff}}$ per dataset to show increased practicality when the shuffle model is incorporated.
\begin{table}[h!]
\centering
\scalebox{.85}{
\begin{tabular}{l|l|cccc}
\toprule
\textbf{Dataset} & \textbf{Method} & \multicolumn{2}{c}{\textbf{HPS}} & \multicolumn{2}{c}{\textbf{APS}} \\
\cmidrule(lr){3-4} \cmidrule(lr){5-6}
 & & size $\downarrow$ & coverage (\%) & size $\downarrow$ & coverage (\%) \\
\midrule
           & Not-Private-CP & 2.57 $\pm$ 0.03 & 90.06 $\pm$ 0.99 & 2.61 $\pm$ 0.03 & 90.06 $\pm$ 0.97 \\  
           & LDP-CP-L & 2.56 $\pm$ 0.04 & 89.99 $\pm$ 1.01 & 2.61 $\pm$ 0.03 & 90.02 $\pm$ 1.01 \\  
OCTMNIST  & LDP-CP-S & 2.58 $\pm$ 0.04 & 90.21 $\pm$ 0.63 & 2.67 $\pm$ 0.08 & 90.84 $\pm$ 1.45 \\  
 ($\epsilon^{\text{eff}}=0.038$)          & LDP-CP-L*& 2.76 $\pm$ 0.04 & 92.22 $\pm$ 0.92 &  2.79 $\pm$ 0.03 & 92.28 $\pm$ 0.87 \\ 
           & LDP-CP-S*& 2.97 $\pm$ 0.07 & 94.38 $\pm$ 0.81 &  2.99 $\pm$ 0.06 & 94.35 $\pm$ 0.70 \\ \hline 
           & Not-Private-CP & 5.55 $\pm$ 0.02 & 90.00 $\pm$ 0.24 & 5.58 $\pm$ 0.02 & 89.96 $\pm$ 0.24 \\  
           & LDP-CP-L & 5.54 $\pm$ 0.02 & 89.97 $\pm$ 0.29 & 5.58 $\pm$ 0.02 & 89.97 $\pm$ 0.27 \\  
TissueMNIST  & LDP-CP-S & 6.12 $\pm$ 0.01 & 95.35 $\pm$ 0.07 & 5.61 $\pm$ 0.09 & 90.32 $\pm$ 0.91 \\  
($\epsilon^{\text{eff}}=0.026$)           & LDP-CP-L*& 5.71 $\pm$ 0.02 & 91.68 $\pm$ 0.27 &  5.76 $\pm$ 0.02 & 91.70 $\pm$ 0.25 \\ 
           & LDP-CP-S*& 6.12 $\pm$ 0.01 & 95.35 $\pm$ 0.07 &  5.83 $\pm$ 0.05 & 92.32 $\pm$ 0.45 \\ \hline 
           & Not-Private-CP & 1.93 $\pm$ 0.05 & 90.09 $\pm$ 0.66 & 2.35 $\pm$ 0.05 & 90.10 $\pm$ 0.55 \\  
           & LDP-CP-L & 1.88 $\pm$ 0.07 & 89.49 $\pm$ 0.93 & 2.30 $\pm$ 0.09 & 89.63 $\pm$ 0.91 \\  
OrganSMNIST  & LDP-CP-S & 1.61 $\pm$ 0.21 & 84.81 $\pm$ 4.48 & 2.09 $\pm$ 0.07 & 87.38 $\pm$ 1.37 \\  
($\epsilon^{\text{eff}}=0.080$)           & LDP-CP-L*& 2.77 $\pm$ 0.22 & 95.35 $\pm$ 0.74 &  3.35 $\pm$ 0.22 & 95.45 $\pm$ 0.74 \\ 
           & LDP-CP-S*& 3.90 $\pm$ 0.03 & 97.75 $\pm$ 0.06 &  4.75 $\pm$ 0.03 & 98.40 $\pm$ 0.07 \\ \hline 
           & Not-Private-CP & 1.19 $\pm$ 0.02 & 89.99 $\pm$ 0.46 & 1.61 $\pm$ 0.02 & 90.02 $\pm$ 0.38 \\  
           & LDP-CP-L & 1.31 $\pm$ 0.00 & 92.17 $\pm$ 0.09 & 1.60 $\pm$ 0.03 & 89.94 $\pm$ 0.54 \\  
OrganAMNIST  & LDP-CP-S & 1.15 $\pm$ 0.05 & 88.89 $\pm$ 0.96 & 1.67 $\pm$ 0.21 & 90.35 $\pm$ 3.10 \\  
($\epsilon^{\text{eff}}=0.049$)           & LDP-CP-L*& 1.43 $\pm$ 0.03 & 93.62 $\pm$ 0.34 &  1.89 $\pm$ 0.05 & 93.51 $\pm$ 0.51 \\ 
           & LDP-CP-S*& 1.88 $\pm$ 0.19 & 96.52 $\pm$ 0.82 &  2.44 $\pm$ 0.19 & 96.80 $\pm$ 0.60 \\ \hline 
           & Not-Private-CP & 1.18 $\pm$ 0.03 & 89.99 $\pm$ 0.71 & 1.56 $\pm$ 0.03 & 90.02 $\pm$ 0.65 \\  
           & LDP-CP-L & 1.30 $\pm$ 0.00 & 91.96 $\pm$ 0.13 & 1.52 $\pm$ 0.04 & 89.36 $\pm$ 0.91 \\  
OrganCMNIST  & LDP-CP-S & 0.95 $\pm$ 0.08 & 82.44 $\pm$ 2.74 & 1.39 $\pm$ 0.07 & 86.59 $\pm$ 2.44 \\  
($\epsilon^{\text{eff}}=0.081$)           & LDP-CP-L*& 1.63 $\pm$ 0.10 & 95.21 $\pm$ 0.74 &  2.05 $\pm$ 0.13 & 95.21 $\pm$ 0.80 \\ 
           & LDP-CP-S*& 2.47 $\pm$ 0.02 & 98.18 $\pm$ 0.06 &  2.90 $\pm$ 0.04 & 98.15 $\pm$ 0.10 \\ \hline 

\bottomrule
\end{tabular}
}
\caption{Calibration results for HPS and APS conformal scores across various datasets, using $\epsilon=4$, $\epsilon^{\text{eff}}=\frac{\epsilon}{\sqrt{n}}$, and $\alpha=0.1$ on 100 different seeds.}
\label{tab:ldpcp_full_results}
\end{table}

\begin{figure}[H]
    \centering
    \includegraphics[trim=0.5cm 1cm 0cm 0cm, width=0.3\textwidth]{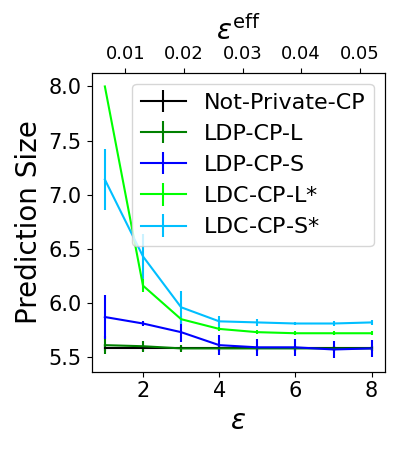} 
    \includegraphics[trim=0.5cm 1cm 0cm 0cm, width=0.3\textwidth]{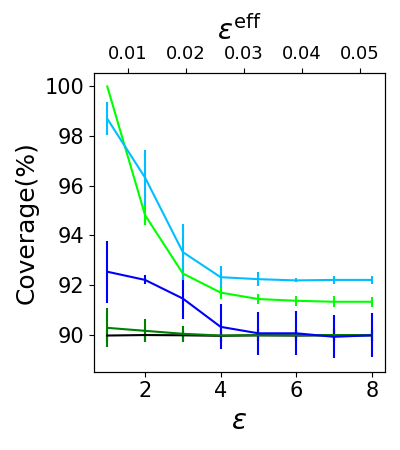}
    \caption{Size of prediction set (left) and coverage (right) as a function of the privacy $\epsilon$ (bottom x-axis) and effective privacy $\epsilon^{\text{eff}}$ (top x-axis). We show the (mean $\pm$ std) on TissueMNIST and APS score.}
    \label{fig:tissuemnist_calib_with_g}
\end{figure}

Next, we experimented with different values of $\epsilon$, namely $\epsilon \in [1,8]$. Recall that with the shuffle model the effective privacy is $\epsilon^{\text{eff}}=\frac{\epsilon}{\sqrt{n}}$. Figure \ref{fig:tissuemnist_calib_with_g} shows the coverage, size of the prediction set, and effective privacy on the TissueMNIST dataset. As expected as $\epsilon$ grows, $\Delta$ decreases and CP results get closer to the Non-Private-CP. In addition, LDP-CP-L performs on a par with the Non-Private-CP for all $\epsilon$'s, and LDP-CP-S for $\epsilon^{\text{eff}}\geq 0.03 $ ($\epsilon\geq4$), showcasing their true applicability.

\chapter{Unsupervised Target Domain Confidence Calibration}
\label{ch:utdc}

In this chapter, we explore the challenge of confidence calibration in Deep Neural Networks (DNNs) when transferring models from a source domain to an unlabeled target domain. While DNNs exhibit high accuracy in classification and detection tasks under well-supervised conditions, real-world applications often involve domain shifts where target domain labels are unavailable. Traditional calibration methods rely on labeled data to align model confidence with true probabilities, but these approaches fall short under unsupervised domain adaptation (UDA) scenarios. We introduce a novel method for directly calibrating model predictions on target domain data, demonstrating its effectiveness in addressing the pitfalls of existing approaches.

\section{Problem Statement}

Deep Neural Networks (DNN) have shown remarkable accuracy in tasks such as classification and detection when sufficient data and supervision are present. In practical applications, it is crucial for models not only to be accurate, but  also to indicate how much confidence users can have in their predictions. DNNs generate confidence scores that can serve as a rough estimate of the likelihood of correct classification, but these scores do not guarantee a match with the actual probabilities \cite{Guo2017}. Neural networks tend to be overconfident in their predictions, despite having higher generalization accuracy, due to the possibility of overfitting on negative log-likelihood loss without affecting classification error \cite{Guo2017,Balaji2017,Hein2019}. A classifier is said to be calibrated with respect to a dataset sampled from a given distribution if its predicted probability of being correct matches its true probability. 
Various methods have been introduced to address the issue of over-confidence. Network calibration can be performed in conjunction with training (see e.g. \cite{mukhoti2020calibrating, muller2019does,mixup}). Post-hoc scaling methods for calibration, such as Platt scaling \cite{Platt1999}, isotonic regression \cite{Zadrozny2002}, and temperature scaling \cite{Guo2017}, are commonly employed. These techniques apply calibration as post-processing, using a hold-out validation set to learn a calibration map that adjusts the model's confidence in its predictions to become better calibrated.

The implementation of deep learning systems on real-world problems  is hindered by the decrease in performance when a network trained on data from one domain is applied to data from a different domain where the distribution of features changes across domains (see e.g. \cite{pmlr-v139-miller21b}). This is known as the domain shift problem. In an Unsupervised Domain Adaptation (UDA) setup we assume the availability of data from the target domain but without annotation. There is a plethora of UDA methods  based on strategies
such as adversarial training methods that aim to align the distributions of the source and target domains \cite{ganin2016domain},  or self-training algorithms  based on computing pseudo labels for the target domain data \cite{zou2019confidence}.

In the following section we tackle the problem of calibrating predicted probabilities when transferring a trained model from a source domain to a target domain without any given labels.

Our major contributions include the following:
\begin{itemize}
\item We show that 
 current UDA calibration methods which are all based on the source domain data, rely on an overly  optimistic estimation of the target accuracy. Thus they can't well handle the domain shift problem.
\item We propose a calibration method that is directly applied to the target domain data, based on a realistic estimation of the accuracy of the adapted model on the target domain.
\item We show that previously proposed UDA calibration methods don't work at all and thus in this study we propose the first effective method for calibrating a network obtained by an unsupervised domain adaptation.  
\end{itemize}
The study described in this chapter was published in \cite{penso2024calibrationuda}.

\section{Calibration on the Target Domain in Unsupervised Domain Adaptation}

\begin{table*}[t]
	\caption{Comparison of calibration methods for unsupervised domain adaptation (UDA).}
	\label{table:utdc_ece_tabx}
     \resizebox{\textwidth}{!}{
	\centering
		\begin{tabular}{l|ccc|ll}
			\toprule 
                Calibration Method &
                Designed for &
                Works without &
                Works on &
                Approach &
                Granularity \\
                &
                domain shift &
                target label &
                target data \\
			\midrule
                Temp. Scaling \cite{Guo2017} & $\times$ & $\times$& $\times$ & -- & Instance level \\
                CPCS\cite{park2020}, TransCal\cite{wang2020} & \checkmark& \checkmark& $\times$ & Importance weight estimation & Instance level\\
                                UTDC (proposed) & \checkmark& \checkmark&\checkmark & Estimates target accuracy & Dataset level \\
			\bottomrule
		\end{tabular}%
      }
\end{table*}

Our method involves calibrating the adapted network directly on the target data.
While applying the network on the target domain data allows us to compute its confidence, we cannot determine its accuracy.
Thus, the challenge is to find a reliable estimate of the network accuracy on the target domain.
Our approach is based on the observation that when calibrating by minimizing the adaECE score, we do not need to know whether each individual prediction is correct. Instead, we only need to determine the mean accuracy for each bin.  Fortunately, there are techniques which
 given a trained network, can  estimate the network accuracy 
on  data samples from a new domain without access to their labels  \cite{deng2021labels, Guillory_2021_ICCV, garg2022leveraging, yu2022predicting}.

We next suggest a simple, intuitive, and very effective method that  calibrates the network directly on the target domain.  
We first compute the overall network accuracy  on the source  data $A_{\textrm{source}}$ and  estimate the network accuracy on the target domain (e.g, using \cite{deng2021labels}). Denote the estimated target accuracy by $\tilde{A}_{\textrm{target}}$. 
Next, we divide the source data into $M$ equal-size bins according to their confidence values and compute the corresponding network accuracy $A_{\textrm{source},m}$  at each bin $m$.
We also divide the target data into $M$ equal-size bins according to their confidence values and estimate the binwise accuracy of the target ${A}_{\textrm{target},m}$ by rescaling the binwise accuracy on the source domain  in the following way:
\begin{equation}
\tilde{A}_{\textrm{target},m} =  A_{\textrm{source},m} \cdot \frac{\tilde{A}_{\textrm{target}}}{A_{\textrm{source}}}, \hspace{1cm} m=1,...,M.
\label{targetaccuracy}
\end{equation}
In the next section, we empirically show  that the accuracy ratio between source and target is indeed
similar across the calibration bins.
The estimated network accuracy on the target data 
$\tilde{A}_{\textrm{target}}$ obtained by an unsupervised adaptation is usually lower than its 
accuracy on the source data $A_{\textrm{source}}$. Thus,  this accuracy rescaling provides a more realistic estimation of the bin-wise network average accuracy on the target data. The accuracy ratio 
$\tilde{A}_{\textrm{target}}/A_{\textrm{source}}$ indicates the size of the domain gap or the difficulty of the adaptation task \cite{Zou_2023_ICCV}.

Let $C_{target,m}$ be the bin-wise network average confidence values computed on the target data. Substituting the estimated accuracy term, based on the source labeled data (\ref{targetaccuracy}) into the adaECE definition,
yields the following adaECE measure for the target domain in a UDA setup:
\begin{equation}
    \textrm{UDA\mbox{-}adaECE} =  \frac{1}{M} \sum_{m=1}^{M} 
    \left| 
\tilde{A}_{\textrm{target},m}- C_{\textrm{target},m} \right|.
\label{ECEadapt}
\end{equation}

\begin{algorithm*}[t]
\caption { Unsupervised Target Domain Calibration (UTDC) }
\begin{algorithmic}[0]
\State   \textbf{input:} A labeled validation set from the source domain, an unlabeled dataset from the target 
domain, and a $k$-class classifier that  was adapted to the target domain.

\vspace{0.1cm}

 \State - Compute the source accuracy $A_{\textrm{source}}$ and 
estimate the target accuracy $\tilde{A}_{\textrm{target}}$ \\ 
\hspace{0.3cm} using a target accuracy estimation technique.

\State - Divide the source points into $M$ equal size sets  based on their  confidence and 
\\ \hspace{0.3cm} compute the binwise mean accuracy: 
$ A_{\textrm{source},m}$, \, $m=1,...,M$.
\State - Divide  the target  points into $M$ equal size sets $B_1,...,B_M$ based on their  confidence.
\For{each candidate value of $T$}
\State - Compute  the binwise  mean confidence on the target:
\[ C_{\textrm{target},m}(T)= 
  \frac{1}{|B_m|} \sum_{x \in B_m} \max_{i=1}^k 
\frac {\exp(z_{x,i}/T)} { \sum_{j=1}^k \exp(z_{x,j}/T)}  \hspace{1cm}  m=1,...,M.\]
\hspace{0.7cm} s.t. $z_{x,1}$,...,$z_{x,k}$ are the logits  computed by the network
that is fed by $x\in B_m$.
\State - Compute the adaECE score as a function of $T$: \hspace{0.3cm}
\[ \mathrm{UDA\mbox{-}adaECE}(T) = \frac{1}{M}  \sum_{m=1}^{M} \left| 
{A}_{\textrm{source},m} \times \frac{\tilde{A}_{\textrm{target}}}{A_{\textrm{source}}}
- C_{\textrm{target},m}(T) \right| \hspace{1.8cm} \]
\EndFor
\State   \textbf{output:}  The optimal temperature:  $\hat{T}= \arg \min_T \mathrm{UDA\mbox{-}adaECE}(T) $
  \end{algorithmic}
  \label{alg:http}
\end{algorithm*}

For each calibration method whose parameters can be found by minimizing the adaECE measure, we can form a UDA variant in which UDA-adaECE is minimized instead of adaECE. 
 Examples of these calibration
methods include Temperature Scaling (TS),
 Vector Scaling, Matrix Scaling \cite{Guo2017},  Mix-n-Match \cite{zhang2020mix}, Weight Scaling \cite{frenkel2022calibration}, 
and others.  

We next demonstrate the UDA calibration principle in the case of TS calibration.  
We can determine the temperature that minimizes the UDA-adaECE measure (\ref{ECEadapt}) by conducting a grid search on the possible values. 

 Given the division of the target data into bins, we can  compute the binwise average confidence after temperature calibration by $T$ on the target  ${C}_{\textrm{target},m}(T)$. We can then define the following temperature-dependent adaECE scores: 
\begin{equation}
\mathrm{UDA\mbox{-}adaECE}(T) =  \frac{1}{M}  \sum_{m=1}^{M} \left| 
\tilde{A}_{\textrm{target},m}
- C_{\textrm{target},m}(T) \right|.
\label{ECEt}
\end{equation}
The optimal temperature is thus obtained by applying a grid search to find $T$ that minimizes UDA-adaECE$(T)$.
The proposed  Unsupervised Target Domain Calibration  (UTDC)   algorithm is summarized in Algorithm Box \ref{alg:http}.

A major component of the UTDC is estimating the target domain accuracy. based on unlabeled target domain data.  We next describe several recently suggested estimation algorithms. 
  Deng et al. \cite{deng2021labels} suggested learning a dataset-level regression problem. 
 The first step is to augment the source domain validation set, denoted by $D_s$, using various visual transformations such as resizing, cropping, horizontal and vertical flipping, Gaussian blurring, and others. We then create  $n$ meta-datasets, denoted as $D_1,...,D_{n}$ (in our implementation we set $n=50$). This process preserves the labels so  we can  compute the model's accuracy on these datasets, denoted by $a_1,...,a_{n}$. 
Each dataset $D_i$ is represented as a Gaussian distribution using its mean vector $\mu_i$ and its diagonal covariance matrix $\Sigma_i$. Let $F_i$ be the Fr\'echet  distance \cite{dowson1982frechet}
between the Gaussian representations of $D_s$ and $D_i$. $F_i$ measures the domain gap between
the original dataset $D_s$ and $D_i$.
Next, a linear regression model is fitted to the dataset $(F_1,a_1),...,(F_n,a_n)$ in the form of  $\hat{a}=w\cdot F+b$.
Finally, the linear regression model is employed to predict the accuracy of the network on the unlabeled data from the target domain. 
Another method is Average Thresholded Confidence (ATC) \cite{garg2022leveraging} which first selects a threshold $t$ whose error in the source domain matches the expected number of points whose confidence is below $t$. Next, ATC predicts the error
on the target domain which is expressed as the fraction of unlabeled points that obtain a confidence value below that threshold $t$. Let $\hat{p}(x) = \max_i (p=i|x)$ be the network confidence and let $\hat{y}$ be the network prediction. A  threshold $t$ is calculated to satisfy the equality 
$E_{x \sim source}1_{\{\hat{p}(x) \leq t\}} = E_{(x,y) \sim source}1_{\{\hat{y} \neq y\}}$.
The estimated target accuracy is the expectation $E_{x \sim target}1_{\{\hat{p}(x) \leq t\}}$.
Finally, the Projection Norm (PN) method \cite{yu2022predicting} uses the model predictions to pseudo-label the test samples and then trains a new model on the pseudo-labels. The discrepancy between the parameters of the new and original models yields the predicted error of the target domain data. %
In Section \ref{s:utdc_expriments} we compare the UTDC's calibration performance when using each of the target accuracy prediction methods described above.

\section{Experiments}
\label{s:utdc_expriments}
In this section, we evaluate the capabilities of our UTDC
 technique to calibrate a network on a target domain 
 after applying a UDA procedure. 

\textbf{Compared methods.}  
 We compared our method  to six baselines:
(1) Uncalibrated - The adapted classifier as is, without
any post-hoc calibration;
(2-4) Source-TS, Source-VS and source-MS - The adapted network was calibrated by either Temperature Scaling (TS), Vector Scaling (VS) or Matrix scaling (MS) \cite{Guo2017} using the labeled validation set of the source domain; 
(5) CPCS \cite{park2020}, and (6) TransCal \cite{wang2020}, importance weighted UDA calibrators.
We also report Oracle results where TS calibration was applied to the labeled data from the target domain (denoted by Target-TS)  and an Oracle version of our approach (denoted  by UTDC*) where we used the exact accuracy of the adapted model on the target data instead of estimating it.

\begin{table*}[t]
        \caption{AdaECE results  on Office-home (with the lowest in bold) on various UDA classification tasks and models with different calibration methods.}
        \label{table:utdc_uda_adaece_tab1}
        \centering
        \resizebox{\textwidth}{!}{
                \begin{tabular}{ll|rrrrrrrrr|r}
                        \toprule 
                        {\small UDA}&
               {\small Method}& 
               {\scriptsize $A \rightarrow R$} & 
               {\scriptsize $A \rightarrow C$} &
               {\scriptsize $A \rightarrow  P$} &
               {\scriptsize $C \rightarrow  R$ }&
               {\scriptsize $C \rightarrow  P$} &
               {\scriptsize $C \rightarrow  A$} &
               {\scriptsize $P \rightarrow  R$ }&
               {\scriptsize $P \rightarrow  C$} &
               {\scriptsize $P \rightarrow  A$} &
               Avg \\
                        \midrule

& \small{Uncalibrated} &22.23 & 42.62 & 30.49 & 25.18 & 28.25 & 33.69 & 20.32 & 40.46 & 38.85 & 31.34\\
& \small{Source-TS} &8.09 & 24.43 & 14.89 & 10.00 & 14.17 & 13.85 & 11.14 & 27.42 & 26.60 & 16.73\\
& \small{Source-VS} &10.54 & 27.54 & 19.51 & 12.12 & 14.65 & 15.78 & 11.27 & 31.55 & 27.46 & 18.94\\
& \small{Source-MS} &28.62 & 47.87 & 35.74 & 31.62 & 31.54 & 40.43 & 23.59 & 43.90 & 40.56 & 35.99\\
& \small{CPCS} &15.84 & 49.78 & 23.42 & 14.02 & 16.60 & 18.45 & {6.31} & 49.21 & 25.62 & 24.36\\
\small{CDAN+E}& \small{TransCal} &6.01 & 27.30 & 9.46 & 16.67 & 16.81 & 21.69 & 19.90 & 41.23 & 39.71 & 22.09\\
 & \small{UTDC} &{4.46} & {9.74} & {7.53} & {8.36} & {5.91} & {8.08} & 10.45 & {7.46} & {9.37} & \textbf{ 7.93}\\
\cline{2-12} 
& \small{UTDC*} &4.30 & 5.93 & 7.41 & 7.85 & 4.62 & 10.16 & 10.76 & 4.55 & 9.54 & 7.24\\
& \small{Target-TS} &3.97 & 5.05 & 7.19 & 4.07 & 4.39 & 7.07 & 2.32 & 4.39 & 8.57 & 5.22\\
 \hline & \small{Uncalibrated} &19.90 & 39.19 & 26.75 & 24.47 & 26.33 & 33.53 & 20.25 & 40.06 & 39.25 & 29.97\\
& \small{Source-TS} &6.90 & 19.80 & 7.93 & 6.54 & 7.01 & 16.01 & 15.68 & 27.87 & 30.97 & 15.41\\
& \small{Source-VS} &10.15 & 25.83 & 15.31 & 12.13 & 10.70 & 17.90 & 14.69 & 32.40 & 31.64 & 18.97\\
& \small{Source-MS} &30.78 & 52.03 & 38.39 & 35.44 & 35.45 & 44.21 & 26.40 & 45.87 & 43.33 & 39.10\\
& \small{CPCS} &13.90 & 50.16 & 21.32 & {3.62} & 7.25 & 34.74 & 25.86 & 22.66 & 27.97 & 23.05\\
\small{DANN+E}& \small{TransCal} &7.21 & 27.42 & 12.36 & 17.81 & 15.43 & 29.93 & 24.64 & 46.61 & 45.83 & 25.25\\
  & \small{UTDC} &{4.14} & {5.86} & {5.47} & 10.28 & {3.89} & {6.67} & {15.33} & {5.70} & {12.65} & \textbf{ 7.78}\\
\cline{2-12}
& \small{UTDC*} &2.68 & 4.70 & 4.37 & 8.55 & 4.00 & 4.53 & 14.60 & 3.97 & 6.16 & 5.95\\
& \small{Target-TS} &2.68 & 2.76 & 3.67 & 2.24 & 3.16 & 2.99 & 1.15 & 1.62 & 4.55 & 2.76\\
 \hline & \small{Uncalibrated} &16.82 & 31.28 & 23.11 & 17.22 & 20.46 & 27.38 & 15.88 & 33.81 & 30.13 & 24.01\\
& \small{Source-TS} &6.33 & 16.41 & 13.22 & 2.83 & 5.00 & 15.82 & 10.91 & 29.09 & 23.61 & 13.69\\
& \small{Source-VS} &10.03 & 25.58 & 15.86 & 8.10 & 8.23 & 15.18 & 11.86 & 33.08 & 27.24 & 17.24\\
& \small{Source-MS} &31.61 & 50.68 & 41.31 & 34.23 & 36.48 & 44.23 & 25.49 & 44.75 & 40.17 & 38.77\\
& \small{CPCS} &8.89 & 33.56 & 19.99 & 25.29 & 9.62 & 12.82 & 16.87 & 27.49 & 45.93 & 22.27\\
\small{DANN}& \small{TransCal} &7.63 & 29.15 & 22.20 & 22.64 & 22.97 & 37.66 & 26.11 & 50.85 & 47.53 & 29.64\\
& \small{UTDC} & { 5.15} & { 4.87} & { 11.24} & { 8.63} & { 5.23} & { 15.08} & { 18.62} & { 12.62} & { 11.23} & \textbf{ 10.30}\\
\cline{2-12} 
& \small{UTDC*} &2.80 & 5.49 & 6.21 & 6.20 & 3.38 & 3.44 & 12.61 & 5.00 & 4.67 & 5.53\\
& \small{Target-TS} &2.45 & 2.38 & 4.65 & 2.08 & 1.73 & 2.16 & 1.22 & 2.35 & 2.92 & 2.44\\

            \bottomrule
        \end{tabular}%
            }
        \end{table*}

{\bf Datasets.} We report experiments on four standard real-world domain adaptation benchmarks, Office-home \cite{venkateswara2017deep}, Office-31 \cite{saenko2010adapting}, VisDa-2017 \cite{peng2017visda}, and DomainNet \cite{peng2019moment}. Office-home includes four domains - Art, Real-World, Clipart and Product, represented as A, R, C, and P in the experiments.  Office-31 contains three domains -  Amazon, Webcam and  DSLR, denoted  A, W, and D. VisDa-2017 is a simulation-to-real dataset for domain adaptation with over 280,000 images across 12 categories. DomainNet has six domains - Clipart, Infograph, Painting, Quickdraw, Real and Sketch, denoted  C, I, P, Q, R, and S.

\begin{table*}[h!]
        \caption{AdaECE results on Office-31  (with the lowest in bold) on various UDA  classification tasks and models with different calibration methods.}
        \label{table:utdc_ece_tab2}
        \centering
         \scalebox{0.8}{ 
               \begin{tabular}{ll|rrrrrr|r}
                        \toprule 
                        {\small UDA Method}&
               {\small Method}& 
               {\scriptsize $A \!\rightarrow\! W$} & 
               {\scriptsize $A \!\rightarrow\! D$} &
               {\scriptsize $W \!\rightarrow\!  A$} &
               {\scriptsize $W \!\rightarrow\!  D$ }&
               {\scriptsize $D \!\rightarrow\!  A$} &
               {\scriptsize $D \!\rightarrow\!  W$} &
               Avg \\
                        \midrule

& \small{Uncalibrated} &11.5 & 10.53 & 29.63 & 1.21 & 29.08 & {\bf 1.33} & 13.88\\
& \small{Source-TS} &6.03 & 7.43 & 33.21 & {\bf 0.86} & 27.25 & 2.12 & 12.82\\
& \small{Source-VS} &{\bf 3.74} & 7.10 & 33.75 & 1.52 & 32.98 & 1.42 & 13.42\\
& \small{Source-MS} &12.15 & 16.72 & 30.76 & { 1.02} & 29.99 & 1.38 & 15.34\\
& \small{CPCS} &9.67 & 12.66 & 33.47 & 1.11 & 28.16 & 2.18 & 14.54\\
\small{CDAN+E}& \small{TransCal} &3.78 & 9.45 & 34.43 & 1.27 & 33.68 & 1.56 & 14.03\\

& \small{UTDC} &4.19 & {\bf 5.18} & {\bf 5.15} & 1.20 & {\bf 5.14} & 2.18 & {\bf 3.84}\\
\cline{2-9} 
& \small{UTDC*} &3.82 & 5.18 & 5.09 & 1.13 & 5.36 & 2.18 & 7.13\\
& \small{Target-TS} &3.44 & 4.67 & 3.32 & 0.75 & 3.20 & 0.89 & 2.71\\
 \hline & \small{Uncalibrated} &13.05 & 13.55 & 28.29 & 0.87 & 27.15 & 1.68 & 14.10\\
& \small{Source-TS} &5.18 & 9.29 & 26.93 & 1.31 & 26.44 & 2.44 & 11.93\\
& \small{Source-VS} & {\bf 4.63} & 8.24 & 36.64 & {\bf 0.87} & 31.35 & 1.55 & 13.88\\
& \small{Source-MS} &18.01 & 14.02 & 31.10 & 1.09 & 28.51 & 1.51 & 15.71\\
& \small{CPCS} &15.58 & 6.81 & 33.97 & 1.99 & 32.69 & {\bf 1.14} & 15.36\\
\small{DANN+E}& \small{TransCal} &7.98 & 5.63 & 34.53 & 1.57 & 31.12 & 1.59 & 13.74\\
& \small{UTDC} &5.25 & {\bf 5.33} & {\bf 8.99} & 1.40 & {\bf 12.26} & 2.41 & {\bf 5.94}\\
\cline{2-9} 
& \small{UTDC*} &4.87 & 6.10 & 6.86 & 1.40 & 6.53 & 2.44 & 4.70\\
& \small{Target-TS} &3.98 & 4.77 & 2.87 & 0.85 & 2.80 & 0.82 & 2.68\\
 \hline & \small{Uncalibrated} &10.66 & 12.59 & 23.03 & 1.77 & 24.43 & 2.93 & 12.57\\
& \small{Source-TS} &3.89 & {\bf 7.17} & 29.58 & {\bf 0.98} & 30.71 & 4.43 & 12.79\\
& \small{Source-VS} &3.88 & 7.64 & 34.50 & 1.44 & 32.31 & 2.84 & 13.77\\
& \small{Source-MS} &21.06 & 24.70 & 28.81 & 1.35 & 28.45 & {\bf 1.30} & 17.61\\
& \small{CPCS} &16.96 & 10.10 & 33.69 & 2.61 & 35.39 & 4.80 & 17.26\\
\small{DANN}& \small{TransCal} &10.36 & 15.62 & 87.02 & 2.31 & 45.79 & 6.00 & 27.85\\
& \small{UTDC} &{\bf 3.71} & 8.70 & {\bf 5.14} & 2.61 & {\bf 9.26} & 5.23 & {\bf 5.78}\\
\cline{2-9} 
& \small{UTDC*} &5.04 & 7.52 & 5.54 & 2.61 & 12.25 & 6.54 & 6.58\\
& \small{Target-TS} &3.53 & 4.12 & 2.79 & 0.97 & 3.19 & 1.94 & 2.76\\

            \bottomrule
        \end{tabular}%
       }
\end{table*}

{\bf Implementation details.}
We followed the experiment setup described in  \cite{wang2020} and used their code to implement CPCS and TransCal baselines.
Following \cite{wang2020}, we implemented three  different UDA techniques; namely, DANN \cite{ganin2016domain}, DANN+E and CDAN+E \cite{long2018conditional}. 
The performance of more recent UDA models (e.g. \cite {liang2021domain, jin2020minimum, cui2020towards}) on the target domain of the evaluated datasets is slightly better  but  is still much worse than the performance on the source domain. 
In most experiments we used the Meta target domain accuracy estimation  \cite{deng2021labels} unless stated otherwise.
We provide a code implementation of our method for 
reproducibility\footnote{\href{https://github.com/cobypenso/unsupervised-target-domain-calibration}{https://github.com/cobypenso/unsupervised-target-domain-calibration}}.

\begin{table}[h!]
        \caption{adaECE results on VisDA Task $S \rightarrow R$, for various calibration methods.}
        \vspace{0.1cm}
	\label{table:utdc_adaece_visda}
   \centering
   \scalebox{0.83}{ 
               \setlength\tabcolsep{7pt}
                \begin{tabular}{l|rrr|r}
                        \toprule 
               {\small Method}& 
               {\footnotesize DANN} & 
               {\footnotesize DANN+E} &
               {\footnotesize CDAN+E} &
               {\footnotesize Avg} \\
                        \midrule
    \small{Uncalibrated} &33.23 & 31.79 & 29.88& 31.63\\		
    \small{Source-TS} &26.54 & 18.66  & 23.38& 34.29 \\		
    \small{Source-VS} &38.22 & 36.96 & 28.48& 34.55\\		
    \small{Source-MS} &41.19 & 38.17 & 30.87& 36.74\\		
    \small{CPCS} &31.86 & 11.08 & 26.88& 23.27\\		
    \small{TransCal} &43.52 & 35.93 & 36.71& 38.72\\		
    \small{UTDC} &\textbf{13.07} & \textbf{6.61}  & \textbf{3.85}& \textbf{7.84} \\	
    \cline{1-5}		
    \small{UTDC*} & 2.31 & 1.94 & 2.57& 2.27\\		
    \small{Target-TS} &2.02 & 1.84 & 2.21 & 2.02\\		
\bottomrule
    \end{tabular}%
 }
\end{table}

\begin{table}[h!]
        \caption{adaECE results on DomainNet for various UDA  classification tasks and models with different calibration methods.}
        \label{table:utdc_ece_tab4}
          \centering
         \scalebox{0.82}{ 
                      \begin{tabular}{ll|rrrrrr|r}
                        \toprule 
                        {\small UDA}&
               {\small Method}& 
               {\scriptsize $S \!\!\rightarrow\!\! R$} & 
               {\scriptsize $S \!\!\rightarrow\!\! P$} &
               {\scriptsize $P \!\!\rightarrow\!\!  R$} &
               {\scriptsize $P \!\!\rightarrow\!\!  S$ }&
               {\scriptsize $R \!\!\rightarrow\!\!  S$} &
               {\scriptsize $R \!\!\rightarrow\!\!  P$} &
               Avg \\
                        \midrule

& \small{Uncalibrated} &14.65 & 18.70 & 18.06 & 22.98 & 19.13 & 13.77 & 17.88\\
& \small{Source-TS} &12.68 & 14.48 & 11.51 & 12.76 & 13.56 & 9.60 & 12.39\\
& \small{Source-VS} &10.70 & 9.56 & 11.49 & 14.94 & 13.35 & 9.31 & 11.56\\
& \small{Source-MS} &22.24 & 25.28 & 23.43 & 30.93 & 22.55 & 18.07 & 23.75\\
\small{CDAN+E}& \small{CPCS} &9.41 & 11.20 & 13.26 & 17.06 & 17.16 & 11.86 & 13.32\\
& \small{TransCal} &12.50 & 20.82 & 16.41 & 28.85 & 36.70 & 28.23 & 23.92\\
& \small{UTDC} & {\bf 6.06} & {\bf 5.17} & {\bf 6.48} & {\bf 4.75} & {\bf 8.85} & {\bf 8.32} & \textbf{6.61}\\
\cline{2-9} 
& \small{UTDC*} &5.07 & 6.78 & 4.86 & 3.56 & 5.19 & 6.86 & 5.38\\
& \small{Target-TS} &1.31 & 1.35 & 2.18 & 1.39 & 1.25 & 1.07 & 1.42\\
 \hline & \small{Uncalibrated} &15.03 & 17.77 & 17.57 & 24.54 & 21.08 & 16.63 & 18.77\\
& \small{Source-TS} &10.12 & 12.20 & 10.31 & 11.75 & 11.76 & 10.69 & 11.14\\
& \small{Source-VS} &9.71 & 14.25 & 11.85 & 19.42 & 16.88 & 12.15 & 14.04\\
& \small{Source-MS} &23.68 & 28.77 & 24.18 & 35.03 & 24.94 & 20.91 & 26.25\\
\small{DANN+E}& \small{CPCS} &13.20 & 6.41 & 12.51 & 12.81 & {\bf  7.73} & {\bf 10.95} & 10.60\\
& \small{TransCal} &14.56 & 19.85 & 16.14 & 29.19 & 34.98 & 28.96 & 23.95\\
& \small{UTDC} & {\bf 6.39} & {\bf 6.07}  & {\bf 6.54} & {\bf 6.84} & { 11.24} & { 11.94} & \textbf{8.17}\\
\cline{2-9} 
& \small{UTDC*} &3.97 & 5.72 & 5.23 & 6.64 & 6.73 & 8.32 & 6.10\\
& \small{Target-TS} &1.24 & 1.19 & 1.60 & 1.03 & 1.10 & 0.84 & 1.17\\
 \hline & \small{Uncalibrated} &10.98 & 13.52 & 12.65 & 18.04 & 15.42 & 10.96 & 13.59\\
& \small{Source-TS} &7.33 & 8.63 & 9.50 & 10.11 & 10.99 & 9.15 & 9.29\\
& \small{Source-VS} &8.92 & 14.43 & 11.21 & 16.90 & 15.86 & 10.86 & 13.03\\
& \small{Source-MS} &22.51 & 27.48 & 21.97 & 31.46 & 24.53 & 19.72 & 24.61\\
\small{DANN}& \small{CPCS} &7.02 & 7.37 & 14.60 & 15.83 & 15.42 & 8.88 & 11.52\\
& \small{TransCal} &14.83 & 22.09 & 16.38 & 30.37 & 37.84 & 29.92 & 25.24\\
& \small{UTDC} & {\bf 5.82}  & {\bf 5.84} & {\bf 6.30} & {\bf 9.24} & {\bf 5.80} & {\bf 7.53} & \textbf{6.76}\\
\cline{2-9} 
& \small{UTDC*} &4.34 & 5.46 & 4.71 & 7.34 & 6.53 & 6.81 & 5.87\\
& \small{Target-TS} &1.07 & 1.25 & 1.06 & 0.90 & 1.53 & 1.61 & 1.24\\

            \bottomrule
        \end{tabular}%
            }
\end{table}
        
\textbf{Calibration results.} Tables \ref{table:utdc_uda_adaece_tab1}, \ref{table:utdc_ece_tab2}, \ref{table:utdc_adaece_visda} and \ref{table:utdc_ece_tab4} report the calibration results (computed by adaECE with 15 bins) on  Office-home, Office-31, VisDA, and DomainNet respectively.
 The results show that UTDC achieved significantly better results than the baseline methods on all tasks.  The calibration obtained by previous  IW-based methods was slightly better (but in some cases even worse) than a network with no calibration or a network that was calibrated on the source domain. 
 In contrast, the adaECE score obtained by UTDC was almost as good as the adaECE obtained by an oracle that had access to the labels of the domain samples. 
In addition to the adaECE evaluation measure, Table  \ref{table:bs_log_tab1} reports the average calibration results over all Office-home tasks, using three other calibration metrics:   ECE, Negative Log-Likelihood (NLL) and Brier Score (BS) \cite{brier1950verification}. The same trends as above were observed.

 \begin{table}[h!]
        \caption{Calibration metrics results of various UDA calibration methods on the Office-home tasks.}
        \vspace{0.1cm}
	\label{table:bs_log_tab1}
                     \centering
         \scalebox{0.82}{ 
\begin{tabular}{lrrrrrrrrr}
\toprule
   & \multicolumn{3}{c}{CDAN+E}  & \multicolumn{3}{c}{DANN+E}  & \multicolumn{3}{c}{DANN}\\ 
       method &  BS &   NLL &    ECE &  BS &   NLL &    ECE &  BS &   NLL &    ECE\\
\midrule
  Uncalibrated &   0.74 &  3.40 &  31.32  &   0.76 &  3.07 &  29.92 &   0.75 &  2.75 &  24.08\\
        Source-TS &   0.65 &  2.18 &  16.79 &   0.67 &  2.21 &  15.40 &   0.71 &  2.37 &  13.71 \\
                 CPCS &   0.71 &  3.48 &  24.46 &   0.72 &  3.08 &  23.12 &   0.76 &  2.87 &  22.37 \\
          TransCal &   0.69 &  2.70 &  22.12 &   0.73 &  3.08 &  25.22 &   0.81 &  3.72 &  29.71\\
              UTDC &  {\bf 0.62} & {\bf 1.95} & {\bf  8.01} & {\bf 0.64} &  {\bf2.01} &  {\bf 7.81}
           &   {\bf 0.69} & {\bf 2.26} & {\bf 10.35}            \\
      \midrule
                  UTDC* &   0.62 &  1.95 &   7.21 &   0.63 &  1.99 &   5.94 &   0.68 &  2.18 &   5.53\\
                   Target-TS &   0.61 &  1.92 &   5.41 &   0.63 &  1.96 &   2.72 &   0.68 &  2.14 &   2.78\\
\bottomrule
\end{tabular}
    }
\end{table}

\begin{table}[h]
        \caption{Computed temperature on various UDA Office-home tasks, and calibration methods using CDAN+E.}
	\label{table:utdc_t_tab1}
        \centering
                     \scalebox{0.82}{ 
                \begin{tabular}{ll|ccccccccc|r}
                        \toprule 
                        {\small UDA}&
               {\small Method}& 
               {\scriptsize $A \!\!\rightarrow\!\! R$} & 
               {\scriptsize $A \!\!\rightarrow\!\! C$} &
               {\scriptsize $A \!\!\rightarrow\!\!  P$} &
               {\scriptsize $C \!\!\rightarrow\!\!  R$ }&
               {\scriptsize $C \!\!\rightarrow\!\!  P$} &
               {\scriptsize $C \!\!\rightarrow\!\!  A$} &
               {\scriptsize $P \!\!\rightarrow\!\!  R$ }&
               {\scriptsize $P \!\!\rightarrow\!\!  C$} &
               {\scriptsize $P \!\!\rightarrow\!\!  A$} &
               Avg \\
                        \midrule

& \small{Source-TS} &1.96 & 2.02 & 2.02 & 1.87 & 1.90 & 2.06 & 1.63 & 1.72 & 1.68 & 1.87\\
& \small{CPCS} &1.46 & 0.57 & 1.49 & 1.68 & 1.75 & 2.05 & 1.93 & 0.50 & 1.73 & 1.46\\
& \small{TransCal} &2.12 & 1.86 & 2.39 & 1.50 & 1.74 & 1.62 & 1.03 & 0.96 & 0.95 & 1.57\\
   \small{CDAN+E}& \small{UTDC} &2.27 & 2.90 & 2.91 & 1.97 & 2.44 & 2.54 & 1.67 & 2.93 & 2.89 & 2.50\\
\cline{2-12}
& \small{UTDC*}  &2.29 & 3.21 & 2.68 & 2.00 & 2.62 & 2.30 & 1.65 & 3.41 & 2.90 & 2.56\\
& \small{Target-TS} &2.36 & 3.61 & 2.73 & 2.42 & 2.73 & 2.81 & 2.24 & 3.49 & 3.37 & 2.86\\

            \bottomrule
                \end{tabular}%
           }
        \end{table}

\begin{figure}[ht!]
         \resizebox{\textwidth}{!}{
    \includegraphics[scale=0.34]{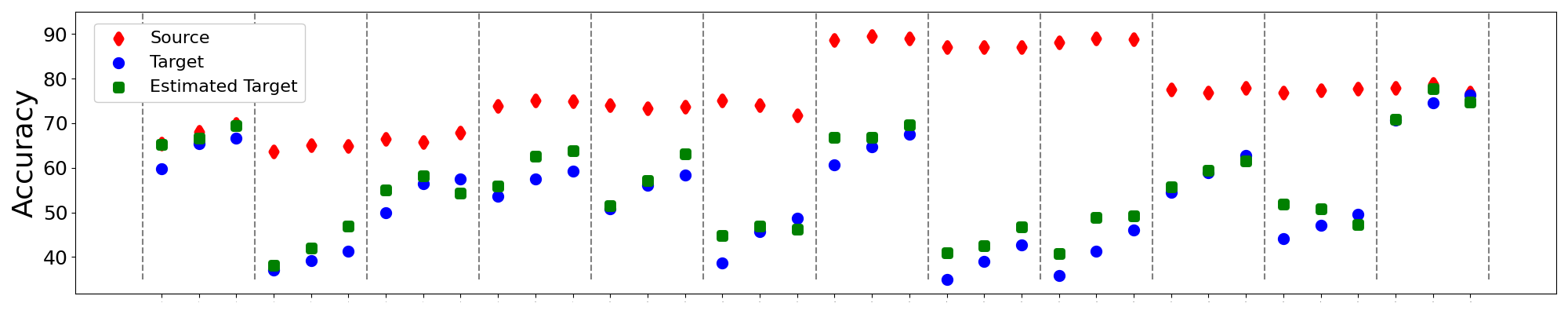}
     }
        { \noindent \hspace{1.6cm}  
         \resizebox{\textwidth}{!}
        { \hspace{2cm}
        $A \rightarrow R$ \hspace{0.29cm}
    { $A \rightarrow C$\hspace{0.29cm}
    $A \rightarrow P$\hspace{0.29cm}
    $C \rightarrow R$\hspace{0.29cm}
    $ C \rightarrow P$\hspace{0.29cm}
    $ C \rightarrow A$\hspace{0.29cm}
    $ P \rightarrow R$ \hspace{0.29cm}
    $ P \rightarrow C$\hspace{0.29cm}
    $ P \rightarrow A$\hspace{0.29cm}
    $ R \rightarrow A$\hspace{0.29cm}
    $ R \rightarrow C$ \hspace{0.29cm}
    $ R \rightarrow P$ \hspace{1.29cm} 
    }}}
    \caption{Average accuracy on Office-home tasks for 
    the three UDA techniques (DANN, DANN+E, CDAN+E).}
    \label{fig:utdc_avg_acc}
\end{figure}

 \section{Analysis}
  We next illustrate and analyze several key features of the proposed method. 
  
{\bf Accuracy gap between source and target.}
\label{subsec:understand_gap}
To gain a better understanding of the reasons 
why our method performs better than IW based methods, we first discuss the accuracy of the adapted models on the source and target domains. Figure \ref{fig:utdc_avg_acc} presents the accuracy on the source and target domains for three UDA techniques. It shows that even after adaptation to the target,  the model's performance on the source samples is consistently better than  its performance on the target samples, especially in cases of large  domain gaps.

Hence, using the network accuracy on the source to  estimate  the network's accuracy on the target while minimizing the ECE measure is misleading  
because the over-optimistic accuracy estimation leads to 
 a scaling  temperature that is too small.
Table \ref{table:utdc_t_tab1} compares the optimal temperatures computed by the  calibration methods. In all the baseline methods the computed calibration temperature was lower than the optimal value. This results in  poorer calibration performance, as seen in Tables \ref{table:utdc_uda_adaece_tab1}, \ref{table:utdc_ece_tab2}, \ref{table:utdc_adaece_visda}, and \ref{table:utdc_ece_tab4}. By contrast,  the temperature computed by all the  UTDC variants was much closer to the  optimal temperature computed by the Oracle method that had access to the target labels. 
 Figure \ref{fig:utdc_avg_acc} also presents the estimated accuracy of the adapted model on the target domain. This estimation is close to the true accuracy. Thus, when it is combined with the confidence computed on the target domain, we  obtain a calibrated mode.

\begin{figure*}[t!]
 \resizebox{\textwidth}{!}{
    \centering
    \includegraphics[trim= 180 00 180 0, clip, scale=0.30]{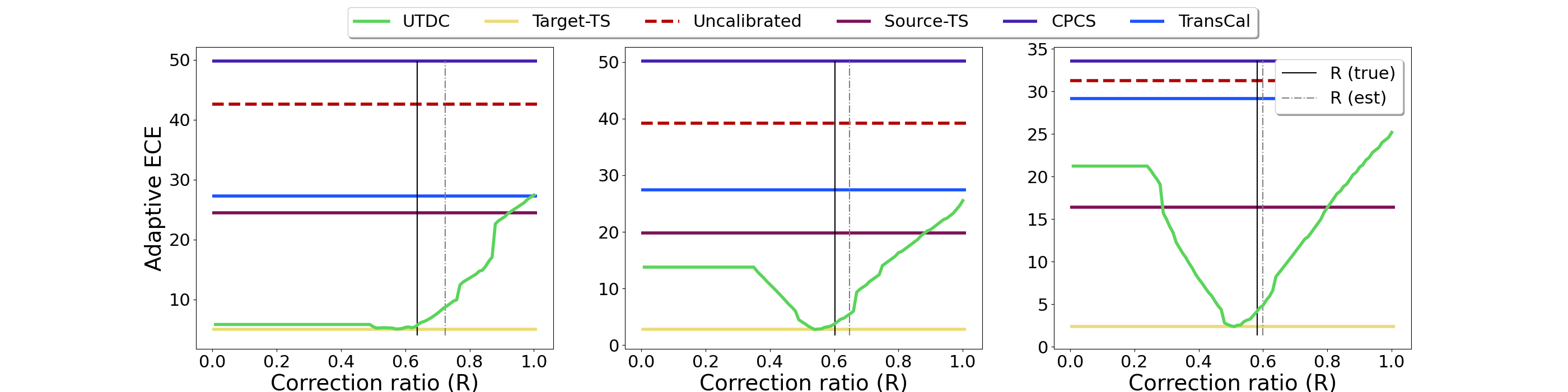} }
          \vspace{0.2cm}
    \hspace{1.2 cm} 
    (a) CDAN+E \hspace{2.0cm} 
    (b) DANN+E \hspace{2.2cm} 
    (c) DANN
      \caption{adaECE results as a function of the correction ratio $R$ on Office-Home, $A \rightarrow C$ task. }
    \label{fig:correction_ratio}
         \end{figure*}

     \begin{figure}[h!]
    \centering
    \includegraphics[scale=0.14]{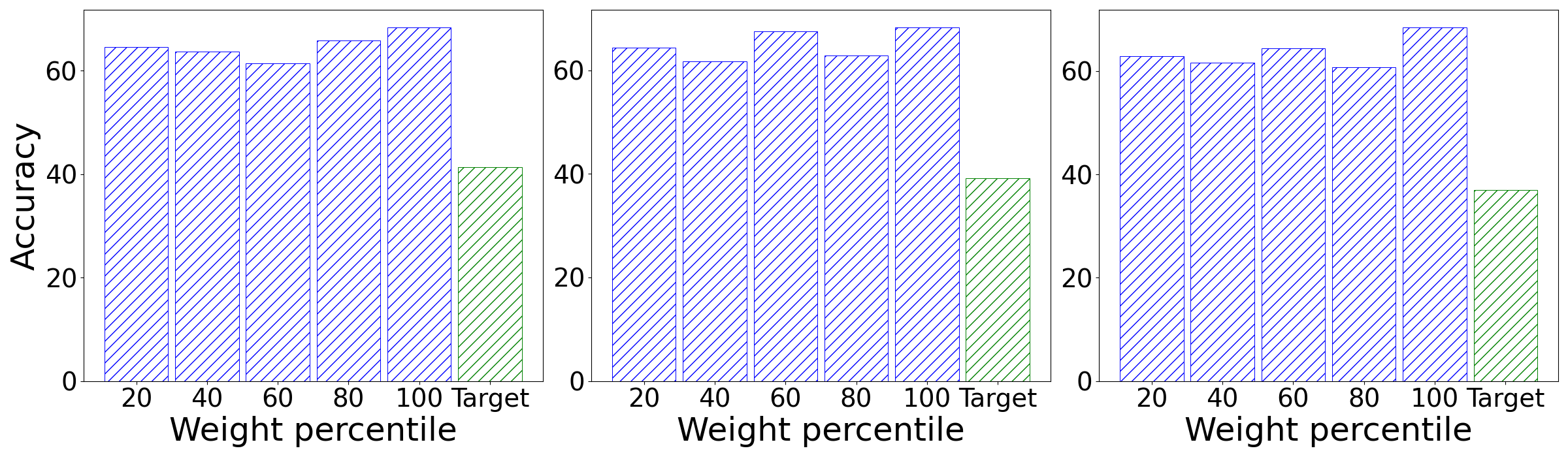} \\
       \hspace{0.65cm}
    (a) CDAN+E \hspace{.6cm}  (b) DANN+E \hspace{.6cm} (c) DANN \hspace{1.4cm} 
    \caption{Accuracy of $k$-th percentile source images based on their probability of being classified as target \cite{wang2020}, compared to target accuracy (Office-home, $A \rightarrow C$). }
    \label{fig:source_acc_gap}
\end{figure}
\begin{figure}[h!!]
    \centering
    \includegraphics[scale=0.15]{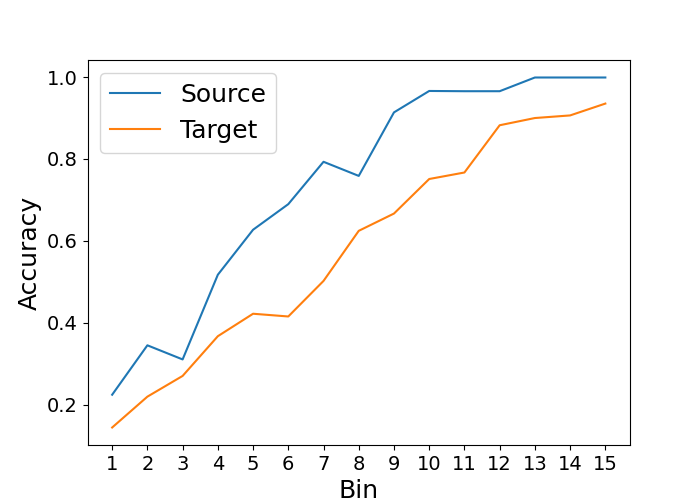}
    \includegraphics[scale=0.15]{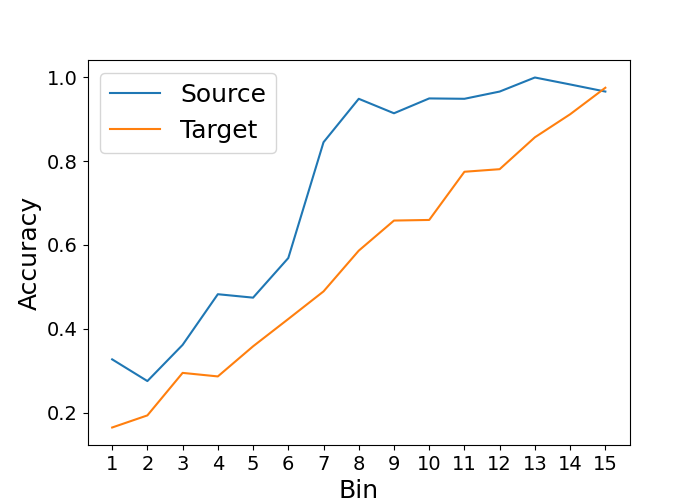}
    \includegraphics[scale=0.15]{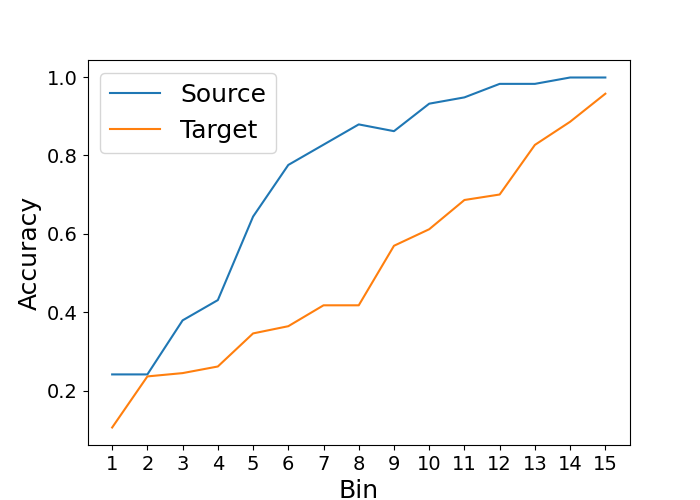}  \\
     (a) CDAN+E \hspace{0.6cm}  (b) DANN+E \hspace{0.6cm} (c) DANN 
    \caption{Accuracy per bin for source and target images. The results are shown on the Office-home $C \rightarrow P$ task. }
    \label{fig:acc_ratio_bins}
\end{figure}

{\bf Sensitivity of UTDC to the target accuracy prediction.} 
\label{sec:sweep}
UTDC is based on estimating the binwise average network accuracy on the target domain data from the labeled source domain data. This estimation is done by computing the ratio ${\tilde{A}_{\textrm{target}}}/{A_{\textrm{source}}}$ between the estimated target accuracy  and the source accuracy. We next analyze the sensitivity of our calibration method to errors in estimating $A_{\textrm{target}}$.
Let  $ R(\textrm{true}) = {A_{\textrm{target}}}/{A_{\textrm{source}}}$ and $ R(\textrm{estimated}) = {\tilde{A}_{\textrm{target}}}/{A_{\textrm{source}}}$
be the true and estimated ratio used by UTDC* and UTDC respectively.
In principle, any number  $0\!<\!R$ can be used to obtain an  estimation of the binwise target accuracy:  $\tilde{A}_{\textrm{target},m} = {A}_{\textrm{source},m} \cdot R$. We can thus find the temperature that minimizes the
adaECE function on the target data as a function of $R$: 
$\hat{T}(R) = \arg \min_T \mathrm{adaECE}_R(T)$
where  
$$ \mathrm{adaECE}_R(T) =
 \frac{1}{M} \sum_{m=1}^{M}\left| 
{A}_{\textrm{source},m} \cdot R - C_{\textrm{target},m}(T) \right|.
$$
Figure \ref{fig:correction_ratio} shows the adaECE measure on the target data after temperature scaling by $\hat{T}(R)$  as a function of the ratio $R$ for the task Office-home $A \rightarrow C$.  It shows  that with the  appropriate choice of $R$  we can achieve the calibration level of the Oracle TS-target algorithm (the case where target labels are known). This means that the difference in accuracy is indeed the main reason for the calibration degradation caused by methods that try to calibrate the target domain using the source data.    
Specifically, as the ratio $R$  drops towards $R$(true), the adaECE improves and approaches the  Oracle TS-target calibration. In addition, the adaECE reaches a minimum near $R$(true) and $R$(estimated).  Finally, there is a range of correction ratios where UTDC is better by a large margin than other baselines, thus providing a tolerance for error and resilience in estimating $\tilde{A}_{\textrm{target}}$.

{\bf The problem with the IW assumption.}
We showed that our method achieves better results by explicitly addressing the accuracy gap between the source and  target domains caused by the domain shift.
Previous methods based on importance weights 
 \cite{park2020,wang2020} rely on re-weighting the source data based on their proximity to the target data, i.e., concentrating on source samples that resemble the target and attributing less weight to others. We computed the target similarity weights associated with each sample in the source validation set and divided them into $20\%$ percentile subsets. Figure \ref{fig:source_acc_gap}  shows the average accuracy of each group and the average target accuracy.  
It shows that the source accuracy is similar in all bins regardless of the similarity to the target. Thus the IW assumption
that source samples that are classified as targets are more relevant for calibrating the target prediction is wrong.

\begin{table}[t]
\centering
\caption{AdaECE results for variations of UTDC based on different methods of domain accuracy estimation.}
\label{table:diff_est_methods__ece}
      \centering
         \scalebox{0.83}{ 
    \begin{tabular}{l|rrrr}
        \toprule 
        {\small Method}&
           {\small Office-home}& 
           {\small Office-31} & 
           {\small VisDA} &
           {\small DomainNet} \\
        \midrule
            \small{Uncalibrated} & 28.44 & 13.51 &  31.63 & 16.74\\
            \small{UTDC-Meta}\cite{deng2021labels} & 8.67 & 6.96 &  7.84 & 7.18\\
            \small{UTDC-ATC}\cite{garg2022leveraging} & 10.12  & 7.47 &  5.68 & 8.01\\
            \small{UTDC-PN}  \cite{yu2022predicting} &  11.55 &  7.83 &  10.20 & 8.63 \\
            \small{UTDC*} & 6.24  & 6.13  & 2.27 & 5.78\\
        \bottomrule
    \end{tabular}%
    }
\end{table}
\begin{table}[t]
\centering
\caption{Comparison of several target domain accuracy estimation methods measured by   $|ACC(True) - ACC(Est)|$.}
\label{table:diff_est_methods__acc}
      \centering
         \scalebox{0.83}{ 
    \begin{tabular}{l|cccc}
        \toprule 
        {\small Method}&
           {\small Office-home}& 
           {\small Office-31} & 
           {\small VisDA} &
           {\small DomainNet}\\
        \midrule 
            \small{Meta}\cite{garg2022leveraging} & 3.31 & 2.81 &  4.96 & 3.10\\
            \small{ATC}\cite{garg2022leveraging} & 5.05  & 3.37 &  3.48 & 4.25\\
            \small{PN} \cite{yu2022predicting} &  6.26 & 4.85 & 6.30 & 5.91 \\
        \bottomrule
    \end{tabular}%
    }
\end{table}

{\bf Accuracy ratio across bins.}
Our method computes $\tilde{A}_{\textrm{target},m}$ by re-scaling $A_{\textrm{source},m}$ with the same ratio for all bins, as defined in  \ref{targetaccuracy}. This estimation is based on the assumption that the accuracy ratio between the source and the target is similar across the bins. To illustrate the validity of this assumption, Figure \ref{fig:acc_ratio_bins} shows the accuracy of the adapted network  at  each bin, for the source and target data.

{\bf Different target accuracy estimation methods.} 
\label{sec:other_est}
Our UTDC method requires an estimation step of the target domain accuracy without labels. In all the experiments  reported above we used the Meta method \cite{deng2021labels}. 
We next examine combining UTDC with two other methods for target domain accuracy estimation:   ATC \cite{garg2022leveraging} and PN \cite{yu2022predicting}. 
  We implemented 3 variations of UTDC, dubbed  UTDC-Meta, UTDC-ATC, and UDTC-PN based on the estimated target accuracy that was used.  We also report results for UTDC*  based on the true target accuracy. 
  Tables \ref{table:diff_est_methods__ece} and \ref{table:diff_est_methods__acc} present the average calibration results and the discrepancy between the estimated and actual accuracy, respectively.
  The results indicate that UTDC achieved the best calibration performance out of all the three target accuracy estimation methods examined, thus reinforcing the observed low sensitivity of UTDC to the precision of target accuracy predictions. This underscores the compatibility of UTDC with existing methods for network calibration under unsupervised domain shift.
We also found that using UTDC-Meta yields better results, while UTDC-ATC exhibits improved performance and ease of implementation, since the ATC method is much simpler to implement and requires a small computational effort.

\newpage
\chapter{Discussion}
\label{ch:discussion}

This chapter brings together the key findings and contributions of this dissertation, highlighting their implications for the broader field of machine learning. Throughout our research, we addressed critical challenges related to confidence calibration and uncertainty quantification, focusing on complex scenarios involving label noise, privacy constraints, and domain shifts. By developing novel methodologies and demonstrating their effectiveness across diverse settings, our work advances the reliability and interpretability of machine learning models in real-world applications. Here, we summarize our contributions, reflect on key insights, and outline promising directions for future research.

\section{Summary of Contributions}

In this dissertation, we explored multiple facets of confidence calibration and uncertainty quantification in machine learning, particularly under challenging real-world conditions such as label noise, privacy constraints, and domain shift. Our work contributes novel methodologies that enhance the robustness of existing calibration and conformal prediction (CP) frameworks, thereby addressing key limitations in the current literature.

First, we investigated confidence calibration in classification models trained with noisy labels, a critical issue in domains like medical imaging where obtaining accurate labels is often impractical. We demonstrated that traditional calibration methods are highly sensitive to label noise, even when network training itself remains relatively robust. To address this, we proposed a noise-aware calibration framework that effectively models label corruption using a noise transition matrix. Our results show that this approach achieves calibration performance comparable to that of clean labels, provided the noise model is well estimated.

Second, we extended the conformal prediction framework to handle label noise. We introduced a procedure that adjusts the calibration threshold based on a given noise model, allowing CP to maintain valid coverage while minimizing the size of prediction sets. We derived finite-sample coverage guarantees for the uniform noise case and showed that our method significantly improves over existing noisy CP approaches in terms of efficiency and prediction set size.

Third, we addressed confidence calibration under privacy constraints by developing two complementary CP approaches for locally differentially private (LDP) settings. These methods, LDP-CP-L (label perturbation) and LDP-CP-S (score perturbation), provide valid uncertainty quantification while ensuring strong privacy guarantees for individual user data. Our results highlight the trade-offs between privacy, computational feasibility, and model performance, offering guidance for selecting the appropriate method in real-world applications.

Lastly, we studied network calibration in the context of unsupervised domain adaptation, where a model trained on a labeled source domain is deployed in a target domain with a different distribution. We showed that existing importance-weighting approaches fail to correct for domain shift in calibration. Instead, we proposed a method that directly calibrates using target-domain examples, leading to substantial improvements in calibration performance.

\section{Key Insights and Implications}

\textbf{Sensitivity of Calibration to Label Noise.} Our findings reveal that calibration procedures are far more sensitive to label noise than the training process itself. This observation underscores the need for noise-resilient calibration methods, as even a small fraction of incorrect labels in the validation set can severely degrade calibration quality. Our proposed approach, which explicitly accounts for label noise via a noise transition matrix, mitigates these effects and provides reliable confidence estimates despite label corruption.

\textbf{Conformal Prediction Under Noisy Labels.} We demonstrated that CP can be adapted to handle noisy labels by estimating a noise-free calibration threshold. Our method preserves valid coverage and improves efficiency compared to existing approaches. However, our analysis suggests that current coverage guarantees may be overly conservative, indicating room for further theoretical refinement. Additionally, we focused on noise models where label corruption is independent of input features; extending our method to more complex, feature-dependent noise processes remains an open challenge.

\textbf{Privacy-Preserving Uncertainty Quantification.} Our work on LDP-conformal prediction bridges the gap between uncertainty quantification and privacy protection. By leveraging randomized response mechanisms, we ensured valid CP coverage while preserving user privacy. A notable challenge for future work is integrating more sophisticated LDP techniques, such as RAPPOR, into our framework to enhance robustness and scalability. Furthermore, developing a method that preserves privacy without requiring local score computation remains an important open problem.

\textbf{Confidence Calibration in Domain Adaptation.} We showed that calibration methods relying on source-domain accuracy fail under domain shift due to differences in true label distributions. Our approach, which calibrates directly on target-domain examples, outperforms existing methods. Extending this idea to non-parametric calibration techniques such as CP, as well as to regression and segmentation tasks, is a promising avenue for future research. Additionally, addressing confidence calibration in source-free adaptation—where access to source data is not available—remains an open challenge.

\section{Future Research Directions}

While our research has addressed several key challenges, many promising directions remain for future exploration:

\textbf{Refining Theoretical Guarantees for Noisy Conformal Prediction.} Our results suggest that existing theoretical bounds for CP under noisy labels may be overly conservative. Further research into tighter finite-sample guarantees could improve the efficiency of CP methods in noisy settings.

\textbf{Advanced Noise Models in Calibration.} Our methods assume that the noise transition matrix can be estimated with reasonable accuracy. However, in cases of highly unbalanced class distributions or feature-dependent noise, estimating this matrix remains challenging. Developing robust estimation techniques for such cases is an important open problem.

\textbf{Extending Privacy-Preserving CP Methods.} Our current LDP-CP framework is limited by the assumption that users can either perturb labels or scores locally. A more stringent privacy setting, where both the input features and labels must remain private, presents a fundamental challenge. Designing methods that operate under such conditions while maintaining valid coverage is a crucial area for future work.

\textbf{Calibration in Source-Free Adaptation.} In domain adaptation, we demonstrated that calibration benefits from access to target-domain examples. However, a more challenging setting is source-free adaptation, where only unlabeled target data is available. Developing effective calibration methods under such constraints is a key research direction.

\textbf{Applying Calibration Strategies to Other Tasks.} Our work focused on classification tasks, but the principles developed here could extend to regression and segmentation problems. Exploring calibration techniques for these settings, particularly under domain shift and noisy labels, could have significant practical impact.

\section{Conclusion}

This dissertation advances the understanding and application of confidence calibration and uncertainty quantification in machine learning. By addressing challenges in label noise, privacy, and domain shift, we developed robust methodologies that improve calibration performance across diverse settings. Our work not only enhances the reliability of machine learning models but also opens new avenues for research in uncertainty estimation under real-world constraints. We hope that our contributions will inspire further studies in these areas, ultimately leading to more trustworthy and interpretable AI systems.

\addcontentsline{toc}{chapter}{Bibliography}
\bibliography{src/refs}
\bibliographystyle{plain}



\includepdf[pages=1]{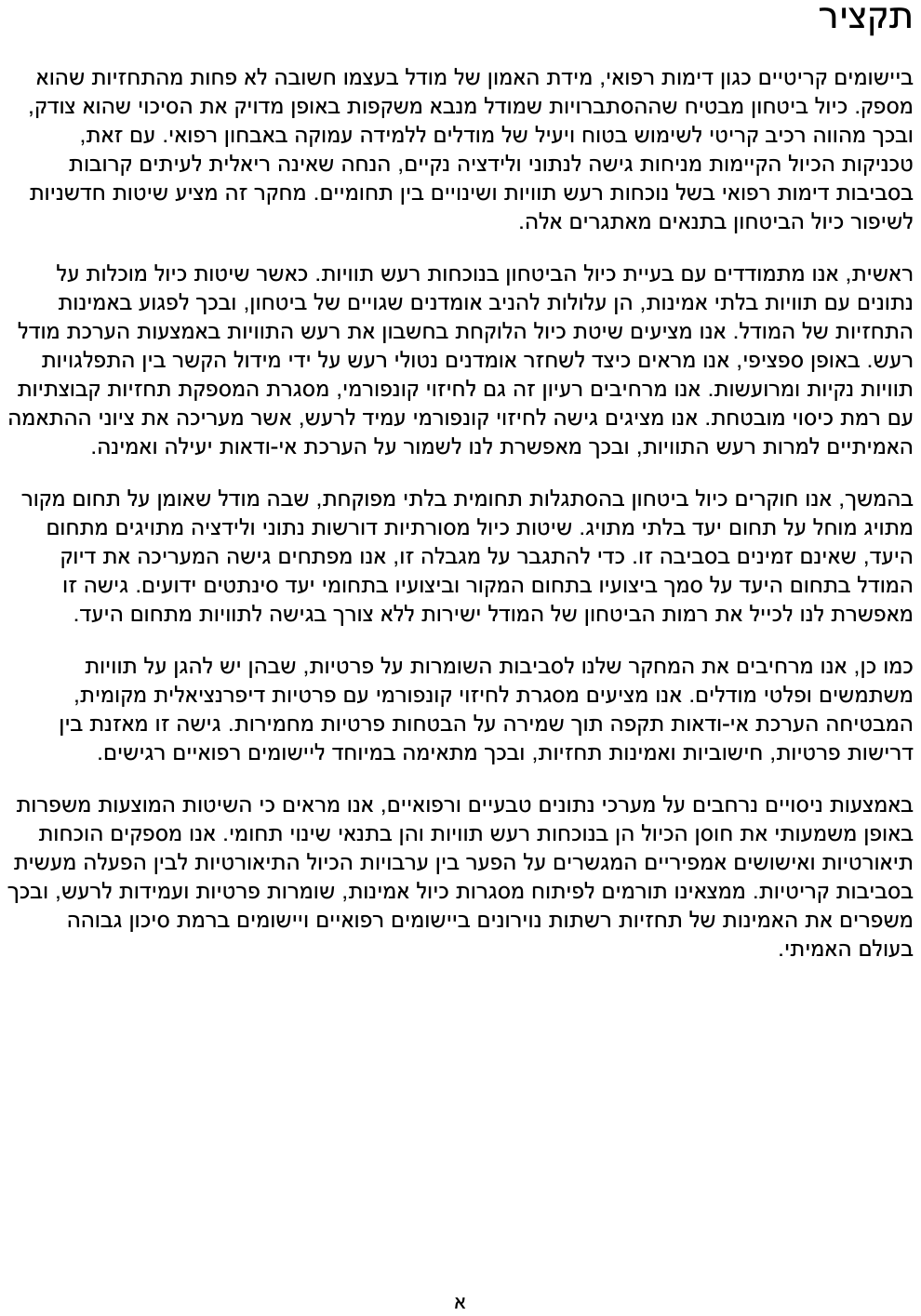}
\includepdf[pages=5]{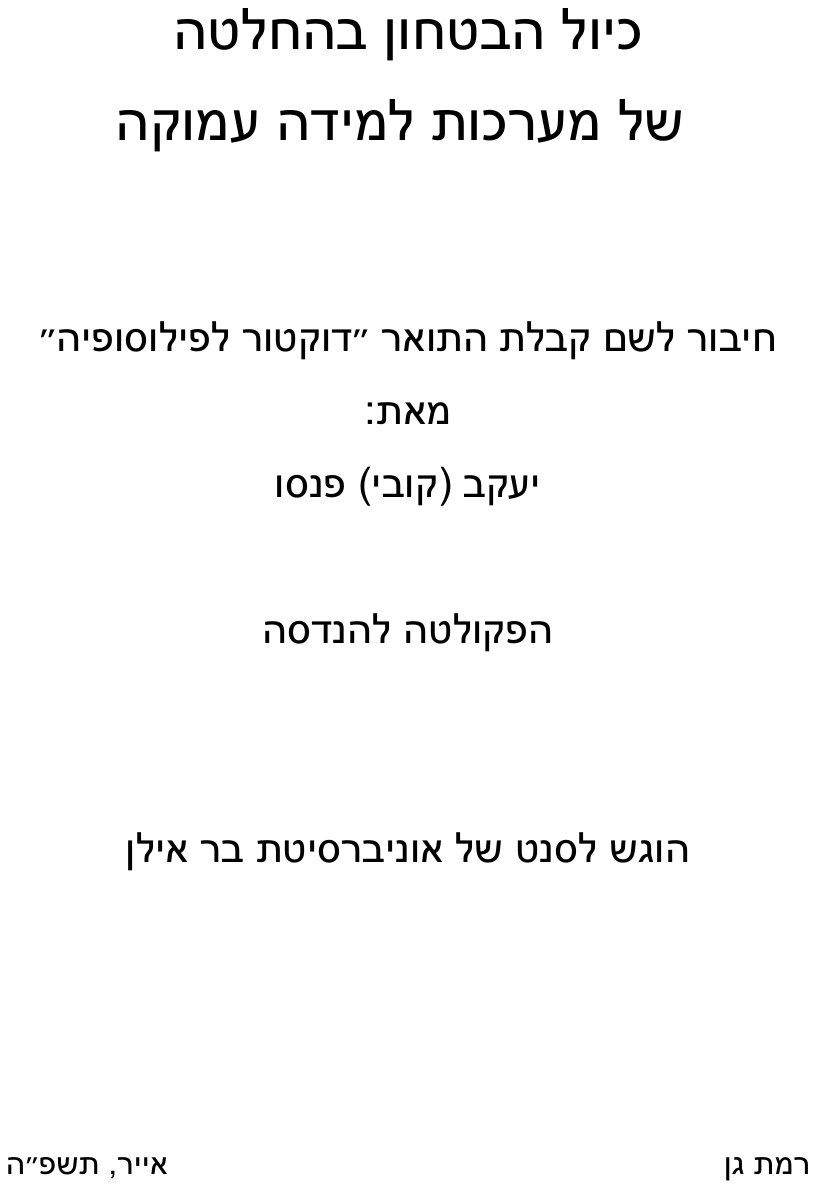}
\includepdf[pages=6]{src/pdf/thesis_hebrew_15_10_25}
\includepdf[pages=3]{src/pdf/thesis_hebrew_15_10_25}
\includepdf[pages=2]{src/pdf/thesis_hebrew_15_10_25}
\includepdf[pages=1]{src/pdf/thesis_hebrew_15_10_25}
	
\end{document}